\newlength{\algofontsize}
\title{Optimal Transport with Tempered Exponential Measures}
\author {
    Ehsan Amid\textsuperscript{\rm 1}\thanks{Authors listed alphabetically.},
    Frank Nielsen\textsuperscript{\rm 2},
    Richard Nock\textsuperscript{\rm 3},
    Manfred K. Warmuth\textsuperscript{\rm 3}
}
\newcolumntype{?}{!{\vrule width 2pt}}
\newcolumntype{C}[1]{>{\centering\let\newline\\\arraybackslash\hspace{-5pt}}m{#1}}
\newcommand\reallywidecheck[1]{%
\savestack{\tmpbox}{\stretchto{%
  \scaleto{%
    \scalerel*[\widthof{\ensuremath{#1}}]{\kern-.6pt\bigwedge\kern-.6pt}%
    {\rule[-\textheight/2]{1ex}{\textheight}}
  }{\textheight}%
}{0.5ex}}%
\stackon[1pt]{#1}{\scalebox{-1}{\tmpbox}}%
}
\definecolor{ao}{rgb}{0.0, 0.5, 0.0}
\newcommand*\bigcdot{\mathpalette\bigcdot@{.5}}
\newcommand*\bigcdot@[2]{\mathbin{\vcenter{\hbox{\scalebox{#2}{$\m@th#1\bullet$}}}}}
\definecolor{darkgreen}{RGB}{0,205,0}
\definecolor{Gray}{gray}{0.85}
\definecolor{LightCyan}{rgb}{0.88,1,1}
\definecolor{Gray}{gray}{0.85}
\definecolor{LightCyan}{rgb}{0.88,1,1}
\newcolumntype{a}{>{\columncolor{Gray}}c}
\newcolumntype{b}{>{\columncolor{white}}c}
\newcolumntype{d}{>{\columncolor{Gray}}r}
\renewcommand{\epsilon}{\varepsilon}
\renewcommand{\phi}{\varphi}
\newcommand{\Mat}[1]{\mathbf{#1}}
\newcommand{\Prj}[2]{{
  \left.\kern-\nulldelimiterspace 
  #1 
  \vphantom{\big|} 
  \right|_{#2} 
}}
\newcommand{\defeq}{\stackrel{\mathrm{.}}{=}}
\newcommand{\ve}[1]{\bm{#1}}
\newtheorem{definition}{Definition}
\def\cqfd{\hfill\hbox{$\hbox{\vrule width 0.8pt
\vbox to6pt{\hrule depth 0.8pt width 5.2pt
\vfill\hrule depth 0.8pt}\vrule width 0.8pt}$}} 
\newtheorem{theorem}{Theorem}
\newtheorem{lemma}{Lemma}
\newtheorem{remark}{Remark}
\newcommand{\intset}[1]{\cbr{1..n}}
\newcommand{\acrotem}{TEM}
\newcommand{\innerproduct}[2]{\langle #1, #2 \rangle}
\DeclareMathOperator{\diag}{diag}
\begin{document}

\maketitle
\begin{abstract}
  In the field of optimal transport, two prominent subfields face each other: (i) unregularized optimal transport, ``\`a-la-Kantorovich'', which leads to extremely sparse plans but with algorithms that scale poorly, and (ii) entropic-regularized optimal transport, ``\`a-la-Sinkhorn-Cuturi'', which gets near-linear approximation algorithms but leads to maximally un-sparse plans. In this paper, we show that an extension of the latter to tempered exponential measures, a generalization of exponential families with indirect measure normalization, gets to a very convenient middle ground, with both very fast approximation algorithms and sparsity, which is under control up to sparsity patterns. In addition, our formulation fits naturally in the unbalanced optimal transport problem setting.
\end{abstract}

\section{Introduction}

Most loss functions used in machine learning (ML) can be related, directly or indirectly, to a comparison of positive measures (in general, probability distributions). Historically, two broad families of distortions were mainly used: $f$-divergences \cite{asAG,cEI} and Bregman divergences \cite{bTR}. Among other properties, the former are appealing because they encapsulate the notion of monotonicity of information \cite{aIG}, while the latter are convenient because they axiomatize the expectation as a maximum likelihood estimator \cite{bmdgCW}. Those properties, however, put constraints on the distributions, either on their support for the former or their analytical form for the latter.

A third class of distortion measures has progressively emerged later on, alleviating those constraints and with the appealing property to meet distance axioms: Optimal Transport distances \cite{pcCO,vOT}. Those can be interesting in wide ML fields \cite{pcCO}, but they suffer from poor scalability. A trick of balancing the metric cost with an entropic regularizer \cite{cSD} substantially improves scalability to near-optimality but blurs the frontiers with other distortion measures \cite{cSD,mnpnTR}. Most importantly, the structure of the unregularized OT plan is substantially altered through regularization: its sparsity is reduced by a factor $\Omega(n)$, $n$ being the dimension of the marginals (we consider discrete optimal transport). At the expense of an increase in complexity, getting back to a user-constrained sparse solution can be done by switching to a quadratic regularizer \cite{lpbSC}, but loses an appealing structure of the entropic-regularized OT (EOT) solution, a discrete exponential family with very specific features. Sparsity is an important topic in optimal transport: both unregularized and EOT plans are extremal in the sparsity scale, which does not necessarily fit in observed patterns \cite{pcCO}.

Finally and most importantly, optimal transport, regularized or not, does not require normalized measures; in fact, it can be extended to the unbalanced problem where marginals' total masses do not even match \cite{jmpcEO}. In that last, very general case, the problem is usually cast with approximate marginal constraints and without any constraint whatsoever on the transport plan's total mass.

In this context, our paper introduces OT on tempered exponential measures (TEMs, a generalization of exponential families), with a generalization of the EOT. Notable structural properties of the problem include training as fast as Sinkhorn balancing \textit{and} with guarantees on the solution's sparsity, also including the possibility of unbalanced optimal transport \textit{but} with tight control over total masses via their \textit{co-densities}, distributions that are used to indirectly normalize TEMs (see Figure \ref{fig:otrot}). We characterize sparsity up to sparsity patterns in the optimal solution and show that sparsity with TEMs can be interpreted as balancing the classical OT cost with an interaction term interpretable in the popular gravity model for spatial interactions \cite{hfGA}. Interestingly, this interpretation cannot hold anymore for the particular case of exponential families and thus breaks for EOT.

To maximize readability, all proofs are deferred to an appendix.

\begin{figure}[t!]
\begin{center}
    \includegraphics[trim=60bp 60bp 15bp 60bp,clip,width=0.9\linewidth]{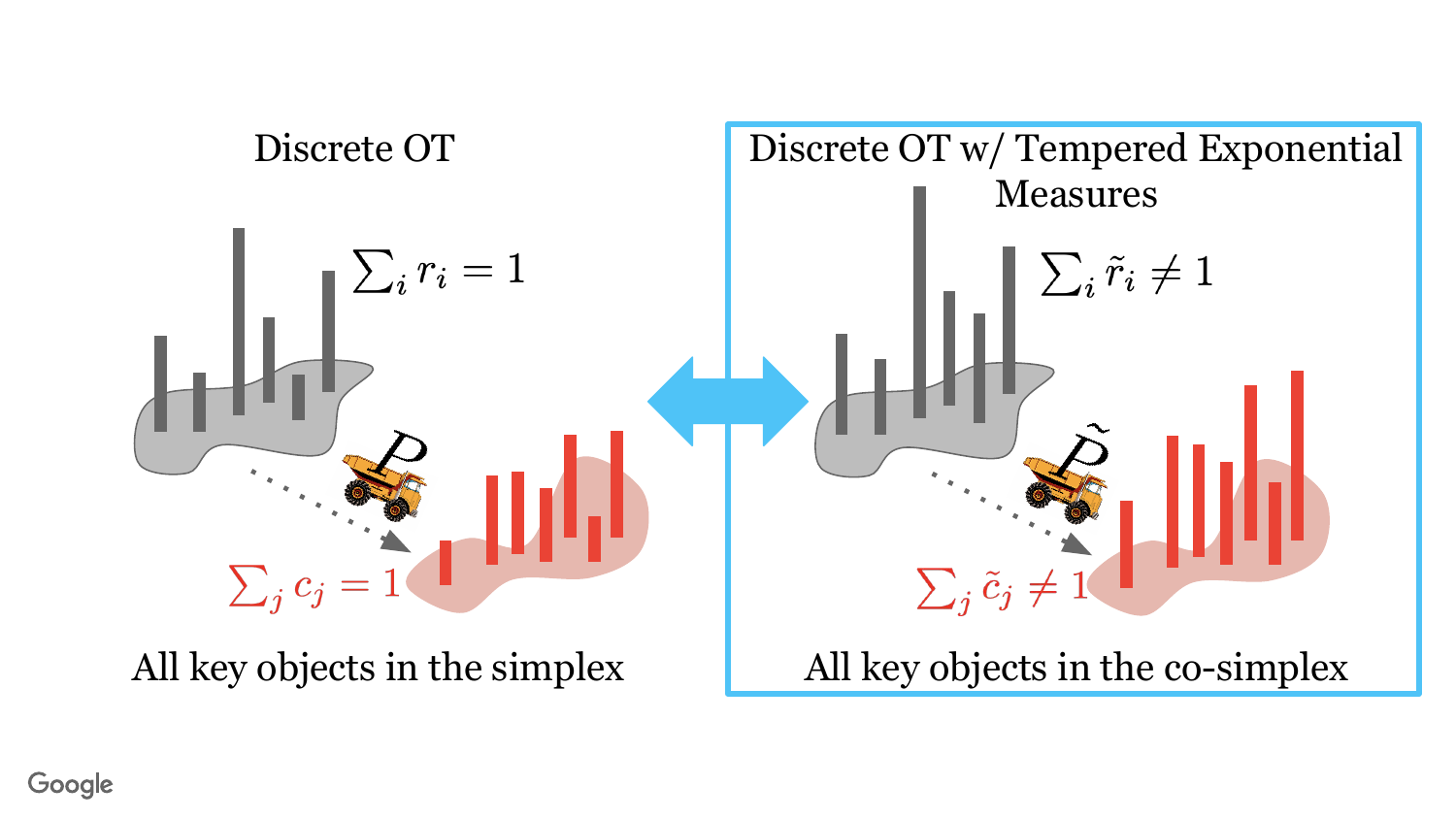}
    \caption{In classical optimal transport (OT, left), regularized or not, marginals and the OT plan sought are in the probability simplex; the optimal solution solely depends on the metric properties of the supports. Entropic regularization balances the metric cost with an entropic cost, and the optimal solution has remarkable properties related to exponential families. In this paper (right), we lift the whole setting to families of measures generalizing exponential families: tempered exponential measures (TEMs). Specific properties that appear include unbalancedness and sparsity of the optimal solution (see text).}
    \label{fig:otrot}
    \end{center}
\end{figure}

\section{Definitions}

\subsection{Optimal Transport in the Simplex} In classical discrete optimal transport (OT), we are given a cost matrix $\Mat{M} \in \mathbb{R}^{n \times n}$ ($n>1$) and two probability vectors $\ve{r}$ and column $\ve{c}$ in the simplex $\Delta_n \defeq \{\ve{p}\in \mathbb{R}^n: \ve{p} \geq \ve{0} \wedge \ve{1}^\top \ve{p} = 1\}$. Usually, $\Mat{M}$ satisfies the axioms of a distance, though only non-negativity is really important for all the results that we state. The OT problem seeks to find $d_{\Mat{M}}(\ve{r}, \ve{c}) \defeq \min_{\Mat{P} \in U_n(\ve{r}, \ve{c})} \innerproduct{\Mat{P}}{\Mat{M}} $, where $U_n(\ve{r}, \ve{c}) \defeq \{\Mat{P} \in \mathbb{R}_+^{n\times n}\vert \,\Mat{P}\ve{1}_n = \ve{r},\, \Mat{P}^\top\ve{1}_n = \ve{c}\}$ is the \emph{transport polytope} and $\innerproduct{\cdot}{\cdot}$ stands for the Frobenius dot-product. In the \emph{entropic-regularized OT} problem~\citep{cSD}, we rather seek
\begin{equation}
\label{eq:entropy-ot}
d^{\lambda}_{\Mat{M}}(\ve{r}, \ve{c}) \defeq\! \min_{\Mat{P} \in U_n(\ve{r}, \ve{c})} \innerproduct{\Mat{P}}{\Mat{M}} + \frac{1}{\lambda} \cdot \innerproduct{\Mat{P}}{\log\Mat{P}}\, , \lambda>0\,.
\end{equation}
Any discrete distribution is an exponential family \cite{aIG} but the OT plan solution to \eqref{eq:entropy-ot}, say $\Mat{P}^*$, has a special form. Denote $\ve{\mu}, \ve{\xi} \in \mathbb{R}^n$ the vectors of dual variables, corresponding to the row and column (respectively) marginalization constraints in \eqref{eq:entropy-ot}. The support of $\Mat{P}^*$ is $[n]^2$, where $[n] \defeq \{1, 2, ..., n\}$. We need to express $\Mat{P}^*$ as
\begin{eqnarray}
  P^*_{ij} = \exp(\innerproduct{\Mat{\Theta}}{\Mat{E}_{ij}} - G(\Mat{\Theta})), \label{probexp}
  \end{eqnarray}
  with $\Mat{E}_{ij}$ the matrix with general entry $(\delta_{ik}\delta_{jl})_{kl}$ (``$\delta$'' being Kronecker symbol). Let $\Mat{S} \defeq \Mat{M} - \ve{\mu}\ve{1}^\top - \ve{1}\ve{\xi}^\top$, which is a strictly positive matrix. In the unregularized case, $\Mat{S}$ encodes the slack of the constraints over the dual variables \cite[Section 2.5]{pcCO}. It follows from \citet{cSD} that the natural parameter of the exponential family is defined from those slack variables:
  \begin{eqnarray*}
    \Mat{\Theta} = -\lambda \cdot \Mat{S},
  \end{eqnarray*}
  while the cumulant or log-partition function $G$ in \eqref{probexp} is, in fact, 0 because normalization is implicitly ensured in $U_n(\ve{r}, \ve{c})$ (otherwise, the cumulant would depend on the Lagrange multiplier of the normalization constraint).

  \subsection{Tempered Exponential Measures} Any exponential family is a probability distribution that maximizes Shannon's entropy subject to a constraint on its expectation \cite{aIG}. A \textit{tempered exponential measure} (TEM) adopts a similar axiomatization \textit{but} via a generalization of Shannon's entropy (Tsallis entropy) and normalization imposed not on the TEM itself but on a so-called \textit{co-distribution} \cite{amid2023clustering}. This last constraint is a fundamental difference from previous generalizations of exponential families, $q$-exponential families, and deformed exponential families \cite{aIG}. Compared to those, TEMs also have the analytical advantage of getting a closed-form solution for the cumulant, a key ML function. A TEM has the general form (with $[z]_+ \defeq \max\{0,z\}$):
  \begin{eqnarray*}
  \tilde{p}(\ve{x}) \defeq \frac{\exp_t(\innerproduct{\ve{\theta}}{\ve{\phi}(\ve{x})})}{\exp_t(G_t(\ve{\theta}))}, \quad \exp_t (z) \defeq \left[1+(1-t) z\right]^\frac{1}{1-t}_+, \label{temexp}
  \end{eqnarray*}
  where $G_t$ is the cumulant and $\ve{\theta}$ denotes the natural parameter. The inverse of $\exp_t$ is $\log_t (z) \defeq \left(z^{1-t}-1\right)/(1-t)$, both being continuous generalizations of $\exp$ and $\log$ for $t=1$. Both functions keep their $t=1$ convexity/concavity properties for $t\geq 0$. The tilde notation above $p$ indicates that normalization does not occur on the TEM, but on a co-density defined as
  \begin{eqnarray}
    p & \defeq & \tilde{p}^{2-t} \quad \left(=\tilde{p}^{1/t^*}, \mbox{ with } t^* \defeq 1/(2-t)\right).
  \end{eqnarray}
  \begin{remark}
For a given vector $\tilde{\ve{p}}$ (or a matrix $\tilde{\Mat{P}}$) with the tilde notation, whenever convenient, we will use the convention $\ve{p} = \tilde{\ve{p}}^{1/t^*}$ (correspondingly, $\Mat{P} = \tilde{\Mat{P}}^{1/t^*}$, the exponent being coordinate-wise) whenever the tilde sign is removed.
\end{remark}
Hence, a TEM satisfies the indirect normalization $\int \tilde{p}^{2-t} \mathrm{d}\xi = \int p \mathrm{d}\xi = 1$.
\begin{remark}
  In this paper, we assume $t\in [0,1]$, though some of our results are valid for a broader range (discussed in context).
  \end{remark}
  In the same way, as KL divergence is the canonical divergence for exponential families \cite{anMO}, the same happens for a generalization in TEMs. Given two non-negative vectors $\tilde{\ve{u}}, \tilde{\ve{v}} \in \mathbb{R}^m$, we define the generalized tempered relative entropy as~\citep{Amid19}
\begin{equation*}
  D_t(\tilde{\ve{u}} \| \tilde{\ve{v}}) \defeq\!\!\! \sum_{i\in [n]}\! \tilde{u}_i\big(\log_t \tilde{u}_i  - \log_t \tilde{v}_i \big) - \log_{t-1} \tilde{u}_i + \log_{t-1} \tilde{v}_i .
\end{equation*}
Just like the KL divergence ($t\rightarrow 1$), the tempered relative entropy is a Bregman divergence, induced by the generator $\phi_t(z) \defeq z \log_t z - \log_{t-1}(z)$, which is convex for $t\in \mathbb{R}$. We also have $\phi_t'(z)=\log_t(z)$. We define the following extension of the probability simplex ${\Delta}_n$ in $\mathbb{R}^n$.
\begin{definition}\label{def:cosimplex}
The co-simplex of $\mathbb{R}^n$, $\tilde{\Delta}_n$ is defined as $\tilde{\Delta}_n \defeq \{\tilde{\ve{p}}\in \mathbb{R}^n: \tilde{\ve{p}} \geq \ve{0} \wedge \ve{1}^\top \tilde{\ve{p}}^{1/t^*} = 1\}$.
\end{definition}
Note that $\tilde{\ve{p}}^{1/t^*} \defeq \ve{p}\in \Delta_n$ iff $\tilde{\ve{p}} \in \tilde{\Delta}_n$ and $\tilde{\Delta}_n \rightarrow \Delta_n$ when $t \rightarrow 1$. Similarly, given $\tilde{\ve{r}}, \tilde{\ve{c}} \in \tilde{\Delta}_n$, we define their corresponding co-polytope in $\mathbb{R}_+^{n\times n}$.
\begin{definition}\label{def:copoly}
The co-polyhedral set of $n\times n$ non-negative matrices with co-marginals $\tilde{\ve{r}}, \tilde{\ve{c}} \in \tilde{\Delta}_n$ is defined as $\tilde{U}_n(\tilde{\ve{r}}, \tilde{\ve{c}}) \defeq \{\tilde{\Mat{P}} \in \mathbb{R}_+^{n\times n}\vert \,\tilde{\Mat{P}}^{1/t^*}\ve{1} = \tilde{\ve{r}}^{1/t^*},\, \tilde{\Mat{P}}^{1/t^*\top}\ve{1} = \tilde{\ve{c}}^{1/t^*}\}$.
\end{definition}
Likewise, $\tilde{U}_n(\tilde{\ve{r}}, \tilde{\ve{c}}) \rightarrow U_n({\ve{r}}, {\ve{c}})$ (the transport polytope) in the limit $t \rightarrow 1$. More importantly, using our notation convention,
\begin{eqnarray*}
  {\Mat{P}} \in U_n({\ve{r}}, {\ve{c}}) \mbox{\,\, iff \,\,\,} \tilde{\Mat{P}} \in \tilde{U}_n(\tilde{\ve{r}}, \tilde{\ve{c}}) .
\end{eqnarray*}
 
  \section{Related Work}

  From an ML standpoint, there are two key components to optimal transport (OT): the problem structure and its solving algorithms. While historically focused on the former \citep{mSL,kOT}, the field then became substantially ``algorithm-aware'', indirectly first via linear programming \citep{dPO} and then specifically because of its wide applicability in ML \citep{cSD}. The entropic-regularized OT (EOT) mixes metric and entropic terms in the cost function but can also be viewed as an approximation of OT in a Kullback-Leibler ball centered at the independence plan, which is, in fact, a metric \citep{cSD}. The resolution of the EOT problem can be obtained via Sinkhorn's algorithm \citep{sinkhorn1967concerning,franklin1989scaling,knight2008sinkhorn} (see Algorithm~\ref{alg:sinkhorn}), which corresponds to iterative Bregman projections onto the affine constraint sets (one for the rows and another for the columns). The algorithm requires matrix-vector multiplication and can be easily implemented in a few lines of code, making it ideal for a wide range of ML applications. However, alternative implementations of the algorithm via the dual formulation prove to be more numerically stable and better suited for high-dimensional settings \citep{pcCO}.

The structure of the solution -- the transportation plan -- is also important, and some features have become prominent in ML, like the sparsity of the solution \citep{lpbSC}. Sinkhorn iteration can be fine-tuned to lead to near-optimal complexity \citep{awrNL}, but entropic regularization suffers a substantial structural downside: the solution is maximally un-sparse, which contrasts with the sparsity of the unregularized solution \citep{pcCO}. Sparsity is a modern instantiation of ML's early constraint on the model's simplicity, otherwise known as Ockham's razor \citep{behwOR}.\footnote{\textit{Numquam ponenda est pluralitas sine necessitate}, ``plurality should never be imposed without necessity'', William of Ockham, XIV$^{th}$ century.} It has been known for a long time that ``extreme'' simplicity constraints lead to intractability for linear programming \citep{Kar72}, so sparsity in OT is desirable -- and not just for the sake of Ockham's razor \citep{bsrSA} -- but it is non-trivial and various notions of tractable sparsity can be sought, from a general objective \citep{bsrSA,mnpnTR} down to \textit{ex-ante} node specifics like transport obstruction \citep{dprRO} or limiting the transport degree \citep{lpbSC}. Sparsity makes it convenient to train from general optimizers \citep{lpbSC}. This comes, however, at the expense of losing an appealing probabilistic structure of the EOT solution, a discrete exponential family with very specific features, and eventually loses as well the near-optimal algorithmic convenience that fine-tuning Sinkhorn offers for training \citep{awrNL}.

Taking EOT as a starting point, two different directions can be sought for generalization. The first consists in replacing the entropic term with a more general one, such as Tsallis entropy (still on the simplex), which was introduced in \citet{mnpnTR}; the second consists in alleviating the condition of identical marginal masses, which is touched upon in \citet{jmpcEO} and was initially proposed without regularization by \citet{bNR}.

In work predating the focus on optimal transport \citep{permutation}, 
the same relative entropy regularized optimal transport problem was
used to develop online algorithms for learning permutations that predict close to the best permutation chosen in hindsight. See a recent result in \citet{Ballu23} on mirror Sinkhorn algorithms.

\section{Beyond Sinkhorn Distances With TEMs}

\subsection{OT Costs With TEMs} Since tempered exponential measures involve two distinct sets (the probability simplex and the co-simplex), we can naturally define two unregularized OT objectives given a cost matrix $\Mat{M} \in \mathbb{R}^{n\times n}$. The first is the classical OT cost; we denote it as the \emph{expected cost},
\begin{equation}
\label{eq:t-transport-cost-e}
    d_{\Mat{M}}^{t}(\tilde{\ve{r}}, \tilde{\ve{c}}) \defeq \min_{\tilde{\Mat{P}} \in \tilde{U}_n(\tilde{\ve{r}}, \tilde{\ve{c}})} \innerproduct{{\Mat{P}}}{\Mat{M}}\, 
  \end{equation}
(with our notations, note that the constraint is equivalent to ${\Mat{P}} \in {U}_n({\ve{r}}, {\ve{c}})$).
Instead of embedding the cost matrix on the probability simplex, we can put it directly on the co-simplex, which leads to the \emph{measured cost}:
\begin{equation}
\label{eq:t-transport-cost-m}
    \tilde{d}_{\Mat{M}}^{t}(\tilde{\ve{r}}, \tilde{\ve{c}}) \defeq \min_{\tilde{\Mat{P}} \in \tilde{U}_n(\tilde{\ve{r}}, \tilde{\ve{c}})} \innerproduct{\tilde{\Mat{P}}}{\Mat{M}}\, .
  \end{equation}
  $d_{\Mat{M}}^{t}$ is a distance if $\Mat{M}$ is a metric matrix. $\tilde{d}_{\Mat{M}}^{t}$ is trivially non-negative, symmetric, and meets the identity of indiscernibles. However, it seems to only satisfy a slightly different version of the triangle inequality, which converges to the triangle inequality as $t\rightarrow 1$.
  \begin{restatable}{proposition}{ddistone}
\label{prop:ddistone}
If $\Mat{M}$ is a distance matrix and $t\leq 1$, $(\tilde{d}^t_{\Mat{M}}(\tilde{\ve{x}}, \tilde{\ve{z}}))^{2-t} \leq M^{1-t}\cdot \left(\tilde{d}^t_{\Mat{M}}(\tilde{\ve{x}}, \tilde{\ve{y}}) + \tilde{d}^t_{\Mat{M}}(\tilde{\ve{y}}, \tilde{\ve{z}})\right)$, $\forall \tilde{\ve{x}}, \tilde{\ve{y}}, \tilde{\ve{z}} \in \tilde{\Delta}_n$, where $M \defeq \sum_{ij} M_{ij}$.
\end{restatable}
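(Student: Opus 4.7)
The plan is to adapt the classical gluing-lemma proof of the triangle inequality for Kantorovich OT to the tempered setting. Let $\tilde{\Mat P}$ and $\tilde{\Mat Q}$ be minimizers for $\tilde d^t_{\Mat M}(\tilde{\ve x}, \tilde{\ve y})$ and $\tilde d^t_{\Mat M}(\tilde{\ve y}, \tilde{\ve z})$, respectively, and set $\Mat P \defeq \tilde{\Mat P}^{2-t}$, $\Mat Q \defeq \tilde{\Mat Q}^{2-t}$, so by Definition~\ref{def:copoly} we have $\Mat P \in U_n(\ve x, \ve y)$ and $\Mat Q \in U_n(\ve y, \ve z)$. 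I would then use the standard gluing $R_{ik} \defeq \sum_j P_{ij} Q_{jk}/y_j$ (with the usual convention that the summand is $0$ whenever $y_j = 0$), which yields $\Mat R \in U_n(\ve x, \ve z)$ and hence $\tilde{\Mat R} \defeq \Mat R^{1/(2-t)} \in \tilde U_n(\tilde{\ve x}, \tilde{\ve z})$. Feasibility then gives $\tilde d^t_{\Mat M}(\tilde{\ve x}, \tilde{\ve z}) \leq \langle \tilde{\Mat R}, \Mat M \rangle = \sum_{ik} R_{ik}^{1/(2-t)} M_{ik}$.

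The key analytic step is a Hölder inequality with conjugate exponents $p = 2-t$ and $q = (2-t)/(1-t)$, both $\geq 1$ for $t \in [0, 1)$ (the case $t = 1$ degenerates to the classical triangle inequality and can be handled by continuity). Writing $R_{ik}^{1/(2-t)} M_{ik} = (R_{ik} M_{ik})^{1/(2-t)} \cdot M_{ik}^{(1-t)/(2-t)}$ and applying Hölder gives
$$\sum_{ik} R_{ik}^{1/(2-t)} M_{ik} \leq \langle \Mat R, \Mat M \rangle^{1/(2-t)} \cdot M^{(1-t)/(2-t)}.$$
Raising to the $(2-t)$-th power produces precisely the prefactor $M^{1-t}$ of the statement and leaves $\langle \Mat R, \Mat M \rangle$ to be controlled on the right.

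To finish, I bound $\langle \Mat R, \Mat M \rangle$ by $\tilde d^t_{\Mat M}(\tilde{\ve x}, \tilde{\ve y}) + \tilde d^t_{\Mat M}(\tilde{\ve y}, \tilde{\ve z})$. The triangle inequality for $\Mat M$ applied coordinate-wise, together with $\sum_k Q_{jk} = y_j$ and $\sum_i P_{ij} = y_j$, is the classical gluing calculation and yields $\langle \Mat R, \Mat M \rangle \leq \langle \Mat P, \Mat M \rangle + \langle \Mat Q, \Mat M \rangle$. Finally, since $\Mat P$ has non-negative entries summing to $1$, every $P_{ij} \in [0,1]$ and hence $\tilde P_{ij} = P_{ij}^{1/(2-t)} \in [0,1]$; combined with $2-t \geq 1$, this gives $\tilde P_{ij}^{2-t} \leq \tilde P_{ij}$ pointwise, so $\langle \Mat P, \Mat M \rangle \leq \langle \tilde{\Mat P}, \Mat M \rangle = \tilde d^t_{\Mat M}(\tilde{\ve x}, \tilde{\ve y})$, and analogously for $\Mat Q$. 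Chaining the inequalities yields the claim.

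The main obstacle I anticipate is identifying the Hölder exponents that land exactly on $M^{1-t}$; the remaining ingredients are the classical gluing identity for $\Mat M$ and the elementary monotonicity $x^{2-t} \leq x$ on $[0,1]$, which is where $t \leq 1$ is used a second time to reconcile the tempered plan $\tilde{\Mat P}$ (entering $\tilde d^t_{\Mat M}$) with its $(2-t)$-power $\Mat P$ (which the gluing construction naturally produces).
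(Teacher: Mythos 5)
Your proof is correct and follows essentially the same route as the paper's: the same gluing construction $R_{ik}=\sum_j P_{ij}Q_{jk}/y_j$ in co-density space, the same coordinate-wise triangle inequality, and the same final monotonicity $P_{ij}\leq \tilde P_{ij}$. Your H\"older step with exponents $(2-t,\,(2-t)/(1-t))$ is just the dual formulation of the paper's Jensen step (concavity of $z\mapsto z^{1/(2-t)}$ with weights $M_{ik}/M$), so the two arguments coincide line by line.
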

Factor $M^{1-t}$ is somehow necessary to prevent vacuity of the inequality: scaling a cost matrix by a constant $\kappa > 0$ does not change the OT optimal plan, but scales the OT cost by $\kappa$; in this case, the LHS scales by $\kappa^{2-t}$ and the RHS scales by $\kappa^{1-t}\cdot \kappa = \kappa^{2-t}$ as well. Note that it can be the case that $M<1$ so the RHS can be smaller than the triangle inequality's counterpart -- yet we would not necessarily get an inequality tighter than the triangle inequality because, in this case, it is easy to show that the LHS would also be smaller than the triangle inequality's counterpart. 
\subsection{OT Costs in a Ball} This problem is an intermediary that grounds a particular metric structure of EOT. This constrained problem seeks the optimal transport plan in an information ball -- a KL ball -- centered at the independence plan. Using our generalized tempered relative entropy, this set can be generalized as:
 \begin{equation}
\label{eq:t-alpha-poly}
    \tilde{U}_n^{\varepsilon}(\tilde{\ve{r}}, \tilde{\ve{c}}) \defeq \{\tilde{\Mat{P}} \in \tilde{U}_n(\tilde{\ve{r}}, \tilde{\ve{c}})\vert\, D_t\big(\tilde{\Mat{P}} \| \tilde{\ve{r}}\tilde{\ve{c}}^\top\big) \leq \varepsilon)\}\,,
  \end{equation}
  where $\varepsilon$ is the radius of this ball. It turns out that when $t=1$, minimizing the OT cost subject to being in this ball also yields a distance, called a Sinkhorn distance -- if, of course, $\Mat{M}$ is a metric matrix \cite{cSD}. For a more general $t$, we can first remark that 
  \begin{eqnarray*}
D_t\big(\tilde{\Mat{P}} \| \tilde{\ve{r}}\tilde{\ve{c}}^\top\big) \leq \frac{1}{1-t}, \forall \tilde{\Mat{P}} \in \tilde{\Delta}_{n\times n}, \forall \tilde{\ve{r}}, \tilde{\ve{c}} \in \tilde{\Delta}_{n}, 
  \end{eqnarray*}
  so that we can consider that
  \begin{eqnarray}
    \varepsilon & < & \frac{1}{1-t} \label{consteps}
  \end{eqnarray}
  for the ball constraint not to be vacuous. $\tilde{\ve{r}}\tilde{\ve{c}}^\top \in \tilde{U}_n(\tilde{\ve{r}}, \tilde{\ve{c}})$  is the \emph{independence table} with co-marginals $\tilde{\ve{r}}$ and $\tilde{\ve{c}}$. When $\varepsilon \rightarrow \infty$, we have $\tilde{U}^{\varepsilon}_n(\tilde{\ve{r}}, \tilde{\ve{c}}) \rightarrow \tilde{U}_n(\tilde{\ve{r}}, \tilde{\ve{c}})$. When $t\rightarrow 1$, $\tilde{U}_n^{\varepsilon}(\tilde{\ve{r}}, \tilde{\ve{c}}) \rightarrow U^{\varepsilon}_n({\ve{r}}, {\ve{c}})$, the subset of the transport polytope with bounded KL divergence to the independence table \cite{cSD}.

  Notably, the generalization of the ball $\tilde{U}_n^{\varepsilon}(\tilde{\ve{r}}, \tilde{\ve{c}})$ for $t\neq 1$ loses the convexity of the ball itself -- while the divergence $D_t$ remains convex. However, the domain keeps an important property: it is $1/t^*$-\emph{power convex}.
\begin{restatable}{proposition}{powerconvex}
\label{prop:alpha}
For any $\tilde{\Mat{P}}, \tilde{\Mat{Q}} \in \tilde{U}^{\varepsilon}_n(\tilde{\ve{r}}, \tilde{\ve{c}})$ and any $t\! \in\! \mathbb{R} \scalebox{1.2}[1.0]{-} \{2\}$,
$$(\beta\,\tilde{\Mat{P}}^{1/t^*} + (1 - \beta)\,\tilde{\Mat{Q}}^{1/t^*})^{t^*} \in \tilde{U}^{\varepsilon}_n(\tilde{\ve{r}}, \tilde{\ve{c}})\, , \forall \beta \in [0,1].$$
\end{restatable}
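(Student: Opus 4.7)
The plan is to reparametrize in terms of $\Mat{P} \defeq \tilde{\Mat{P}}^{1/t^*}$ and $\Mat{Q} \defeq \tilde{\Mat{Q}}^{1/t^*}$. In this variable the proposition's claim asserts that the \emph{ordinary} convex combination $\Mat{R} \defeq \beta\Mat{P} + (1-\beta)\Mat{Q}$, raised coordinate-wise to $t^*$, still lies in $\tilde{U}^{\varepsilon}_n(\tilde{\ve{r}}, \tilde{\ve{c}})$. The marginal-preservation half is immediate: the constraints defining $\tilde{U}_n(\tilde{\ve{r}}, \tilde{\ve{c}})$ are precisely $\Mat{P} \in U_n(\ve{r}, \ve{c})$, so $\Mat{R} \in U_n(\ve{r}, \ve{c})$ by the ordinary convexity of the transport polytope, which translates back to $\Mat{R}^{t^*} \in \tilde{U}_n(\tilde{\ve{r}}, \tilde{\ve{c}})$.

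The real work is preserving the ball constraint. I would define $f(\Mat{P}) \defeq D_t(\Mat{P}^{t^*} \| \tilde{\ve{r}}\tilde{\ve{c}}^\top)$ and show it is convex in $\Mat{P}$ entrywise. Once that is established, convexity gives
\[
f(\Mat{R}) \;\leq\; \beta f(\Mat{P}) + (1-\beta) f(\Mat{Q}) \;\leq\; \varepsilon,
\]
which is exactly $D_t(\tilde{\Mat{R}} \| \tilde{\ve{r}}\tilde{\ve{c}}^\top) \leq \varepsilon$ for $\tilde{\Mat{R}} = \Mat{R}^{t^*}$.

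The convexity check decomposes across entries. For a single entry, writing $\tilde{v}$ for the corresponding entry of $\tilde{\ve{r}}\tilde{\ve{c}}^\top$, I would substitute $\tilde{u} = p^{t^*}$ in the per-entry contribution to $D_t$ and expand with $\log_t z = (z^{1-t}-1)/(1-t)$. The key identity $t^*(2-t) = 1$ then collapses both $p^{t^*}\log_t(p^{t^*})$ and $\log_{t-1}(p^{t^*})$ to expressions \emph{linear} in $p$, leaving a single nonlinear term of the form $-p^{t^*}\tilde{v}^{1-t}/(1-t)$. Convexity follows by matching signs: for $t \in [0,1]$ one has $t^* \in [1/2,1]$, so $p \mapsto p^{t^*}$ is concave and $1-t \geq 0$, making the overall term convex; the analogous sign-matching argument, based on the fact that the sign of $1-t$ flips exactly when $p^{t^*}$ switches between concave and convex on $\mathbb{R}_+$, covers the remaining ranges of $t \in \mathbb{R}\setminus\{2\}$ stated in the proposition.

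The main obstacle — and the reason the statement is a genuine twist on the standard convexity of Bregman balls — is precisely this composition: $D_t$ is convex in its first argument, but here we need convexity not along a straight line in $\tilde{\Mat{P}}$ but along a straight line in $\Mat{P} = \tilde{\Mat{P}}^{1/t^*}$, so the textbook Bregman property does not apply off the shelf. The identity $t^*(2-t) = 1$ is what rescues the calculation by reducing the nonstandard composition to a single tractable monomial in $p$.
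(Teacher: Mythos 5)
Your proof is correct, but it takes a genuinely different route from the paper's. The paper never establishes convexity of a composed function: it instead uses the fact that, for arguments with fixed co-marginals, the tempered relative entropy to the independence table collapses to an \emph{affine} function of $\tilde{\Mat{P}}$, namely $D_t(\tilde{\Mat{P}}\|\tilde{\ve{r}}\tilde{\ve{c}}^\top)=\frac{1}{1-t}\bigl(1-\langle\tilde{\Mat{P}},(\tilde{\ve{r}}\tilde{\ve{c}}^\top)^{1-t}\rangle\bigr)$ (Eq.~\eqref{simpldt}); the linear inequality is then inherited by $\beta\tilde{\Mat{P}}+(1-\beta)\tilde{\Mat{Q}}$, and the weighted power-mean inequality comparing $(\beta\tilde{P}_{ij}^{1/t^*}+(1-\beta)\tilde{Q}_{ij}^{1/t^*})^{t^*}$ with $\beta\tilde{P}_{ij}+(1-\beta)\tilde{Q}_{ij}$, whose direction flips exactly when the sign of $\frac{1}{1-t}$ does, finishes the argument. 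Your route instead proves entrywise convexity of $p\mapsto D_t(p^{t^*}\|\tilde v)$ on the orthant, which is a strictly stronger fact: it needs neither the co-simplex normalization of $\tilde{\Mat{P}}$ nor that the reference measure be the independence table of the same marginals, so it shows $1/t^*$-power convexity of \emph{every} sublevel set of $D_t(\cdot\|\tilde{\ve{v}})$. Your per-entry reduction is right --- the terms combine to $\frac{p}{(1-t)(2-t)}-\frac{\tilde v^{1-t}}{1-t}p^{t^*}+\mathrm{const}$ --- and I would only tighten the final sign discussion: rather than case-splitting on ranges of $t$, note that the second derivative is $-\frac{\tilde v^{1-t}}{1-t}\,t^*(t^*-1)\,p^{t^*-2}=\frac{\tilde v^{1-t}}{(2-t)^2}\,p^{t^*-2}\geq 0$ uniformly for all $t\neq 2$, since $\frac{t^*(t^*-1)}{1-t}=-\frac{1}{(2-t)^2}$. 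The trade-off is that the paper's argument is shorter because it reuses a simplification already needed for Proposition~\ref{prop:eqentrcost}, while yours is self-contained and more general; both (like the paper) gloss over the degenerate cases $t=1$ (take limits) and $t>2$ (where $t^*<0$ and zero entries are problematic).
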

\subsection{Regularized OT Costs} in the case of entropic regularization, the OT cost is replaced by $\innerproduct{\Mat{P}}{\Mat{M}} + (1/\lambda) \cdot D_1\big({\Mat{P}} \| {\ve{r}}{\ve{c}}^\top\big)$. In the case of TEMs, we can formulate two types of regularized OT costs generalizing this expression, the \textit{regularized expected cost}
\begin{equation}
\label{eq:t-sinkhorn-main-e}
d^{t,\lambda}_{\Mat{M}}(\tilde{\ve{r}}, \tilde{\ve{c}}) \defeq \min_{\tilde{\Mat{P}} \in \tilde{U}_n(\tilde{\ve{r}}, \tilde{\ve{c}})} \innerproduct{\Mat{P}}{\Mat{M}} + \frac{1}{\lambda} \cdot D_t(\tilde{\Mat{P}} \| \tilde{\ve{r}}\tilde{\ve{c}}^\top)\, ,
\end{equation}
for $\lambda > 0$ and the \textit{regularized measured cost}
\begin{equation}
\label{eq:t-sinkhorn-main}
\tilde{d}^{t,\lambda}_{\Mat{M}}(\tilde{\ve{r}}, \tilde{\ve{c}}) \defeq \min_{\tilde{\Mat{P}} \in \tilde{U}_n(\tilde{\ve{r}}, \tilde{\ve{c}})} \innerproduct{\tilde{\Mat{P}}}{\Mat{M}} + \frac{1}{\lambda} \cdot D_t(\tilde{\Mat{P}} \| \tilde{\ve{r}}\tilde{\ve{c}}^\top)\, .
\end{equation}
The \textit{raison d'\^etre} of entropic regularization is the algorithmic efficiency of its approximation. As we shall see, this stands for TEMs as well. In the case of TEMs, the question remains: what is the structure of the regularized problem? In the case of EOT ($t=1$), the answer is simple, as the OT costs in a ball bring the metric foundation of regularized OT, since the regularized cost is just the Lagrangian of the OT in a ball problem. Of course, the downside is that parameter $\lambda$ in the regularized cost comes from a Lagrange multiplier, which is unknown in general, but at least a connection does exist with the metric structure of the unregularized OT problem -- assuming, again, that $\Mat{M}$ is a metric matrix.

As we highlight in Proposition \ref{prop:ddistone}, the measured cost only meets an approximate version of the triangle inequality, so a metric connection holds only in a weaker sense for $t\neq 1$. However, as we now show, when $t\neq 1$, there happens to be a \textit{direct} connection with the unregularized OT costs themselves (expected and measured), the connection to which is blurred when $t=1$ and sheds light on the algorithms we use.
\begin{restatable}{proposition}{eqentrcost}
\label{prop:eqentrcost}
For any TEM $\tilde{\Mat{P}}\in \tilde{\Delta}_{n\times n}$, any $t \in [0,1)$ and $0\leq \varepsilon \leq 1/(1-t)$, letting $\Mat{M}_t \defeq (\tilde{\ve{r}}\tilde{\ve{c}}^\top)^{1-t}$, we have
\begin{eqnarray}
  D_t\big(\tilde{\Mat{P}} \| \tilde{\ve{r}}\tilde{\ve{c}}^\top\big) \leq \varepsilon & \Leftrightarrow & \innerproduct{\tilde{\Mat{P}}}{\Mat{M}_t} \geq \exp^{1-t}_t (\varepsilon). \label{tempinner}
\end{eqnarray}
\end{restatable}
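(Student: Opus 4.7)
My plan is to show that the divergence $D_t(\tilde{\Mat{P}} \| \tilde{\ve{r}}\tilde{\ve{c}}^\top)$ admits a closed-form expression that is an affine function of $\innerproduct{\tilde{\Mat{P}}}{\Mat{M}_t}$, after which the equivalence becomes a routine rearrangement. The key algebraic input is the explicit form $\log_t(z) = (z^{1-t}-1)/(1-t)$ together with $\log_{t-1}(z) = (z^{2-t}-1)/(2-t)$, and the two normalizations forced by the assumption $\tilde{\Mat{P}}\in\tilde{U}_n(\tilde{\ve{r}},\tilde{\ve{c}})$, namely
\[
\sum_{ij}\tilde{P}_{ij}^{\,2-t}=\ve{1}^\top\Mat{P}\ve{1}=1, \qquad \sum_{ij}(\tilde{r}_i\tilde{c}_j)^{2-t}=\Big(\sum_i\tilde{r}_i^{2-t}\Big)\Big(\sum_j\tilde{c}_j^{2-t}\Big)=1,
\]
both of which follow directly from the definition of the co-simplex.

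The first step is to expand $D_t(\tilde{\Mat{P}}\|\tilde{\ve{r}}\tilde{\ve{c}}^\top)$ as the sum of four pieces, one for each term in the definition:
(i) $\sum_{ij}\tilde{P}_{ij}\log_t\tilde{P}_{ij}=(1-\sum_{ij}\tilde{P}_{ij})/(1-t)$, using $\tilde{P}_{ij}\log_t\tilde{P}_{ij}=(\tilde{P}_{ij}^{\,2-t}-\tilde{P}_{ij})/(1-t)$ and the first normalization;
(ii) $-\sum_{ij}\tilde{P}_{ij}\log_t(\tilde{r}_i\tilde{c}_j)=(\sum_{ij}\tilde{P}_{ij}-\innerproduct{\tilde{\Mat{P}}}{\Mat{M}_t})/(1-t)$, by the same pointwise identity applied to $\log_t(\tilde{r}_i\tilde{c}_j)=((\tilde{r}_i\tilde{c}_j)^{1-t}-1)/(1-t)$ and the definition of $\Mat{M}_t$;
(iii) $-\sum_{ij}\log_{t-1}\tilde{P}_{ij}=(n^2-1)/(2-t)$, via $\log_{t-1}(z)=(z^{2-t}-1)/(2-t)$ and again the first normalization; and
(iv) $\sum_{ij}\log_{t-1}(\tilde{r}_i\tilde{c}_j)=(1-n^2)/(2-t)$, via the second normalization.

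Adding (i) and (ii) makes the $\sum_{ij}\tilde{P}_{ij}$ contributions cancel, leaving $(1-\innerproduct{\tilde{\Mat{P}}}{\Mat{M}_t})/(1-t)$; adding (iii) and (iv) cancels the $n^2$ contributions to yield $0$. Hence
\[
D_t(\tilde{\Mat{P}} \| \tilde{\ve{r}}\tilde{\ve{c}}^\top) \;=\; \frac{1 - \innerproduct{\tilde{\Mat{P}}}{\Mat{M}_t}}{1-t}.
\]
Since $1-t>0$, the ball condition $D_t(\tilde{\Mat{P}} \| \tilde{\ve{r}}\tilde{\ve{c}}^\top) \leq \varepsilon$ is equivalent to $\innerproduct{\tilde{\Mat{P}}}{\Mat{M}_t} \geq 1-(1-t)\varepsilon$. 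Finally, I read off the right-hand side of \eqref{tempinner} by recognizing that under the constraint $\varepsilon\le 1/(1-t)$ the bracket $[1-(1-t)\varepsilon]_+$ equals $1-(1-t)\varepsilon$, which is precisely the value of $\exp^{1-t}_t(\varepsilon)$ obtained by raising the $t$-exponential to the power $1-t$ (the $(\cdot)_+$ being inactive in this range).

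The main obstacle is purely bookkeeping: getting the two different deformed logarithms $\log_t$ and $\log_{t-1}$ straight, and noticing that the $\sum_{ij}\tilde{P}_{ij}$ (un-normalized mass of the TEM, which has no prescribed value in the co-simplex) appears with opposite signs in (i) and (ii) and therefore does not need to be controlled. Everything else rests on the two normalization identities above, which are built into the definitions of $\tilde{\Delta}_n$ and $\tilde{U}_n(\tilde{\ve{r}},\tilde{\ve{c}})$.
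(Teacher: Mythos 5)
Your derivation is exactly the paper's argument: the paper merely asserts the simplification $D_t(\tilde{\Mat{P}}\|\tilde{\ve{r}}\tilde{\ve{c}}^\top)=\frac{1}{1-t}\bigl(1-\innerproduct{\tilde{\Mat{P}}}{\Mat{M}_t}\bigr)$ (its Eq.~\eqref{simpldt}) and declares the rest immediate, whereas you supply the four-term expansion and the two co-simplex normalizations ($\sum_{ij}\tilde{P}_{ij}^{2-t}=1$ and $\sum_{ij}(\tilde{r}_i\tilde{c}_j)^{2-t}=1$) that make it true, which is a welcome level of detail. One caveat on your final identification: with the paper's $\exp_t(z)=[1+(1-t)z]_+^{1/(1-t)}$ one has $\exp_t^{1-t}(\varepsilon)=[1+(1-t)\varepsilon]_+=1+(1-t)\varepsilon$ for $\varepsilon\geq 0$, so the threshold $1-(1-t)\varepsilon$ you correctly derive is $\exp_t^{1-t}(-\varepsilon)$, not $\exp_t^{1-t}(\varepsilon)$; your closing sentence silently flips this sign, reproducing what appears to be a typo in the statement of \eqref{tempinner} rather than the literal value of the right-hand side.
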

The proof is immediate once we remark that on the co-simplex, the generalized tempered relative entropy simplifies for $t\neq 1$ as:
\begin{eqnarray}
D_t\big(\tilde{\Mat{P}} \| \tilde{\ve{r}}\tilde{\ve{c}}^\top\big) & = & \frac{1}{1-t}\cdot \left(1 - \innerproduct{\tilde{\Mat{P}}}{\Mat{M}_t}\right). \label{simpldt}
  \end{eqnarray}
  Interestingly, this simplification does not happen for $t=1$, a case for which we keep Shannon's entropy in the equation and thus get an expression not as ``clean'' as \eqref{simpldt}. Though $\Mat{M}_t$ does not define a metric, it is useful to think of \eqref{tempinner} as giving an equivalence of being in the generalized tempered relative entropy ball to the independence plan (a fact relevant to information theory) and having a large OT cost with respect to a cost matrix defined from the independence plan (a fact relevant to OT). For $t\neq 1$, the constrained OT problem becomes solving one OT problem subject to a constraint on another one.
\subsection{Regularized OT Costs With TEMs Implies Sparsity} The regularized problem becomes even ``cleaner'' for the regularized measured cost \eqref{eq:t-sinkhorn-main} as it becomes an unregularized measured cost\footnote{A similar discussion, albeit more involved, holds for the expected cost. We omit it due to the lack of space.} \eqref{eq:t-transport-cost-m} over a \textit{fixed} cost matrix.
\begin{restatable}{proposition}{equivregtem}
\label{prop:equivregtem}
For any $t\in [0,1)$, the regularized measured cost \eqref{eq:t-sinkhorn-main} can be written as
  \begin{eqnarray}
    \lambda \cdot \tilde{d}^{t,\lambda}_{\Mat{M}}(\tilde{\ve{r}}, \tilde{\ve{c}}) & = & \frac{1}{1-t} + \min_{\tilde{\Mat{P}} \in \tilde{U}_n(\tilde{\ve{r}}, \tilde{\ve{c}})} \innerproduct{\tilde{\Mat{P}}}{\Mat{M}'},\label{rmceq}\\
    \Mat{M}' & \defeq & \lambda \cdot \Mat{M} - \frac{1}{1-t} \cdot \Mat{M}_t, \label{costregm}
  \end{eqnarray}
  with $\Mat{M}_t$ defined in Proposition \ref{prop:eqentrcost}.
\end{restatable}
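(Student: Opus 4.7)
The plan is to use Proposition \ref{prop:eqentrcost}, or rather the key identity \eqref{simpldt} stated right after it, to eliminate the regularizer in favor of a linear (in $\tilde{\Mat{P}}$) expression. Since the minimization in \eqref{eq:t-sinkhorn-main} is over $\tilde{\Mat{P}} \in \tilde{U}_n(\tilde{\ve{r}}, \tilde{\ve{c}})$ and this set lies in the co-simplex $\tilde{\Delta}_{n\times n}$, the simplification \eqref{simpldt} of $D_t$ applies pointwise inside the $\min$.

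More concretely, first I would substitute \eqref{simpldt} into the objective of \eqref{eq:t-sinkhorn-main} to rewrite
\[
\innerproduct{\tilde{\Mat{P}}}{\Mat{M}} + \frac{1}{\lambda} D_t(\tilde{\Mat{P}} \| \tilde{\ve{r}}\tilde{\ve{c}}^\top)
= \innerproduct{\tilde{\Mat{P}}}{\Mat{M}} + \frac{1}{\lambda(1-t)}\bigl(1 - \innerproduct{\tilde{\Mat{P}}}{\Mat{M}_t}\bigr).
\]
Then I would multiply through by $\lambda$, pull the constant term $1/(1-t)$ outside the $\min$ (it is independent of $\tilde{\Mat{P}}$), and combine the two inner products by linearity of $\innerproduct{\cdot}{\cdot}$ to obtain
\[
\lambda \innerproduct{\tilde{\Mat{P}}}{\Mat{M}} - \frac{1}{1-t}\innerproduct{\tilde{\Mat{P}}}{\Mat{M}_t} = \innerproduct{\tilde{\Mat{P}}}{\lambda \Mat{M} - \tfrac{1}{1-t}\Mat{M}_t} = \innerproduct{\tilde{\Mat{P}}}{\Mat{M}'},
\]
which yields exactly \eqref{rmceq}.

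There is essentially no obstacle here: the entire argument is a rewriting of the regularizer using an identity stated earlier, together with linearity of the Frobenius inner product. The only point to double-check is that \eqref{simpldt} genuinely holds uniformly over the feasible set, which it does because every $\tilde{\Mat{P}} \in \tilde{U}_n(\tilde{\ve{r}}, \tilde{\ve{c}}) \subset \tilde{\Delta}_{n\times n}$ satisfies the co-simplex normalization used to derive \eqref{simpldt}; and that the restriction $t \in [0,1)$ is needed so that $1/(1-t)$ is well-defined and positive (the case $t=1$ being precisely where the clean simplification \eqref{simpldt} fails, as noted in the text). No appeal to optimality conditions, duality, or Sinkhorn-type iterations is required for this proposition.
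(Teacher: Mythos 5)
Your proof is correct and follows exactly the route the paper intends: the paper labels this result ``(Proof straightforward)'' and relies on precisely the substitution of the co-simplex identity \eqref{simpldt} into the objective of \eqref{eq:t-sinkhorn-main}, followed by multiplying by $\lambda$ and using linearity of the Frobenius inner product. Your additional checks --- that \eqref{simpldt} applies uniformly because $\tilde{U}_n(\tilde{\ve{r}}, \tilde{\ve{c}}) \subset \tilde{\Delta}_{n\times n}$, and that $t \in [0,1)$ keeps $1/(1-t)$ well-defined --- are exactly the right points to verify and match the paper's discussion.
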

(Proof straightforward)  This formulation shows that regularized OT with TEMs for $t\neq 1$ can achieve something that classical EOT ($t=1$) cannot: getting sparse OT plans. Indeed, as the next theorem shows, specific sparsity patterns happen in any configuration of two sources $i\neq k$ and two destinations $l\neq j$ containing two distinct paths of negative costs and at least one of positive cost.

\begin{figure}[t!]
\begin{center}
    \includegraphics[
    width=0.95\linewidth]{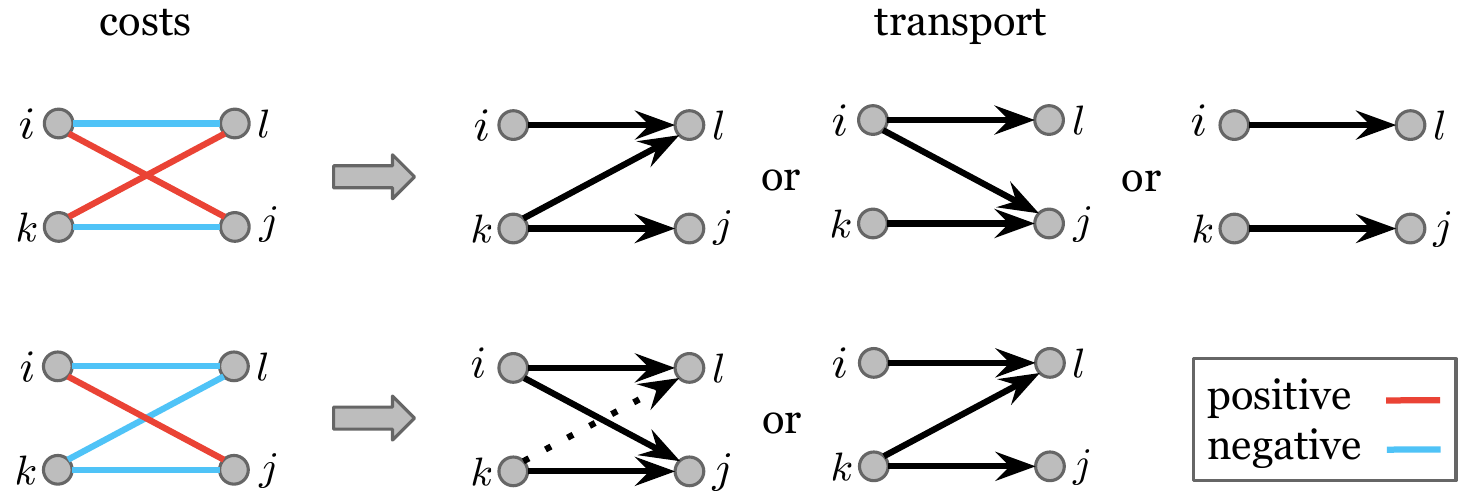}
    \caption{Illustration of Theorem \ref{th:sparseotem}. Negative costs of matrix $\Mat{M}'$ in the regularized measured cost \eqref{costregm} are in blue, and those positive are in red. The sparsity results of the theorem are shown, where no arrow means no transport and a dashed arrow means a transport necessarily ``small'' (see text).}
    \label{fig:sparse}
    \end{center}
\end{figure}

\begin{restatable}{theorem}{sparseotem}
  \label{th:sparseotem}
  Let $\tilde{\Mat{P}}$ be the optimal solution to the regularized measured cost \eqref{eq:t-sinkhorn-main}. Let $\Mat{S}$ be the support indicator matrix of $\tilde{\Mat{P}}$, defined by the general term $S_{ij} = 1$ if $\tilde{P}_{ij} > 0$ (and 0 otherwise). The following properties hold:
  \begin{enumerate}
  \item For any coordinates $(i,j)$, if $M'_{ij} < 0$ then $S_{ij} = 1$;
  \item For any coordinates $i\neq k$ and $j \neq l$, suppose we have the following configuration (Figure \ref{fig:sparse}): $M'_{ij} > 0, M'_{il} < 0, M'_{kj} < 0$. Then we have the following
    \begin{itemize}
    \item if $M'_{kl} > 0$, then $S_{ij} = 0$ or (non exclusive) $S_{kl} = 0$;
    \item if $M'_{kl} < 0$ and $S_{ij} = 1$ then necessarily
      \begin{eqnarray*}
\!\!\!\!\!\!\!\!\tilde{P}_{kl}^{1-t} \leq \frac{|M'_{kl}|}{|M'_{ij}| + |M'_{il}| + |M'_{kj}|} \cdot \max\{\tilde{P}_{ij}, \tilde{P}_{il}, \tilde{P}_{kj}\}^{1-t}.
        \end{eqnarray*}
      \end{itemize}
      \end{enumerate}
    \end{restatable}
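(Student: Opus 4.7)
The plan is to invoke Proposition~\ref{prop:equivregtem} to reduce the statement to a property of the solution of the \emph{unregularized} measured cost $\min_{\tilde{\Mat{P}} \in \tilde{U}_n(\tilde{\ve{r}}, \tilde{\ve{c}})} \innerproduct{\tilde{\Mat{P}}}{\Mat{M}'}$, and then exploit KKT conditions. I would parameterize in $\tilde{\Mat{P}}$ (so the objective is linear and the marginal constraints become $\sum_j \tilde{P}_{ij}^{2-t} = r_i$ and $\sum_i \tilde{P}_{ij}^{2-t} = c_j$), introduce multipliers $\mu_i, \nu_j$ for the co-marginal constraints and $\eta_{ij}\geq 0$ for non-negativity, and write stationarity as
\begin{equation*}
M'_{ij} \;=\; (2-t)\,\tilde{P}_{ij}^{\,1-t}(\mu_i + \nu_j) \,+\, \eta_{ij}, \qquad \eta_{ij}\tilde{P}_{ij}=0.
\end{equation*}
Because $t\in[0,1)$, whenever $\tilde{P}_{ij}=0$ we have $\tilde{P}_{ij}^{\,1-t}=0$, so stationarity collapses to $M'_{ij}=\eta_{ij}\geq 0$. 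Contrapositively, $M'_{ij}<0 \Rightarrow \tilde{P}_{ij}>0$, giving item~1 immediately.

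For item~2, I would extract the key rectangle identity obtained by evaluating the interior stationarity at the four corners. On any cell with $\tilde{P}_{ij}>0$, stationarity reads $M'_{ij}/\tilde{P}_{ij}^{\,1-t} = (2-t)(\mu_i+\nu_j)$. Summing with signs $+,-,-,+$ around a rectangle $\{i,k\}\times\{j,l\}$ cancels all $\mu,\nu$ and yields
\begin{equation*}
\frac{M'_{ij}}{\tilde{P}_{ij}^{\,1-t}} + \frac{M'_{kl}}{\tilde{P}_{kl}^{\,1-t}} \;=\; \frac{M'_{il}}{\tilde{P}_{il}^{\,1-t}} + \frac{M'_{kj}}{\tilde{P}_{kj}^{\,1-t}},
\end{equation*}
valid whenever all four entries are in the support. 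In the first subcase ($M'_{kl}>0$), item~1 already forces $\tilde{P}_{il},\tilde{P}_{kj}>0$, and if in addition $S_{ij}=S_{kl}=1$, the left-hand side is strictly positive while the right-hand side is strictly negative, a contradiction, hence $S_{ij}=0$ or $S_{kl}=0$. In the second subcase ($M'_{kl}<0$), item~1 guarantees $\tilde{P}_{il},\tilde{P}_{kj},\tilde{P}_{kl}>0$; assuming also $S_{ij}=1$, the identity rearranges to
\begin{equation*}
\frac{|M'_{kl}|}{\tilde{P}_{kl}^{\,1-t}} \;=\; \frac{|M'_{ij}|}{\tilde{P}_{ij}^{\,1-t}} + \frac{|M'_{il}|}{\tilde{P}_{il}^{\,1-t}} + \frac{|M'_{kj}|}{\tilde{P}_{kj}^{\,1-t}} \;\geq\; \frac{|M'_{ij}|+|M'_{il}|+|M'_{kj}|}{\max\{\tilde{P}_{ij},\tilde{P}_{il},\tilde{P}_{kj}\}^{\,1-t}},
\end{equation*}
where the lower bound uses that $1-t\geq 0$. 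Solving for $\tilde{P}_{kl}^{\,1-t}$ yields exactly the claimed inequality.

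The main obstacle is justifying that KKT stationarity is a necessary optimality condition despite the non-convex (in $\tilde{\Mat{P}}$) constraints. I would verify a constraint qualification by noting that the Jacobian of the active marginal constraints at an interior point takes the form $(2-t)\tilde{P}_{ij}^{\,1-t}(\alpha_i+\beta_j)$ at cell $(i,j)$; any linear dependence $\sum_i \alpha_i R_i + \sum_j \beta_j C_j = 0$ forces $\alpha_i+\beta_j=0$ on the support, and since the bipartite support graph is connected (otherwise the marginal constraints would decouple into independent subproblems, to which one applies the argument separately), this leaves only the well-known single redundancy between row and column totals. Hence LICQ holds after dropping one constraint, and KKT applies. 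A cleaner alternative, if one wants to avoid qualification arguments altogether, is to derive both items directly by constructing admissible first-order perturbations of the form $\tilde{P}_{ij}\mapsto(\tilde{P}_{ij}^{\,2-t}\pm\delta)^{1/(2-t)}$ that preserve the co-marginal constraints and computing the change in the linear objective to first order; this yields exactly the same rectangle identity and sign obstructions.
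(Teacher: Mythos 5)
Your proof is correct and, for the hardest part, takes a genuinely different route from the paper. Item~1 and the first bullet of item~2 coincide with the paper's Steps~2--3: the paper also reduces via \eqref{costregm} to a stationarity condition of the form $\tilde{P}_{ij}^{1-t}=(\Pi_{ij}-M'_{ij})/((2-t)(\nu'_i+\gamma'_j))$ and derives the positive/positive exclusion from the signs of $\nu'_i+\gamma'_j$ around the rectangle (your ``$+,-,-,+$'' cancellation is just a cleaner packaging of the paper's cyclic chain $\nu'_i<-\gamma'_j<\nu'_k<-\gamma'_l<\nu'_i$). Where you genuinely diverge is the second bullet: the paper proves the quantitative bound by a direct perturbation argument --- shifting mass $\pm\delta$ in the $(2-t)$-power coordinates around the rectangle, lower-bounding the cost decrease via the generalized-mean inequality and convexity of $z\mapsto(z^{2-t}+\delta)^{1/(2-t)}-z$, and exhibiting an admissible $\delta>0$ that contradicts optimality whenever the bound fails --- whereas you read the bound off the same rectangle identity $A+D=B+C$ used for the first bullet, then bound each term by the maximum. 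Your route is considerably shorter and in fact recovers the paper's sharper intermediate inequality (with the $|M'|$-weighted mean of $\tilde{P}_{ij},\tilde{P}_{il},\tilde{P}_{kj}$ in place of the max) via Jensen on $z\mapsto z^{-(1-t)}$; its cost is that it leans on KKT stationarity holding at the optimum of a problem with non-convex equality constraints, which is why you need the constraint-qualification discussion (connectedness of the bipartite support graph plus the single row/column redundancy). The paper's perturbation argument sidesteps that entirely, since it only needs feasibility of the perturbed point and a sign computation; note that your proposed ``cleaner alternative'' at the end is essentially the paper's Step~4 verbatim. Both arguments are valid; yours is the more economical once the qualification is granted, the paper's is the more self-contained.
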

    Theorem \ref{th:sparseotem} is illustrated in Figure \ref{fig:sparse}. Interpretations of the result in terms of transport follow: if we have a negative cost between $i,l$ and $j,k$, then ``some'' transport necessarily happens in both directions; furthermore, if the cost between $i,j$ is positive, then sparsity patterns happen:
    \begin{itemize}
    \item if the other cost $k,l$ is also positive, then we do not transport between $i,j$ or (non exclusive) $k,l$;
      \item if the other cost $k,l$ is negative, then either we do not transport between $i,j$, or the transport between $k,l$ is ``small''.
      \end{itemize}
      What is most interesting is that negative coordinates in $\Mat{M'}$ are under tight control by the user, and they enforce non-zero transport, so the flexibility of the design of $\Mat{M'}$, via tuning the strength of the regularization $\lambda$ \textit{and} the TEMs family via $t$, can allow to tightly design transport patterns. 

      \subsection{Interpretation of Matrix $\Mat{M}’$} Coordinate $(i,j)$ is $M'_{ij} = \lambda M_{ij} - (\tilde{r}_i\tilde{c}_j)^{1-t}/(1-t)$. Quite remarkably, the term $(\tilde{r}_i\tilde{c}_j)^{1-t}/(1-t)$ happens to be equivalent, up to the exponent itself, to an interaction in the gravity model if distances are constant \cite{hfGA}. If the original cost $\Mat{M}$ factors the distance, then we can get the full interaction term with the distance via its factorization. Hence, we can abstract any coordinate in $\Mat{M}'$ as:
\begin{eqnarray}
M'_{ij} & \propto & \mbox{original cost}(i,j) - \mbox{ interaction}(i,j).
\end{eqnarray}
One would then expect that OT with TEMs reflects the mixture of both terms: having a large cost wrt interaction should not encourage transport, while having a large interaction wrt cost might just encourage transport. This is, in essence, the basis of Theorem \ref{th:sparseotem}.

\begin{algorithm}[tb]
\caption{\texttt{Sinkhorn}($\Mat{K}, \ve{r}, \ve{c}$)}
\label{alg:sinkhorn}
\textbf{Input}: $\Mat{K} \in \mathbb{R}_+^{n\times n}$, $\ve{r}, \ve{c} \in \Delta_n$\\
\textbf{Output}: $\Mat{P} \in U_n(\ve{r}, \ve{c} )$
\begin{algorithmic}[1] 
\STATE $\ve{\mu} \gets \ve{1}_n$,\, $\ve{\xi} \gets \ve{1}_n$
\WHILE{not converged}
\STATE $\ve{\mu} \gets \ve{r}/(\Mat{K}\ve{\xi})$
\STATE $\ve{\xi} \gets \ve{c}/(\Mat{K}^\top\ve{\mu})$
\ENDWHILE
\STATE \textbf{return} $\diag(\ve{\mu})\,\Mat{K}\,\diag(\ve{\xi})$
\end{algorithmic}
\end{algorithm}
\begin{algorithm}[tb]
\caption{Regularized Optimal Transport with \acrotem s}
\label{alg:t-OT}
\textbf{Input}: Cost matrix $\Mat{M} \in \mathbb{R}_+^{n\times n}$, $\ve{r}, \ve{c} \in \Delta_n$, Regularizer $\lambda$\\
\textbf{Output}: $\tilde{\Mat{P}} \in \tilde{U}_n(\tilde{\ve{r}}, \tilde{\ve{c}} )$
\begin{algorithmic}[1] 
\STATE $\tilde{\Mat{K}} \gets \tilde{\Mat{K}}_e \text{\, or \,} \tilde{\Mat{K}}_m$
\STATE $\Mat{P} \gets \texttt{Sinkhorn}(\Mat{K}, \ve{r}, \ve{c})$ \,\,\, $\big(\Mat{K} = \tilde{\Mat{K}}^{1/t^*}\big)$
\STATE \textbf{return} $\Mat{P}^{t^*}$
\end{algorithmic}
\end{algorithm}

\section{Algorithms for Regularized OT With \acrotem s}

We show the analytic form of the solutions to \eqref{eq:t-sinkhorn-main-e} and \eqref{eq:t-sinkhorn-main} and explain how to solve the corresponding problems using an iterative procedure based on alternating Bregman projections. We then show the reduction of the iterative process to the Sinkhorn algorithm via a simple reparameterization.

\subsection{Regularized Expected Cost} The following theorem characterizes the form of the solution, i.e., the transport plan for the regularized expected cost OT with TEMs.
\begin{restatable}{theorem}{esol}\label{th:esol}
A solution of \eqref{eq:t-sinkhorn-main-e} can be written in the form
\begin{equation}
\label{eq:ot_sol_e_ex}
\tilde{P}_{ij} = \frac{\tilde{r}_i\tilde{c}_j}{\exp_t\left(\nu_i + \gamma_j + \lambda M_{ij}\right)}, 
\end{equation}
where $\nu_i$ and $\gamma_j$ are chosen s.t. $\tilde{\Mat{P}} \in \tilde{U}_n(\tilde{\ve{r}}, \tilde{\ve{c}})$.
\end{restatable}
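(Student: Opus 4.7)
The plan is to characterize the minimizer via KKT conditions, since \eqref{eq:t-sinkhorn-main-e} is a convex program with linear (in $P$) equality constraints defining $\tilde{U}_n(\tilde{\ve{r}},\tilde{\ve{c}})$. I would form the Lagrangian
\begin{equation*}
L(\tilde{\Mat{P}},\ve{\alpha},\ve{\beta}) = \innerproduct{\Mat{P}}{\Mat{M}} + \tfrac{1}{\lambda}D_t(\tilde{\Mat{P}}\|\tilde{\ve{r}}\tilde{\ve{c}}^\top) - \sum_i \alpha_i\Big(\sum_j P_{ij} - r_i\Big) - \sum_j \beta_j\Big(\sum_i P_{ij} - c_j\Big),
\end{equation*}
under the convention $P_{ij} = \tilde{P}_{ij}^{2-t}$ set in the paper's Remark. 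Strict convexity of $D_t$ together with linearity of the marginal constraints guarantees a unique minimizer satisfying the KKT conditions.

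Differentiating with respect to $\tilde{P}_{ij}$, the cost and constraint terms both contribute via the chain rule $\partial P_{ij}/\partial \tilde{P}_{ij} = (2-t)\tilde{P}_{ij}^{1-t}$, while the Bregman-divergence term contributes $(1/\lambda)(\log_t \tilde{P}_{ij} - \log_t(\tilde{r}_i\tilde{c}_j))$, since $D_t$ is the Bregman divergence with generator $\phi_t$ satisfying $\phi_t'(z) = \log_t z$. Setting the gradient to zero yields
\begin{equation*}
(2-t)\tilde{P}_{ij}^{1-t}(M_{ij} - \alpha_i - \beta_j) + \tfrac{1}{\lambda}\big(\log_t \tilde{P}_{ij} - \log_t(\tilde{r}_i\tilde{c}_j)\big) = 0 .
\end{equation*}

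The next step is inverting this relation. Substituting $\log_t z = (z^{1-t}-1)/(1-t)$ and clearing the $(1-t)$ denominator recasts the stationarity condition as
\begin{equation*}
(\tilde{r}_i\tilde{c}_j)^{1-t} = \tilde{P}_{ij}^{1-t}\big[1 + (1-t)(2-t)\lambda(M_{ij} - \alpha_i - \beta_j)\big] ,
\end{equation*}
whose bracketed factor is precisely $\exp_t\!\big((2-t)\lambda(M_{ij} - \alpha_i - \beta_j)\big)^{1-t}$ by the identity $\exp_t(z)^{1-t} = 1 + (1-t)z$. Taking the $1/(1-t)$-th power of both sides (both positive for $t\in[0,1)$) and setting $\nu_i \defeq -(2-t)\lambda\,\alpha_i$, $\gamma_j \defeq -(2-t)\lambda\,\beta_j$ (absorbing the $(2-t)$ rescaling on the $M_{ij}$ term into the normalization of $\lambda$ in the theorem statement) delivers the asserted form $\tilde{P}_{ij} = \tilde{r}_i\tilde{c}_j/\exp_t(\nu_i + \gamma_j + \lambda M_{ij})$. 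Existence of multipliers rendering $\tilde{\Mat{P}}\in\tilde{U}_n(\tilde{\ve{r}},\tilde{\ve{c}})$ is inherited from existence of a minimizer on this compact, strictly convex problem. The main technical obstacle is the careful bookkeeping of the $(2-t)$ factor from the chain rule through $P = \tilde{P}^{2-t}$ and the passage between the $\log_t$ and $\exp_t$ forms, which is what reveals the separable $\nu_i + \gamma_j$ structure inside $\exp_t$.
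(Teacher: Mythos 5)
Your derivation follows essentially the same route as the paper's: form the Lagrangian, set the derivative with respect to $\tilde{P}_{ij}$ to zero using the chain rule through $P_{ij}=\tilde{P}_{ij}^{2-t}$ and $\phi_t'(z)=\log_t z$, and invert via the identity $\exp_t(z)^{1-t}=1+(1-t)z$ after rescaling the multipliers by $(2-t)\lambda$ (the paper likewise absorbs the stray $(2-t)$ factor by ``ignoring constants''). The only substantive difference is that you drop the non-negativity multipliers $\Pi_{ij}$ and the convexity/uniqueness claim is shaky (the feasible set is non-convex in $\tilde{\Mat{P}}$ for $t\neq 1$, as the paper itself notes), but neither matters here: the stationarity equation forces the bracket $1+(1-t)(\nu_i+\gamma_j+\lambda M_{ij})$ to be positive and $\tilde{P}_{ij}>0$ whenever $\tilde{r}_i\tilde{c}_j>0$, which is exactly what the paper's fuller KKT treatment confirms before arriving at the same formula.
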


\subsection{Regularized Measured Cost} The solution of the regularized measured cost OT with TEMs is characterized next.
\begin{restatable}{theorem}{msol}\label{th:msol}
A solution of \eqref{eq:t-sinkhorn-main} can be written in the form
\begin{equation}
\label{eq:ot_sol_m_ex}
\tilde{P}_{ij} =  \exp_t\left((\log_t (\tilde{r}_i\tilde{c}_j)-\lambda\, M_{ij}) \ominus_t (\nu_i + \gamma_j)\right)\, ,
\end{equation}
where $a\ominus_t b \defeq (a-b)/(1+(1-t)b)$ and $\nu_i$ and $\gamma_j$ are chosen s.t. $\tilde{\Mat{P}} \in \tilde{U}_n(\tilde{\ve{r}}, \tilde{\ve{c}})$.
\end{restatable}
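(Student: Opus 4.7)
The plan is to write out the KKT conditions for the constrained program \eqref{eq:t-sinkhorn-main} and then algebraically rearrange them until the tempered subtraction $\ominus_t$ emerges naturally. Let $\alpha_i$ and $\beta_j$ denote the Lagrange multipliers associated with the row- and column-marginal constraints $\sum_j \tilde{P}_{ij}^{2-t} = \tilde{r}_i^{2-t}$ and $\sum_i \tilde{P}_{ij}^{2-t} = \tilde{c}_j^{2-t}$ respectively (these are the $1/t^*$-power constraints that define $\tilde{U}_n(\tilde{\ve r},\tilde{\ve c})$). The objective is convex in $\tilde{\Mat P}$ for $t \in [0,1)$ since $D_t$ is a Bregman divergence with convex generator $\phi_t$, and the feasible set, although not convex, is $1/t^*$-power convex (Proposition \ref{prop:alpha}); writing everything in terms of $\Mat P = \tilde{\Mat P}^{1/t^*}$ makes the feasible set affine, so first-order conditions are necessary and sufficient.

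Next, I would differentiate the Lagrangian with respect to $\tilde{P}_{ij}$. Two ingredients are crucial: $(i)$ the Bregman-divergence identity $\partial D_t(\tilde{\Mat P}\|\tilde{\ve r}\tilde{\ve c}^\top)/\partial \tilde{P}_{ij} = \phi_t'(\tilde{P}_{ij}) - \phi_t'(\tilde{r}_i \tilde{c}_j) = \log_t \tilde{P}_{ij} - \log_t(\tilde{r}_i \tilde{c}_j)$, and $(ii)$ the constraint derivative, which introduces an extra factor $(2-t)\tilde{P}_{ij}^{1-t}$ because of the power $2-t$ in the marginal equations. Collecting terms and introducing the rescaled multipliers $\nu_i \defeq -\lambda(2-t)\alpha_i$ and $\gamma_j \defeq -\lambda(2-t)\beta_j$, the stationarity condition becomes
\begin{equation*}
\log_t \tilde{P}_{ij} \;+\; (\nu_i + \gamma_j)\,\tilde{P}_{ij}^{1-t} \;=\; \log_t(\tilde{r}_i \tilde{c}_j) \;-\; \lambda M_{ij}.
\end{equation*}

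The key manipulation, which I expect to be the only slightly subtle step, is to eliminate the mixed appearance of $\log_t \tilde{P}_{ij}$ and $\tilde{P}_{ij}^{1-t}$ using the elementary identity $\tilde{P}_{ij}^{1-t} = 1 + (1-t)\log_t \tilde{P}_{ij}$, which follows directly from the definition of $\log_t$. Substituting this into the displayed equation and grouping terms in $\log_t \tilde{P}_{ij}$ yields
\begin{equation*}
\bigl(1 + (1-t)(\nu_i+\gamma_j)\bigr)\log_t \tilde{P}_{ij} \;=\; \log_t(\tilde{r}_i\tilde{c}_j) - \lambda M_{ij} - (\nu_i+\gamma_j),
\end{equation*}
and dividing through gives exactly $\log_t \tilde{P}_{ij} = \bigl(\log_t(\tilde{r}_i\tilde{c}_j)-\lambda M_{ij}\bigr) \ominus_t (\nu_i+\gamma_j)$ by the definition of $\ominus_t$ given in the statement. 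Applying $\exp_t$ to both sides produces the claimed closed form \eqref{eq:ot_sol_m_ex}.

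Finally, I would remark that $\nu_i,\gamma_j$ are chosen so that the marginal constraints $\tilde{\Mat P} \in \tilde{U}_n(\tilde{\ve r},\tilde{\ve c})$ hold; existence of such multipliers follows from strong duality, since the problem is a convex program after the $\tilde{\Mat P} \mapsto \Mat P$ reparameterization and the independence plan $\tilde{\ve r}\tilde{\ve c}^\top$ lies in the relative interior of the feasible set. The hardest part is purely bookkeeping: making sure the factor $(2-t)$ coming from the marginal constraints is absorbed cleanly into $\nu_i,\gamma_j$ and then recognising the $\ominus_t$ operation in the final algebraic rearrangement.
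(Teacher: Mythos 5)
Your route is the same as the paper's: form the Lagrangian of \eqref{eq:t-sinkhorn-main}, set the derivative with respect to $\tilde{P}_{ij}$ to zero, use $\tilde{P}_{ij}^{1-t}=1+(1-t)\log_t\tilde{P}_{ij}$ to collect terms, and recognize $\ominus_t$ after dividing by $1+(1-t)(\nu_i+\gamma_j)$; the $(2-t)$ bookkeeping and the rescaling of the multipliers match the paper's. However, there is a genuine gap: you drop the non-negativity constraints $\tilde{P}_{ij}\geq 0$ from the Lagrangian. The paper carries an extra multiplier matrix $\Mat{\Pi}\in\mathbb{R}_+^{n\times n}$, and the complementary-slackness step ($\Pi_{ij}>0\Rightarrow\tilde{P}_{ij}=0$) is exactly what justifies the clamp $[\cdot]_+$ hidden inside $\exp_t$ in \eqref{eq:ot_sol_m_ex}. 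Without it, your stationarity equation $\log_t\tilde{P}_{ij}=(\log_t(\tilde{r}_i\tilde{c}_j)-\lambda M_{ij})\ominus_t(\nu_i+\gamma_j)$ has no solution with $\tilde{P}_{ij}>0$ whenever the right-hand side falls below $-1/(1-t)$ (the infimum of the range of $\log_t$ on $(0,\infty)$), and applying $\exp_t$ to both sides is not a valid inversion there. Since the zero entries are precisely the point of this formulation (Theorem \ref{th:sparseotem} is built on them), this step cannot be skipped.

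A secondary issue: your appeal to convexity and strong duality is not sound. The objective is convex in $\tilde{\Mat{P}}$ but the feasible set $\tilde{U}_n(\tilde{\ve{r}},\tilde{\ve{c}})$ is not convex in $\tilde{\Mat{P}}$ (only $1/t^*$-power convex); and after the reparameterization $\Mat{P}=\tilde{\Mat{P}}^{1/t^*}$ the constraints become affine but the term $\innerproduct{\Mat{P}^{t^*}}{\Mat{M}}$ and the divergence do not remain jointly convex in $\Mat{P}$, so first-order conditions are not sufficient. The paper deliberately claims only that \emph{a} solution has the stated form, arguing from necessary KKT conditions over a non-convex set; you should do the same and delete the sufficiency and strong-duality claims.
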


\subsection{Approximation via Alternating Projections}
The Lagrange multipliers $\nu_i$ and $\gamma_j$ in the solutions~\eqref{eq:ot_sol_e_ex} and \eqref{eq:ot_sol_m_ex} no longer act as separate scaling factors for the rows and the columns because $\exp_t(a+b) \neq \exp_t(a) \cdot \exp_t(b)$ (yet, an efficient approximation is possible, \textit{Cf} below). Consequently,  the solutions are not \emph{diagonally equivalent} to their corresponding seed matrices~\eqref{eq:seed-e} and \eqref{eq:seed}. However, keeping just one marginal constraint leads to a solution that bears the analytical shape of Sinkhorn balancing.

Letting $\tilde{\Mat{P}}_{\!\circ} \in \mathbb{R}_+^{n\times n}$, the \emph{row projection} and \emph{column projection} to $\tilde{U}_n(\tilde{\ve{r}}, \tilde{\ve{c}})$ correspond to
\begin{align}
\min_{\tilde{\Mat{P}}:\, \Mat{P}\ve{1}_n = \ve{r}} & D_t(\tilde{\Mat{P}} \| \tilde{\Mat{P}}_{\!\circ})\,, \tag{row projection}\\
\min_{\tilde{\Mat{P}}:\, \Mat{P}^\top\ve{1}_n = \ve{c}} & D_t(\tilde{\Mat{P}} \| \tilde{\Mat{P}}_{\!\circ})\,, \tag{column projection}
\end{align}
in which, we use the shorthand notation $\Mat{P} = \tilde{\Mat{P}}^{1/t^*}$ (similarly for $\ve{r}$ and $\ve{c}$).
\begin{restatable}{theorem}{iters}
Given $\tilde{\Mat{P}}_{\!\circ} \in \mathbb{R}_+^{n\times n}$, the row and column projections to $\tilde{U}_n(\tilde{\ve{r}}, \tilde{\ve{c}})$ can be performed respectively via
\begin{align}
\label{eq:row-proj}
  \tilde{\Mat{P}} & = \text{\emph{diag}}\big(\tilde{\ve{r}} / \tilde{\ve{\mu}}\big)\,\tilde{\Mat{P}}_{\!\circ}\,,\;\text{ where }\, \tilde{\ve{\mu}} = \big( \tilde{\Mat{P}}^{1/t^*\top}_{\!\circ}\ve{1}_n\big)^{t^*}, \\
\label{eq:col-proj}
\tilde{\Mat{P}} & = \tilde{\Mat{P}}_{\!\circ}\,\text{\emph{diag}}\big(\tilde{\ve{c}} / \tilde{\ve{\xi}}\big)\,,\; \text{ where }\, \tilde{\ve{\xi}} = \big(\tilde{\Mat{P}}^{1/t^*\top}_{\!\circ}\ve{1}_n\big)^{t^*}.
\end{align}
\end{restatable}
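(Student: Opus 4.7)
The plan is to treat the row projection by Lagrange multipliers, with the column projection following by a symmetric argument. First, since $D_t(\cdot \| \tilde{\Mat{P}}_{\!\circ})$ is a Bregman divergence with generator $\phi_t(z)=z\log_t z-\log_{t-1}z$ satisfying $\phi_t'(z)=\log_t z$, I have
\[
\partial_{\tilde{P}_{ij}}D_t(\tilde{\Mat{P}}\|\tilde{\Mat{P}}_{\!\circ}) = \log_t \tilde{P}_{ij}-\log_t \tilde{P}_{\circ,ij}.
\]
The row constraint $\Mat{P}\ve{1}_n=\ve{r}$, rewritten for the tilde-matrix, is $\sum_j \tilde{P}_{ij}^{1/t^*}=r_i$ (with $1/t^*=2-t$). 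Attaching multipliers $\nu_i$, the KKT stationarity condition at $\tilde{P}_{ij}$ becomes
\[
\log_t \tilde{P}_{ij}-\log_t \tilde{P}_{\circ,ij}+(2-t)\,\nu_i\,\tilde{P}_{ij}^{1-t}=0.
\]

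Second, I expand $\log_t z=(z^{1-t}-1)/(1-t)$, which turns the stationarity condition into a linear equation in $\tilde{P}_{ij}^{1-t}$ and $\tilde{P}_{\circ,ij}^{1-t}$; solving gives
\[
\tilde{P}_{ij}^{1-t}=\tilde{P}_{\circ,ij}^{1-t}\bigl/\bigl(1+(1-t)(2-t)\nu_i\bigr),
\]
so the optimum is row-multiplicative, $\tilde{P}_{ij}=\alpha_i\,\tilde{P}_{\circ,ij}$, with a positive scalar $\alpha_i$ independent of $j$. Substituting back into the row-sum constraint gives $\alpha_i^{1/t^*}\bigl(\tilde{\Mat{P}}_{\!\circ}^{1/t^*}\ve{1}_n\bigr)_i=r_i$; raising to the power $t^*$ and using $\tilde{r}_i=r_i^{t^*}$ then produces $\alpha_i=\tilde{r}_i/\tilde{\mu}_i$ with $\tilde{\mu}_i:=\bigl(\tilde{\Mat{P}}_{\!\circ}^{1/t^*}\ve{1}_n\bigr)_i^{t^*}$, which is exactly the claimed matrix form $\tilde{\Mat{P}}=\diag(\tilde{\ve{r}}/\tilde{\ve{\mu}})\,\tilde{\Mat{P}}_{\!\circ}$. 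The column projection then follows verbatim under the swap rows$\leftrightarrow$columns, giving $\tilde{\Mat{P}}=\tilde{\Mat{P}}_{\!\circ}\diag(\tilde{\ve{c}}/\tilde{\ve{\xi}})$ with the stated $\tilde{\ve{\xi}}$.

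The main point of care is that the feasible set is not convex in $\tilde{\Mat{P}}$ for $t<1$ — it is a level set of the strictly convex map $\tilde{P}\mapsto \tilde{P}^{1/t^*}$ — so a KKT stationary point does not automatically give the global minimum. I close this by noting that the problem is separable across rows; for each row the objective is continuous and coercive on the compact manifold $\{\tilde{P}_{i\cdot}\in\mathbb{R}_{\geq 0}^n:\sum_j \tilde{P}_{ij}^{1/t^*}=r_i\}$, so a minimum exists and must satisfy KKT. The multiplicative candidate above is the unique positive scaling of the row of $\tilde{\Mat{P}}_{\!\circ}$ lying on this manifold (the defining equation for $\alpha_i$ is strictly monotone in $\alpha_i$), hence it is the unique KKT point and therefore the global minimizer. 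The column case proceeds identically.
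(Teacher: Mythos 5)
Your proof follows the same route as the paper's: form the Lagrangian of the row projection, use $\phi_t'=\log_t$ to reduce stationarity to $\log_t \tilde{P}_{ij}-\log_t \tilde{P}_{\!\circ ij}+(2-t)\nu_i\tilde{P}_{ij}^{1-t}=0$, expand $\log_t$ to see that the optimum is a per-row positive scaling of $\tilde{\Mat{P}}_{\!\circ}$, and determine the scale from the marginal constraint, with the column case by symmetry. Your closing remark on why the stationary point is in fact the global minimizer despite the non-convexity of the feasible set is a small but welcome addition that the paper's proof leaves implicit.
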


It is imperative to understand the set of solutions of the alternating Bregman projections are of the form $\tilde{\Mat{P}} = \diag(\ve{\nu})\tilde{\Mat{P}}_{\!\circ} \diag(\ve{\gamma})$, whose analytical shape is different from that required by Theorems \ref{th:esol} and \ref{th:msol}. The primary reason for the solution being an approximation is the fact that the solution set by definition is non-convex for $t\neq 1$. We empirically evaluate the quality of this approximation. 

\begin{figure*}[t!]
\begin{center}
    \subfigure{\includegraphics[width=0.27\linewidth]{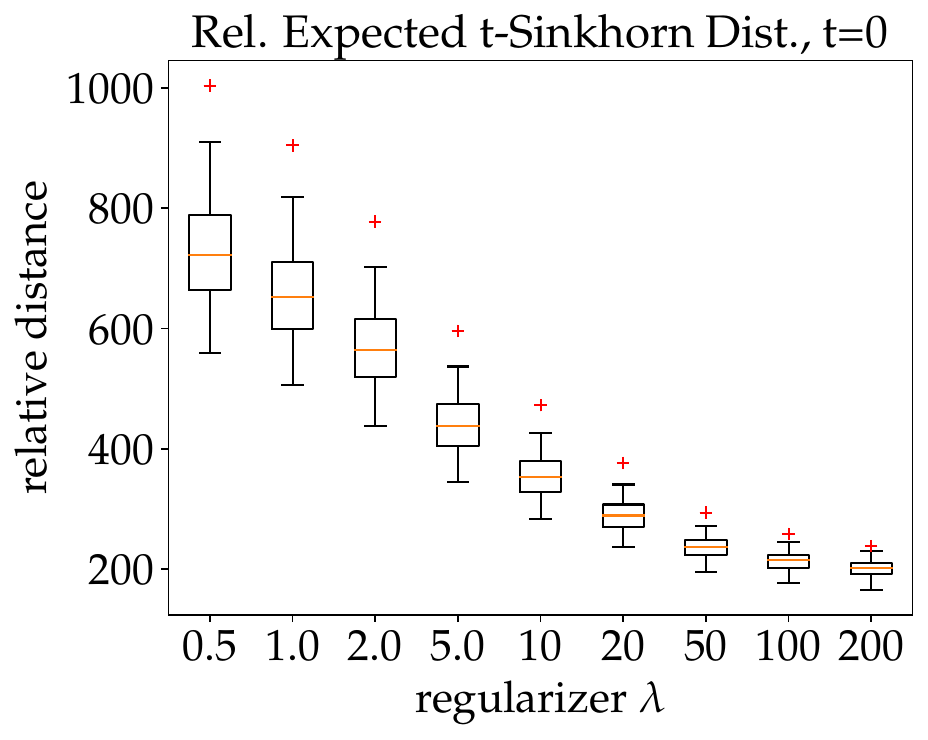}}
    \subfigure{\includegraphics[width=0.262\linewidth]{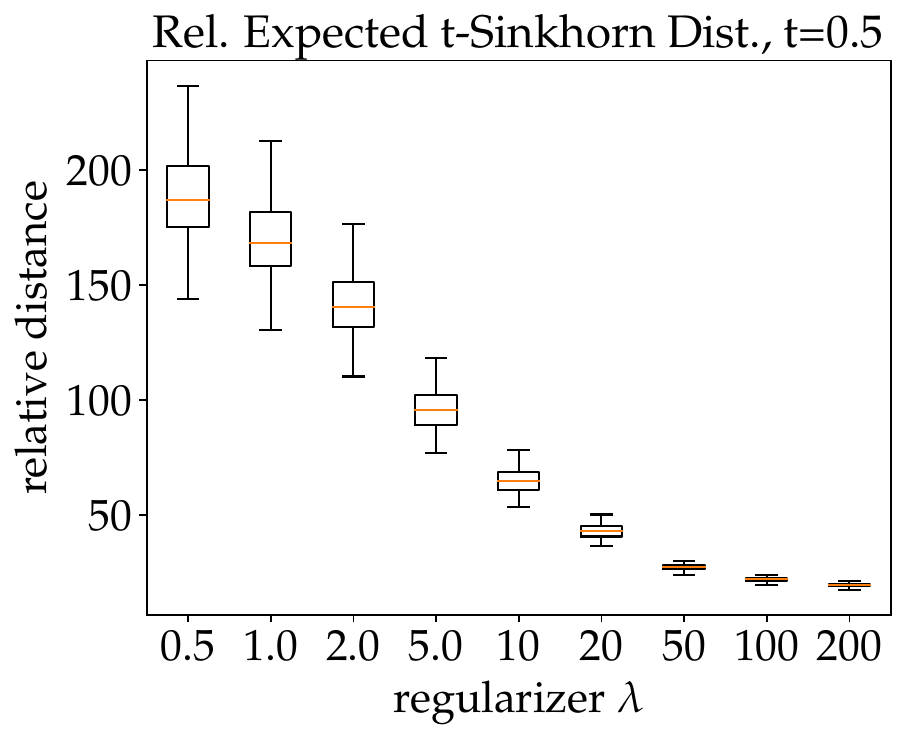}}
    \subfigure{\includegraphics[width=0.256\linewidth]{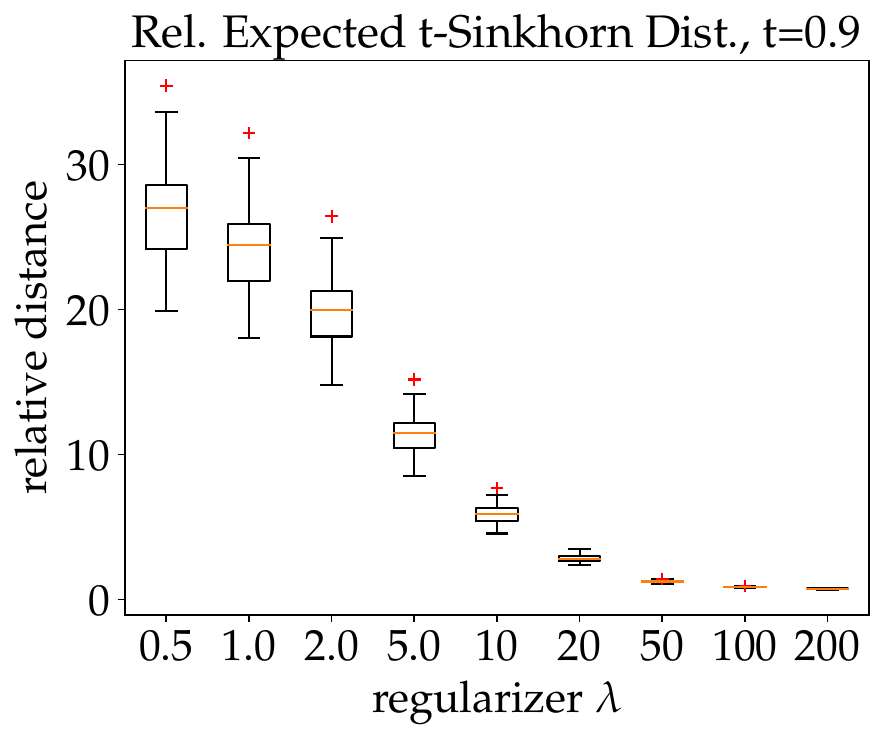}}
    \caption{The expected $t$-Sinkhorn distance relative to the Sinkhorn distance for different values of $t$. As $t\rightarrow 1$, the approximation error due to solving the unconstrained problem via alternating Bregman projections becomes smaller and the expected $t$-Sinkhorn distance converges to the Sinkhorn distance when $\lambda \rightarrow \infty$.}
    \label{fig:expected-dist}
    \end{center}
\end{figure*}
\begin{figure*}[t!]

\begin{center}
    \subfigure[]{\includegraphics[width=0.235\linewidth]{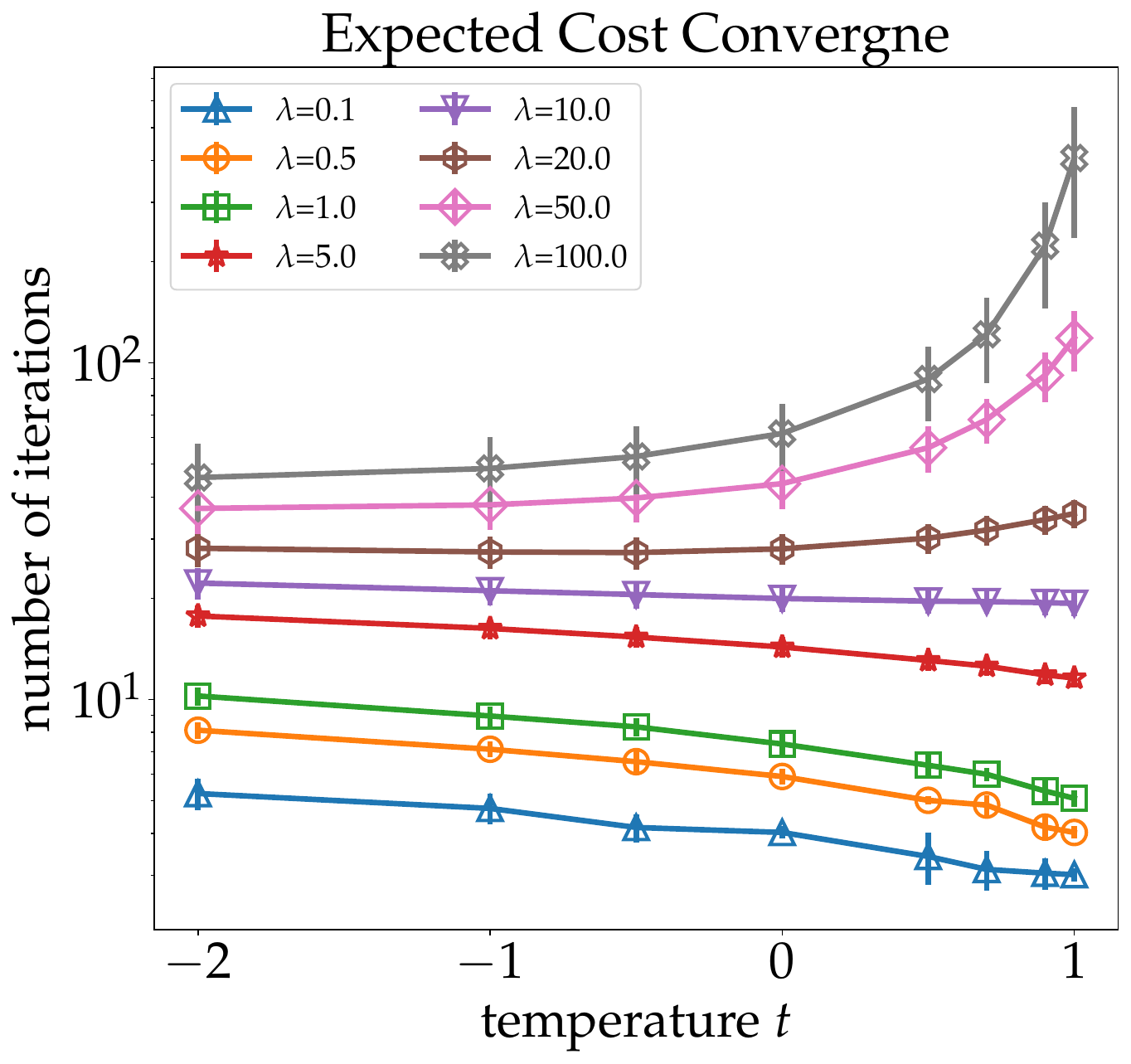}}\hspace{0.5cm}
    \subfigure[]{\includegraphics[width=0.235\linewidth]{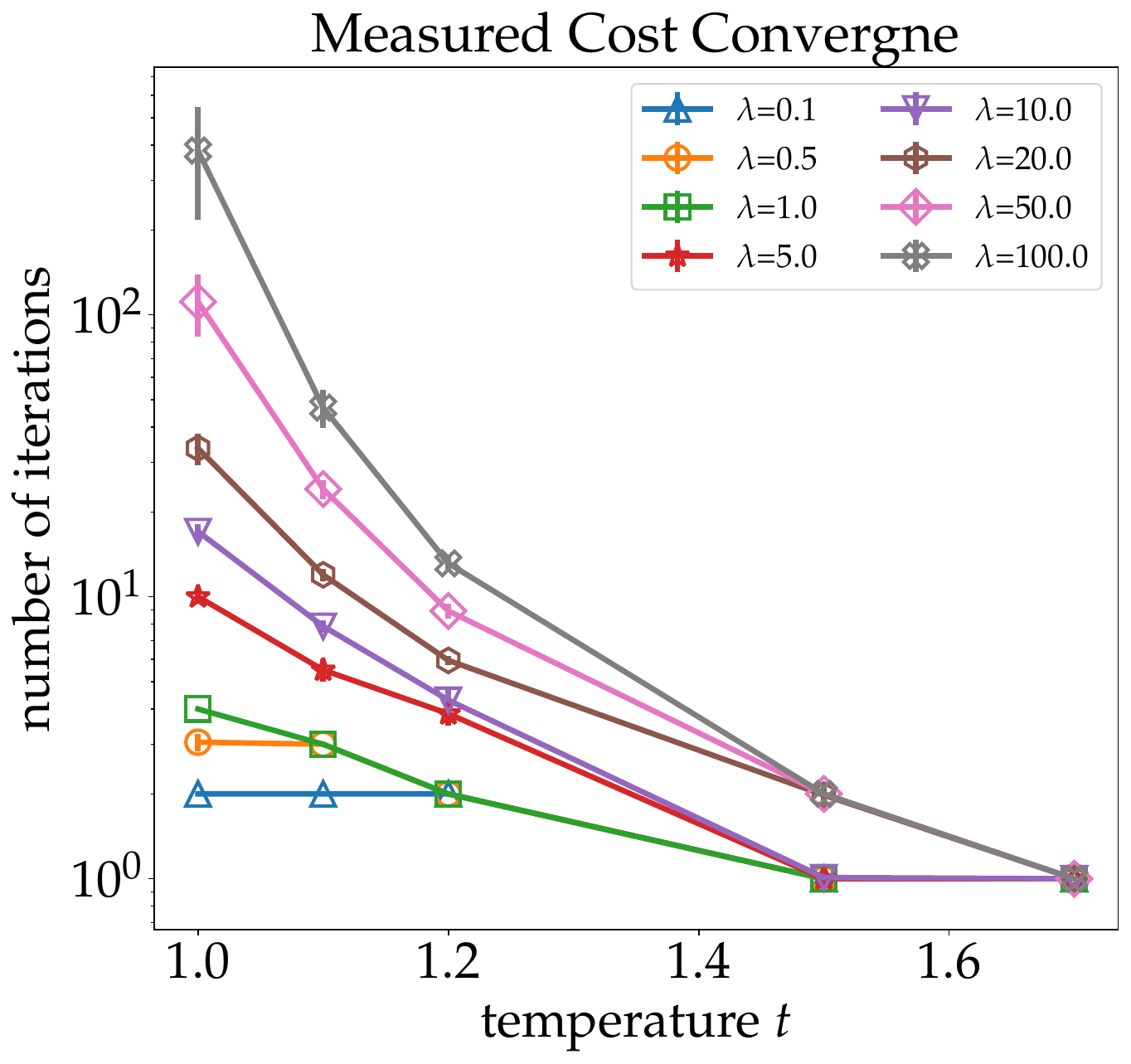}}
    \subfigure[]{\includegraphics[width=0.235\linewidth]{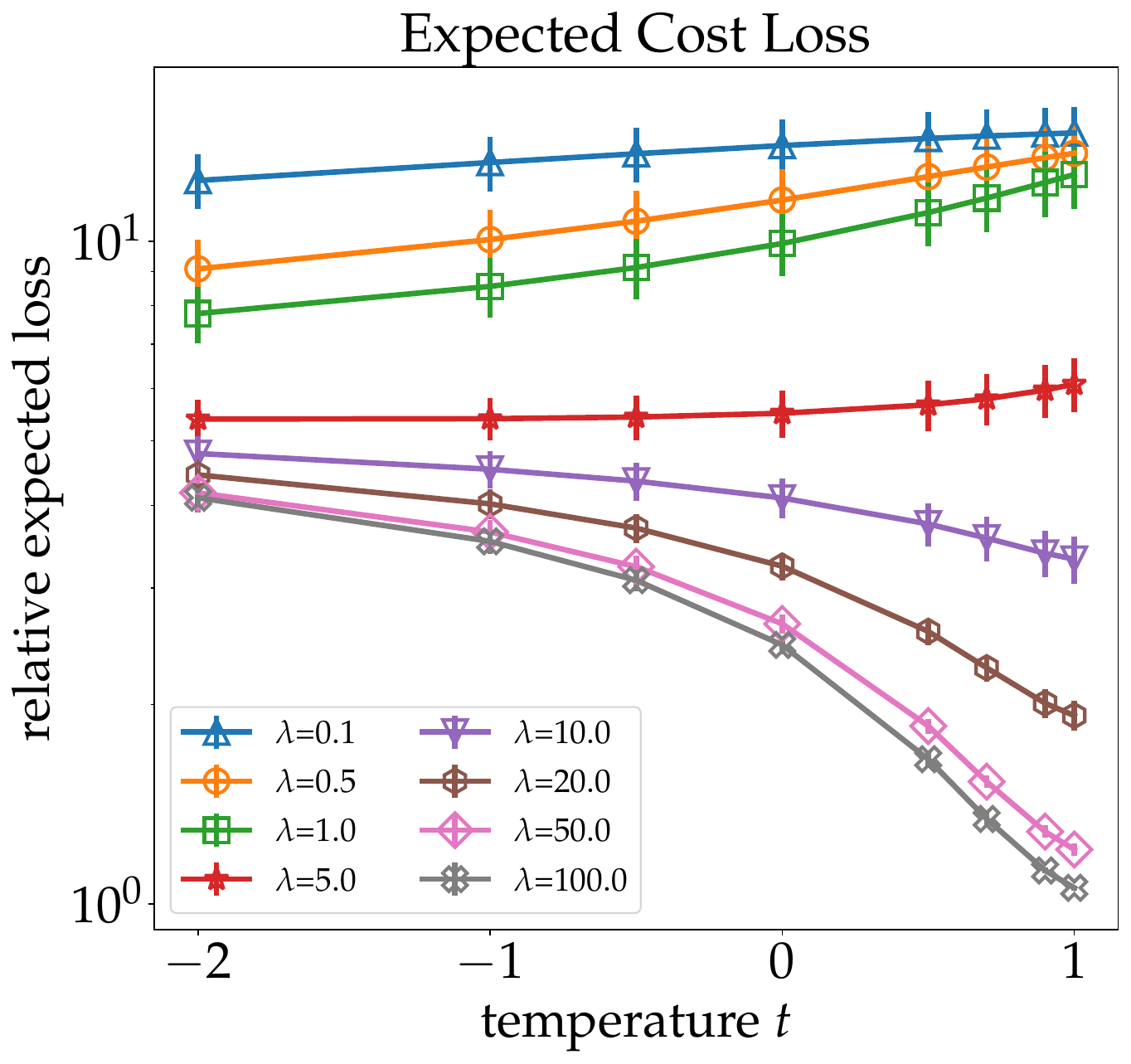}}
    \subfigure[]{\includegraphics[width=0.235\linewidth]{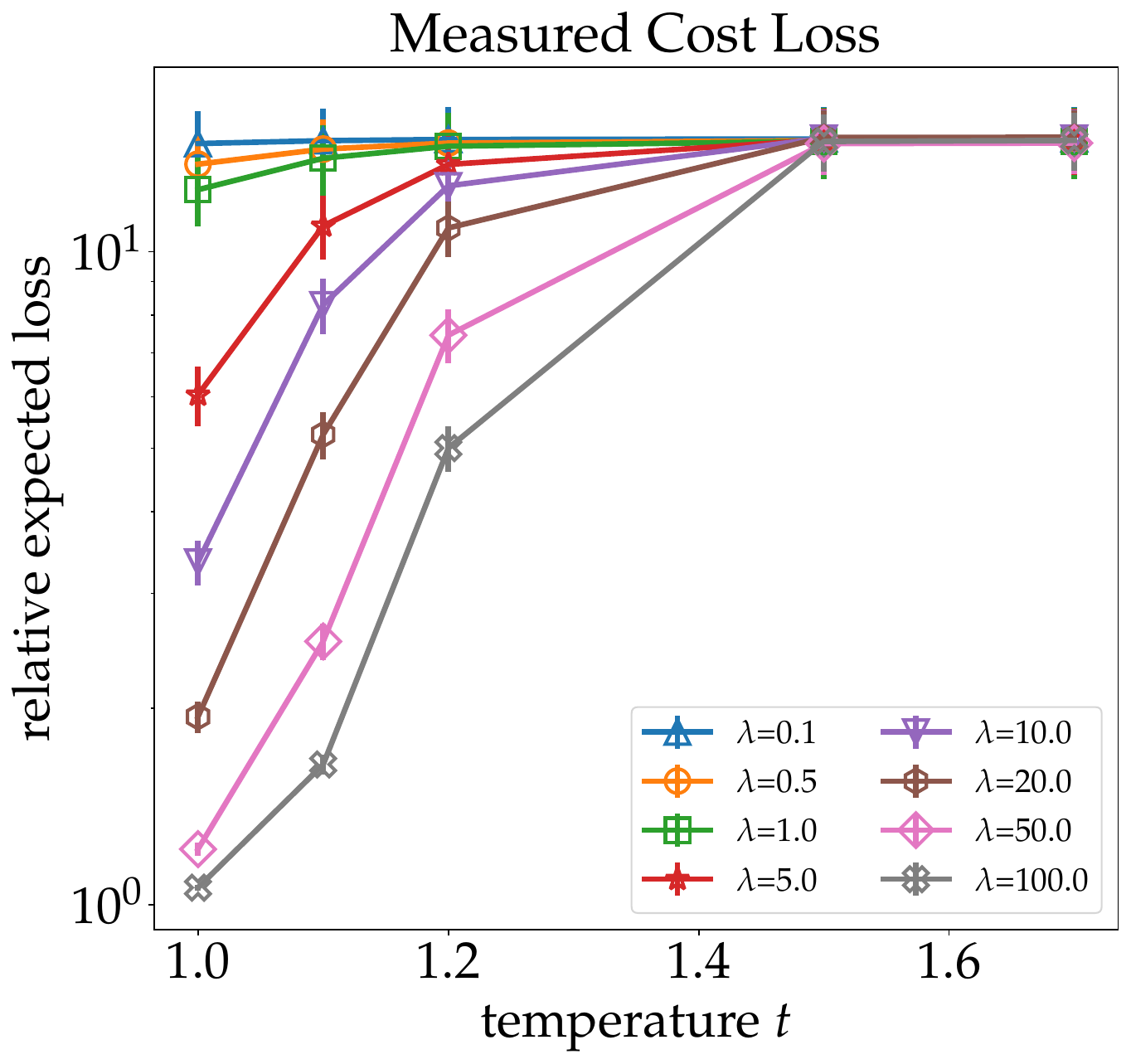}}
    \caption{Number of iterations to converge for (a) expected and (b) measured cost OT for different values of $\lambda$. Relative (to the OT solution) expected cost for the two cases, respectively, shown in (c) and (d).}
    \label{fig:convergence}
    \end{center}
\end{figure*}

\subsection{Sinkhorn Balancing for Approximating \eqref{eq:ot_sol_e_ex}\,\&\,\eqref{eq:ot_sol_m_ex}}

\subsection{Regularized Expected Cost} 
We have, in general, the possible simplification
\begin{eqnarray*}
\tilde{P}_{ij} = \frac{\tilde{r}_i\tilde{c}_j}{\exp_t(\nu_i) \otimes_t \exp_t(\sfrac{1}{t^*}\lambda M_{ij}) \otimes_t \exp_t \gamma_j}\, .
  \end{eqnarray*}
Conditions for these simplifications to be valid include $t$ close enough to $1$. We can define an \emph{expected seed matrix} for the problem~\eqref{eq:t-sinkhorn-main-e} as
\begin{equation} 
\label{eq:seed-e}
\tilde{\Mat{K}}_e \defeq  \frac{1}{\exp_t\big(\sfrac{1}{t^*}\lambda\,\Mat{M}\big)} = \exp_t\big(\ominus_t \sfrac{1}{t^*}\lambda\,\Mat{M}\big)\,,
\end{equation}
where $\ominus_t a \defeq \frac{- a}{1+(1-t) a}$ (and $\lim_{t\rightarrow 1} \ominus_t a = -a$), and the solution \eqref{eq:ot_sol_e_ex} can this time be approximated through a diagonally equivalent matrix of $\tilde{\Mat{K}}_e$ (See the preceding section with $\tilde{\Mat{P}}_{\!\circ} \defeq \tilde{\Mat{K}}_e$).

\subsection{Regularized Measured Cost} We can also simplify \eqref{eq:ot_sol_m_ex} as
\begin{equation}
\label{eq:ot_sol}
\tilde{P}_{ij} =  \frac{\exp_t\big(\log_t (\tilde{r}_i\tilde{c}_j)-\lambda M_{ij}\big)}{\exp_t(\nu_i) \otimes_t \exp_t(\gamma_j)}\, 
\end{equation}
(see the appendix) where $a \otimes_t b \defeq [a^{1-t} + b^{1-t} - 1]^{\frac{1}{1-t}}_+$ (and $\lim_{t\rightarrow 1} a \otimes_t b = a \cdot b$). The regularizer in~\eqref{eq:t-sinkhorn-main} is the Bregman divergence to the independence table, rather than the tempered entropy function $\phi_t(\tilde{\ve{P}})$; the reason being that cross terms in the numerator of the solution~\eqref{eq:ot_sol} can no longer be combined with the denominator, primarily because $\exp_t(\log_t (a) - b) \neq a\,\exp_t(-b)$ for $t \neq 1$. Furthermore, due to the normalization in~\eqref{eq:ot_sol}, the solution is not a diagonally equivalent scaling of the \emph{measured seed matrix}
\begin{equation}
\label{eq:seed}
\tilde{\Mat{K}}_m \defeq \exp_t\big(\log_t(\tilde{\ve{r}}\tilde{\ve{c}}^\top) - \lambda\,\Mat{M}\big)\,,
\end{equation}
but it can be approximated by a diagonally equivalent matrix of $\tilde{\Mat{K}}_m$ (See the preceding section with $\tilde{\Mat{P}}_{\!\circ} \defeq \tilde{\Mat{K}}_m$). In terms of sparsity patterns, the simplification \eqref{eq:ot_sol} has the direct effect of constraining the transportation plan to the coordinates $\log_t(\tilde{r}_i\tilde{c}_j) - \lambda M_{ij} > -1/(1-t)$ (otherwise, $(\tilde{K}_m)_{ij} = 0$). Remark the link with $\Mat{M}'$ in \eqref{costregm}: 
$\tilde{\Mat{K}}_m = \left[-\Mat{M}'\right]_+. $
So, the simplification \eqref{eq:ot_sol} prevents coordinates $\log_t(\tilde{r}_i\tilde{c}_j) - \lambda M_{ij}$ too small so that they are $<-1/(1-t)$ to yield non-zero transport, as would Theorem \ref{th:sparseotem} authorize. This is not an issue because, as the theorem shows, only a subset of such coordinates would yield non-zero transport anyway, and the approximation brings the benefit of being in position to design in a tighter way the desired sparsity patterns. One needs to make sure that the support of $\tilde{\Mat{K}}_m$ is big enough to allow for feasible solutions --- which is not also a real issue, granted that any optimal solution to the unregularized expected cost is so sparse that it has at most $2n-1$ non-zero values \cite{pcCO}. Both cases~\eqref{eq:seed-e} and~\eqref{eq:seed} reduce to the entropic regularized seed $\Mat{K} = \exp(-\lambda \Mat{M})$ (up to a diagonal scaling) when $t \rightarrow 1$. We then get the general approach to approximating \eqref{eq:ot_sol_e_ex} and \eqref{eq:ot_sol_m_ex}, which consists in a \textit{reduction} to Sinkhorn balancing with specific initializations.

\subsection{Solution by Reduction to Sinkhorn Balancing} 

It can be simply verified that the projection steps in~\eqref{eq:row-proj} and~\eqref{eq:col-proj} can be written in terms of the transport
polytope of $\ve{r}$ and $\ve{c}$, when working directly with the transport plan $\Mat{P} = \tilde{\Mat{P}}^{1/t^*} \in U_n(\ve{r}, \ve{c})$. Notably, the steps are identical to the standard Sinkhorn's iterations (i.e., scaling of the rows and columns), which can be computed efficiently via Algorithm~\ref{alg:sinkhorn}. The main alterations to carry out the iterations are: i) form the seed matrix $\tilde{\Mat{K}}$ via~\eqref{eq:seed-e} or~\eqref{eq:seed}, ii) apply Sinkhorn's iterations to $\tilde{\Mat{K}}^{1/t^*}$, iii) map the solution back to the co-polyhedral by computing its $t^*$-th power. See Algorithm~\ref{alg:t-OT} for the steps.


\subsection{Sparsity of Approximate Solutions}
Although the sparsity result of Theorem \ref{th:sparseotem} is for the closed-form solution of the regularized OT plan, the approximate solutions via Sinkhorn may result in a sparse solution for an appropriate choice of $t$ and for sufficiently large $\lambda$.
\begin{restatable}{proposition}{sparsity}
\label{prop:sparse}
For $\Mat{M} \in \mathbb{R}_+^{n\times n}$ (assuming $t < 2$), the expected cost seed matrix~\eqref{eq:seed-e} contains zero elements for $t > 1$ and sufficiently large  $\lambda$. Similarly, the measured cost seed matrix~\eqref{eq:seed} includes zero elements for $t < 1$ when $\lambda$ is large enough. Both matrices are positive otherwise for any $\lambda > 0$. Additionally, in both cases, for $\lambda_1 < \lambda_2$, the zero elements of the seed matrix induced by $\lambda_1$ are a subset of the zero elements induced by $\lambda_2$.
\end{restatable}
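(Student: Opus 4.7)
The statement is essentially a direct computation on the two seed matrices, entry by entry, using the explicit form of $\exp_t$ and two convenient identities: $\exp_t(\ominus_t a) = 1/\exp_t(a)$ and $1+(1-t)\log_t x = x^{1-t}$. For each case I will identify exactly when the argument of $\exp_t(\cdot)$ is pushed into the region where the clipped base $[1+(1-t)\cdot]_+$ collapses, translate that into a threshold condition on $\lambda M_{ij}$, and conclude both existence of zeros and the nesting of zero sets in $\lambda$.

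\textbf{Expected seed.} For $(\tilde{\Mat{K}}_e)_{ij} = \exp_t(\ominus_t a)$ with $a \defeq \sfrac{1}{t^*}\lambda M_{ij} = (2-t)\lambda M_{ij}$, I first verify the reciprocal identity by computing $1+(1-t)\ominus_t a = 1/(1+(1-t)a)$, which gives $\exp_t(\ominus_t a) = \exp_t(a)^{-1}$. For $t \leq 1$ the base $1+(1-t)a$ is always strictly positive for $a \geq 0$, so $\exp_t(a) \in (0,\infty)$ and hence $(\tilde{\Mat{K}}_e)_{ij} > 0$ for every $\lambda > 0$. For $t>1$, writing $\exp_t(a) = [1-(t-1)a]_+^{-1/(t-1)}$ shows that $\exp_t(a) = +\infty$ precisely when $a \geq 1/(t-1)$; the reciprocal identity then yields $(\tilde{\Mat{K}}_e)_{ij} = 0$ iff $(2-t)\lambda M_{ij} \geq 1/(t-1)$, which requires $M_{ij}>0$ and $\lambda \geq 1/((2-t)(t-1)M_{ij})$.

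\textbf{Measured seed.} For $(\tilde{\Mat{K}}_m)_{ij} = \exp_t(\log_t(\tilde r_i\tilde c_j) - \lambda M_{ij})$, I plug into the base to get $1+(1-t)(\log_t(\tilde r_i\tilde c_j)-\lambda M_{ij}) = (\tilde r_i\tilde c_j)^{1-t} - (1-t)\lambda M_{ij}$. For $t \geq 1$ this equals $(\tilde r_i\tilde c_j)^{1-t} + (t-1)\lambda M_{ij} \geq (\tilde r_i\tilde c_j)^{1-t} > 0$, so with exponent $1/(1-t) \leq 0$ (or the classical exponential when $t=1$) the entry stays strictly positive for every $\lambda > 0$. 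For $t<1$ the base vanishes exactly when $\lambda M_{ij} \geq (\tilde r_i\tilde c_j)^{1-t}/(1-t)$, which for $M_{ij} > 0$ gives a zero entry once $\lambda$ exceeds that threshold, and never if $M_{ij}=0$.

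\textbf{Monotonicity and main obstacle.} Both zero conditions reduce to $\lambda M_{ij} \geq c(t,\tilde{\ve r},\tilde{\ve c})$ with a strictly positive constant independent of $\lambda$; since $M_{ij} \geq 0$, the zero set $\{(i,j) : \lambda M_{ij} \geq c\}$ is nondecreasing in $\lambda$, proving the inclusion $\lambda_1 < \lambda_2 \Rightarrow$ zero set at $\lambda_1 \subseteq$ zero set at $\lambda_2$. The only delicate point is the boundary behavior of $\exp_t$ when $t>1$ and the clipped base $[\cdot]_+$ reaches zero (since the exponent is then negative); the reciprocal identity $\exp_t(\ominus_t a)=1/\exp_t(a)$ sidesteps this cleanly by reading off the zero from the $+\infty$ blow-up of $\exp_t(a)$, which is the main technical step of the argument.
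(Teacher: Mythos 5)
Your proof is correct and follows the same route as the paper's own (very terse) argument: zeros arise from the clipping of $\exp_t$ at $-1/(1-t)$ for the measured seed when $t<1$, and from the blow-up of $\exp_t$ past $1/(t-1)$ for the expected seed when $t>1$, with monotonicity in $\lambda$ giving the nesting of zero sets. Your version is in fact more complete than the paper's, since you explicitly derive the thresholds, verify the strict positivity claims in the complementary ranges of $t$, and handle the $[\cdot]_+$ boundary subtlety via the reciprocal identity.
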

The level of the sparsity of the solution monotonically increases with $\lambda$. Nonetheless, 
when the sparsity level is too high (\emph{e.g.}, for $\lambda \rightarrow \infty$, $\tilde{\Mat{K}} \rightarrow \Mat{0}_{n\times n}$) the resulting seed matrix may no longer induce a feasible solution that is diagonally equivalent to a transport plan $\tilde{\Mat{P}} \in \tilde{U}_n(\tilde{\ve{r}}, \tilde{\ve{c}})$, as stated next.

\subsection{Convergence and Remarks on Feasibility}
\citet{franklin1989scaling} show the linear convergence of the both scaling factors $\ve{\mu}$ and $\ve{\xi}$ of Sinkhorn's algorithm for \emph{positive matrices}. Specifically, the convergence rate is proportional to the square of the \emph{contraction coefficient} $\kappa(\Mat{K}) \defeq \tanh(\sfrac{\delta(\Mat{K})}{4})$ where $\delta(\Mat{K}) \defeq \log \max_{i,j,k,\ell} \frac{K_{i\ell}K_{jk}}{K_{j\ell}K_{ik}}$ is called the \emph{projective diameter} of the linear map $\Mat{K}$. 

\begin{remark}
When the seed matrix $\tilde{\Mat{K}}$ in Algorithm~\ref{alg:t-OT} is positive, the linear convergence is then an immediate consequence of the convergence of Sinkhorn's iteration. The range of $t$ for which $\tilde{\Mat{K}}$ is a positive matrix is characterized by Proposition~\ref{prop:sparse} and the convergence rate is thus proportional to $\kappa(\tilde{\Mat{K}}^{1/t^*})^2$. Note that for $t=1$, $\kappa(\diag(\ve{r}) \exp(-\lambda\,\Mat{M})\diag(\ve{c}))\! =\! \kappa(\exp(-\lambda\,\Mat{M}))$ and both seeds~\eqref{eq:seed} and~\eqref{eq:seed-e} recover the convergence rate of the EOT.
\end{remark}
Although the convergence of Algorithm~\ref{alg:t-OT} is guaranteed for positive $\tilde{\Mat{K}}$, we still need to specify when a solution exists for non-negative $\tilde{\Mat{K}}$ (see the appendix for remarks on the feasibility). Nonetheless, \emph{if a solution exists}, we have the following result in terms of the seed and transport plan.  

\begin{restatable}{remark}{diageq}
The non-negative matrix $\tilde{\Mat{K}}$ is diagonally equivalent to $\tilde{\Mat{P}} \in \tilde{U}_n(\tilde{\ve{r}}, \tilde{\ve{c}})$ if an only if $\tilde{\Mat{K}}^{1/t^*}$ with $t^* > 0$ is diagonally equivalent to a matrix $\Mat{P} \in U_n(\ve{r}, \ve{c})$.
\end{restatable}

\section{Experiments}
We provide experimental evidence to validate the results in the paper. For each case, we sample $\Mat{M}$ uniformly between $[0, 1]$ and also sample $\ve{r}$ and $\ve{c}$ randomly. Due to limited space, we defer some of the results to the appendix.

\begin{figure*}[t!]
\begin{center}
    \includegraphics[width=0.9\linewidth]{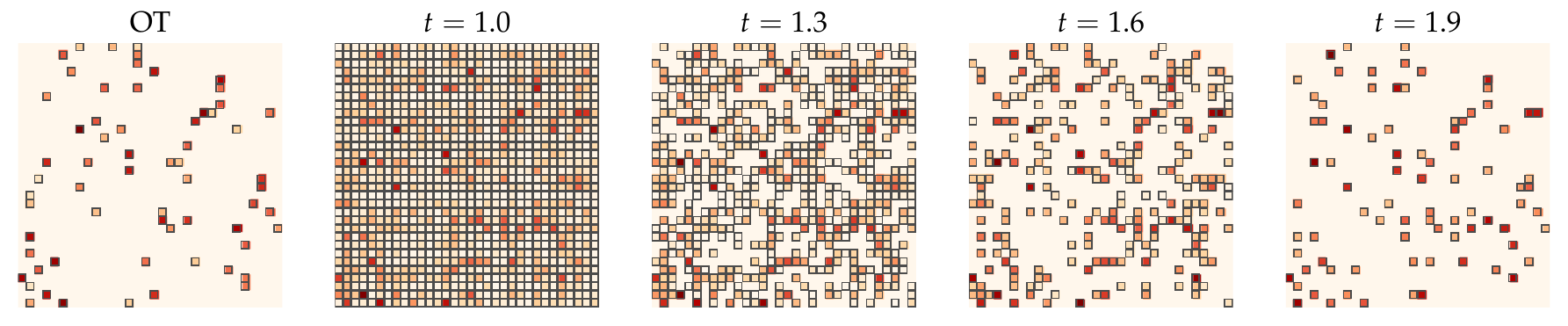}
    \caption{Transport plans induced by OT and the expected cost formulation for different values of $t$. The non-zero values are marked by a square. The EOT ($t=1$) induces a fully-dense plan. The sparsity of the solution increases by increasing $1 < t < 2$.}
    \label{fig:sparsity}
    \end{center}
\end{figure*}
\subsection{$t$-Sinkhorn Distances}
We plot the relative cost of the tempered entropic regularized OT to the value of the unregularized (measured or expected) cost. For the experiment, we set $n=64$ and average over $20$ trials. Figure~\ref{fig:expected-dist} shows the relative expected cost for different $t$ and $\lambda$. The relative cost decreases with larger $\lambda$, and the asymptotic value is closer to zero for $t$ is closer to one, which is the case for the EOT. 

\subsection{Convergence of Tempered OT}
We measure the number of steps to converge for the tempered entropic regularized OT problem using Sinkhorn's iterations for different values of $t$ and $\lambda$. We stop Sinkhorn's iterations when the maximum absolute change in each coordinate of $\ve{\xi}$ is less than $1\mathrm{e}{-10}$. For the experiment, we set $n=64$ and average over $100$ trials. Figure~\ref{fig:convergence} shows the number of iterations to converge along with the relative \emph{expected} cost to the solution of the unregularized OT problem. The number of iterations to converge follows a similar pattern to the contraction ratios of the seed matrices, shown in Figure~\ref{fig:contraction} in the appendix, while the relative expected cost is inversely proportional to the number of iterations. This result highlights the trade-off between the convergence and the (expected) transport cost.

\subsection{Sparse Solutions}
\label{subsec:sparsity}
We analyze the sparsity of the solution of the (unregularized) OT problem as well as the solutions of the regularized expected cost problem~\eqref{eq:t-sinkhorn-main-e} for $t \in [1, 2)$. Note that $t=1$ is equal to the EOT problem~\eqref{eq:entropy-ot}. We set $n=32$ and, for each case, set $\lambda = 6.0 / t^*$ to offset the scaling factor in~\eqref{eq:seed-e}. In Figure~\ref{fig:sparsity}, we show the non-zero values of the transport plans (more precisely, values larger than $1\mathrm{e}{-25}$). OT induces a sparse solution with $2n - 1 = 63$ non-zero components. On the other hand, the EOT ($t=1$) solution is fully dense, with 1024 non-zero components. The sparsity increased by increasing $t \in [1, 2)$. In this case, the transport plan with $t = 1.9$ has only $83$ non-zero values. More results for the regularized measured cost are given in the appendix.

\section{Conclusions}
We investigated the regularized version of the
optimal transport problem with tempered exponential measures. The regularizations are Bregman divergences induced by the negative tempered Tsallis entropy. We studied how regularization affects the sparsity pattern of the solution and adapted Sinkhorn balancing to quickly approximate the solution.
\section*{Acknowledgments}

The authors warmly thank Mathieu Blondel for remarks and discussions around the material presented.

\bibliography{refs}

\clearpage
\newpage
\appendix
\onecolumn
\begin{center}
\huge{Optimal Transport with Tempered Exponential Measures}\\
\huge{(Appendix)}
\end{center}

\section{A Primer on Tempered Algebra and Tempered Exponential Measures (TEMs)}
\label{sec-sup-primer}

\subsection{Tempered Algebra} We make use of two operations introduced in \citet{nlwGA} and generalizing their counterpart over the reals for $t=1$,
\begin{itemize}
\item the tempered subtraction: $a \ominus_t b \defeq (a-b)/(1+(1-t)b)$, defined for $(a,b) \in \mathbb{R}\times \mathbb{R}_{\neq -1/(1-t)}$;
  \item the tempered product: $a \otimes_t b \defeq \left[a^{1-t} + b^{1-t} - 1\right]_+^{\frac{1}{1-t}}$, defined for $(a,b) \in \mathbb{R}^2_{\geq 0}$.
  \end{itemize}
  Under a wide range of their parameters, they can be used to simplify tempered exponential expressions, generalizing the $t=1$ case.
\begin{lemma}\label{lemominus}
  Let $t\in [0,1)$. Then
  \begin{eqnarray}
\exp_t(u \ominus_t v) = \frac{\exp_t(u)}{\exp_t(v)} , \forall u \in \mathbb{R}, \forall v \geq -\frac{1}{1-t} \label{propominus}
  \end{eqnarray}
  (otherwise, only the LHS is defined).
\end{lemma}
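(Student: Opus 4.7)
The plan is to unfold both sides from the definitions and reduce everything to an algebraic identity about the expression $1+(1-t)z$, then handle the $[\cdot]_+$ clipping by a brief case split.

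First I would compute the inner argument directly. By definition $u \ominus_t v = (u-v)/(1+(1-t)v)$, which is well-defined precisely when $1+(1-t)v \neq 0$, i.e.\ $v \neq -1/(1-t)$. For $v > -1/(1-t)$ (so that $1+(1-t)v > 0$) I would then apply $\exp_t$ and simplify what sits inside the $[\cdot]_+^{1/(1-t)}$:
\begin{equation*}
1 + (1-t) \cdot \frac{u-v}{1+(1-t)v} \;=\; \frac{1+(1-t)v + (1-t)(u-v)}{1+(1-t)v} \;=\; \frac{1+(1-t)u}{1+(1-t)v}.
\end{equation*}
So the LHS equals $\bigl[(1+(1-t)u)/(1+(1-t)v)\bigr]_+^{1/(1-t)}$.

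Next I would do a short case split to reconcile this with the RHS, since the $[\cdot]_+$ on the LHS lives outside a fraction while on the RHS $\exp_t(u)$ and $\exp_t(v)$ carry their own clippings. Assuming $1+(1-t)v>0$: if $1+(1-t)u \geq 0$, the ratio on the LHS is already non-negative, so the clipping is inactive and the LHS factors as $(1+(1-t)u)^{1/(1-t)}/(1+(1-t)v)^{1/(1-t)} = \exp_t(u)/\exp_t(v)$, matching the RHS. If instead $1+(1-t)u < 0$, the ratio is negative, so the LHS is $0$; on the RHS, $\exp_t(u) = [1+(1-t)u]_+^{1/(1-t)} = 0$ while $\exp_t(v)>0$, so the RHS is also $0$.

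For the parenthetical remark, I would note that when $v < -1/(1-t)$ the denominator $1+(1-t)v$ is still nonzero, so $u\ominus_t v$ and hence $\exp_t(u\ominus_t v)$ remain well-defined real quantities, whereas $\exp_t(v)=0$ makes the ratio on the RHS ill-defined; and at the boundary $v = -1/(1-t)$ both sides are undefined. The main obstacle, such as it is, is simply keeping track of the $[\cdot]_+$ operator when pushing it through a quotient — the computation itself is a one-line identity, and the verification that $\exp_t$'s floor behavior on the LHS matches the product of two clipped bases on the RHS is what the case split takes care of.
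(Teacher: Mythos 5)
Your proof is correct and complete: the reduction of the inner argument to $\frac{1+(1-t)u}{1+(1-t)v}$ followed by the case split on the sign of $1+(1-t)u$ is exactly the right way to handle the $[\cdot]_+$ clipping, and your treatment of the ``otherwise'' clause (RHS undefined when $\exp_t(v)=0$) is accurate. Note that the paper itself gives no proof of this lemma --- it only proves the companion Lemma~\ref{lemotimes} for $\otimes_t$ --- so there is no authorial argument to compare against; yours is the natural one. Your observation that at the boundary $v=-1/(1-t)$ both sides are undefined (so the stated ``$\forall v\geq -\frac{1}{1-t}$'' should really be strict) is a fair and correct refinement of the statement.
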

\begin{lemma}\label{lemotimes}
  Let $t\in [0,1)$. Then
  \begin{eqnarray}
\exp_t(u + v) = \exp_t(u) \otimes_t \exp_t(v) & \mbox{ iff } & \left(u\leq 0 \wedge v\leq 0\right)\vee\left(u \geq -\frac{1}{1-t} \wedge v \geq -\frac{1}{1-t}\right) \label{propotimes}
  \end{eqnarray}
  (otherwise, $\exp_t(u + v) < \exp_t(u) \otimes_t \exp_t(v)$).
\end{lemma}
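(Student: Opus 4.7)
The plan is to reduce everything to a comparison of two expressions of the form $[\cdot]_+$ by raising both sides to the power $1-t$. Since $t \in [0,1)$ gives $1/(1-t) > 0$, the map $x \mapsto x^{1/(1-t)}$ is strictly monotone on $\mathbb{R}_{\geq 0}$, so equality (resp.\ strict inequality) between $\exp_t(u+v)$ and $\exp_t(u) \otimes_t \exp_t(v)$ is equivalent to the same relation between their $(1-t)$-powers. Unfolding definitions, this amounts to comparing
$$[1+(1-t)(u+v)]_+ \quad \text{with} \quad \bigl[\,[1+(1-t)u]_+ + [1+(1-t)v]_+ - 1\,\bigr]_+.$$
Write $x \defeq 1+(1-t)u$ and $y \defeq 1+(1-t)v$ so the comparison becomes $[x+y-1]_+$ versus $\bigl[[x]_+ + [y]_+ - 1\bigr]_+$.

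For the general inequality, I would invoke the elementary fact $[z]_+ \geq z$ with equality iff $z \geq 0$. Applied to $x$ and $y$ separately and summed, this gives $[x]_+ + [y]_+ - 1 \geq x+y-1$, and then monotonicity of $[\cdot]_+$ delivers $\exp_t(u) \otimes_t \exp_t(v) \geq \exp_t(u+v)$ unconditionally. This already covers the ``otherwise'' direction of the $<$ claim modulo a case check.

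For equality, I would split into two regimes. \textbf{Regime A} ($u, v \geq -1/(1-t)$, i.e., $x, y \geq 0$): both clippings are inactive, so $[x]_+ + [y]_+ - 1 = x+y-1$ and the two inner expressions coincide; hence the outer $[\cdot]_+$ values agree, giving equality. \textbf{Regime B} ($u \leq 0$ and $v \leq 0$): I would show both sides vanish. The subcase where both $u,v \in [-1/(1-t),0]$ falls under Regime A (so equality holds trivially). Otherwise at least one of $u,v$, say $u$, satisfies $u < -1/(1-t)$, hence $[x]_+ = 0$; since $v \leq 0$ gives $[y]_+ \leq 1$, the inner RHS is $\leq 0$ and $\exp_t(u)\otimes_t \exp_t(v) = 0$. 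On the LHS, $u+v \leq u < -1/(1-t)$ forces $\exp_t(u+v) = 0$ as well.

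For the converse (strictness when neither regime applies), suppose, say, $u > 0$ but $v < -1/(1-t)$ (all remaining cases are symmetric variants). Then $[y]_+ = 0$, so $[x]_+ + [y]_+ - 1 = (1-t)u > 0$, giving a strictly positive RHS. On the LHS, $x+y-1 = (1-t)(u+v)$, and the sign of $\exp_t(u+v)$ depends on whether $u+v \gtrless -1/(1-t)$; in either subcase a direct arithmetic check shows $[x+y-1]_+ < [x]_+ + [y]_+ - 1$ strictly (since replacing $y$ by $[y]_+ = 0$ discards the negative term $y$). The main obstacle will be bookkeeping across these truncation cases to ensure strictness is genuinely strict and not hidden by an outer $[\cdot]_+$ collapsing to zero, so I would assemble the strict-inequality subcases into a short table keyed by the signs of $x$, $y$, and $x+y-1$ before writing the final proof.
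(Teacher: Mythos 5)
Your proposal is correct and follows essentially the same route as the paper's proof: both reduce the claim, via the strictly monotone $(1-t)$-power, to comparing $[1+(1-t)(u+v)]_+$ with $\bigl[[1+(1-t)u]_+ + [1+(1-t)v]_+ - 1\bigr]_+$ and then run the same case split (both arguments $\leq 0$; both $\geq -\tfrac{1}{1-t}$; the mixed case $u>0,\ v<-\tfrac{1}{1-t}$ for strictness). Your upfront use of $[z]_+\geq z$ to get the unconditional inequality is a mild streamlining of the paper's case-by-case verification, but not a different argument.
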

\begin{proof}
  The LHS of \eqref{propotimes} has the truth value of the predicate
  \begin{eqnarray}
\left[1 + (1-t)(u+v)\right]_+ & = & \left[ \left[1 + (1-t)u\right]_+ +\left[1 + (1-t)v\right]_+ -1\right]_+ .\label{simplotimes}
  \end{eqnarray}
  \noindent\textbf{Case 1} Suppose first $u\leq 0 \wedge v\leq 0$ so that we can write $u = -\alpha / (1-t)$ and $v = -\beta / (1-t)$ for some $\alpha, \beta \geq 0$. If $\alpha + \beta \geq 1$, the LHS of \eqref{simplotimes} is 0. The RHS is $\left[ \left[1-\alpha\right]_+ +\left[1-\beta\right]_+ -1\right]_+ \leq \left[ \left[1-\alpha\right]_+ +\left[1-\left[1-\alpha\right]_+\right]_+ -1\right]_+ = \left[ \left[1-\alpha\right]_+ + \min\{1,\alpha\} - 1\right]_+ = 0$ and thus the RHS is equal to 0 because it cannot be negative, and \eqref{simplotimes} is true. If however $\alpha + \beta < 1$, the LHS of \eqref{simplotimes} is $1 - (\alpha + \beta)$. The RHS is trivially $\left[ \left[1-\alpha\right]_+ +\left[1-\beta\right]_+ -1\right]_+ = \left[ 1-\alpha + 1-\beta  -1\right]_+ = 1 - (\alpha + \beta)$, and \eqref{simplotimes} is true.\\
  \noindent\textbf{Case 2} Suppose now $u \geq -\frac{1}{1-t} \wedge v \geq -\frac{1}{1-t}$. The RHS of \eqref{simplotimes} is $\left[ \left(1 + (1-t)u\right) +\left(1 + (1-t)v\right) -1\right]_+ = \left[ 1 + (1-t)(u+v)\right]_+$, equal to the LHS.\\
  Cases 1 and two show $\Leftarrow$ in \eqref{propotimes}. To show $\Rightarrow$, we proceed by contraposition. The negation of the RHS in \eqref{propotimes} is equivalent to a single case (the other follows by symmetry): $u > 0 \wedge v < -1/(1-t)$. In this case, the RHS of \eqref{simplotimes} is $=(1-t) u$, while the LHS is $< \left[1 + (1-t)u - 1\right]_+ = (1-t)u$, thus contradicting the LHS in \eqref{propotimes}.
  \end{proof}

\subsection{Basics of TEMs} We describe here the minimal amount of material necessary to understand how our approach to boosting connects to these measures. We refer to \citet{amid2023clustering} for more details. With a slight abuse of notation, we define the perspective transforms $(\log_t)^*(z) \defeq t^* \cdot \log_{t^*} (z/t^*)$ and $(\exp_t)^*(z) \defeq t^* \cdot \exp_{t^*} (z/t^*)$. Recall that $t^* \defeq 1/(2-t)$.
\begin{definition}\citep{amid2023clustering}
  A tempered exponential measure (\acrotem) family is a set of unnormalized densities in which each element admits the following canonical expression:
  \begin{eqnarray}
\label{eq:exp_t_density_form}
q_{t|\ve{\theta}} (\ve{x}) \defeq \frac{\exp_t(\ve{\theta}^\top \ve{\phi}(\ve{x}))}{\exp_t(G_t(\ve{\theta}))} = \exp_t(\ve{\theta}^\top \ve{\phi}(\ve{x}) \ominus_t G_t(\ve{\theta})),
  \end{eqnarray}
  where $\ve{\theta}$ is the element's natural parameter, $\ve{\phi}(\ve{x})$ is the sufficient statistics and
  \begin{eqnarray*}
    G_t(\ve{\theta}) & = & (\log_t)^* \int (\exp_t)^* (\ve{\theta}^\top \ve{\phi}(\ve{x}))\mathrm{d}\xi
  \end{eqnarray*}
  is the (convex) cumulant, $\xi$ being a base measure (implicit).
\end{definition}
We remark that the simplification \eqref{eq:exp_t_density_form} indeed holds from \ref{lemominus} because it follows that $G_t(\ve{\theta}) \geq -1/(1-t)$ from its expression. Except for $t=1$ (which reduces a \acrotem~family to a classical exponential family), the total mass of a \acrotem~is not 1 (but it has an elegant closed form expression \cite{amid2023clustering}). However, the exponentiated $q_{t|\ve{\theta}}^{1/t^*}$ does sum to 1. In the discrete case, this justifies extending the classical simplex to what we denote as the co-simplex.

\section{Proofs}
\label{sec-sup-pro}

\ddistone*
\begin{proof}
Let $M \defeq \sum_{ij} M_{ij}$. Let $\tilde{\Mat{P}} \in \tilde{U}_n(\tilde{\ve{x}}, \tilde{\ve{y}})$ and $\tilde{\Mat{Q}} \in \tilde{U}_n(\tilde{\ve{y}}, \tilde{\ve{z}})$ are two optimal solutions for $\tilde{d}^t_{\Mat{M}}(\tilde{\ve{x}}, \tilde{\ve{y}})$ and $\tilde{d}^t_{\Mat{M}}(\tilde{\ve{y}}, \tilde{\ve{z}})$, respectively. Then, $\tilde{\Mat{S}} \in \tilde{U}_n(\tilde{\ve{x}}, \tilde{\ve{z}})$ where $\tilde{S}_{ik} \defeq \big(\sum_j \frac{\tilde{P}_{ij}^{1/t^*}\tilde{Q}_{jk}^{1/t^*}}{\tilde{y}^{1/t^*}_j}\big)^{t^*} = \big(\sum_j \frac{{P}_{ij}{Q}_{jk}}{{y}_j}\big)^{t^*}$. We have
\begin{align*}
  \tilde{d}^t_{\Mat{M}}(\tilde{\ve{x}}, \tilde{\ve{z}}) \leq \innerproduct{\tilde{\Mat{S}}}{\Mat{M}} & = \sum_{ik} M_{ik} \tilde{S}_{ik}\\
                                                                                                                  & = M \cdot \sum_{ik} \frac{M_{ik}}{M} \left(\sum_j \frac{P_{ij}Q_{jk}}{{y}_j}\right)^{t^*}\\
                                                                                                                  & \leq M \cdot\left( \sum_{ik} \frac{M_{ik}}{M} \cdot \sum_j \frac{P_{ij}Q_{jk}}{{y}_j}\right)^{t^*}\\
                                                                                                                  & = M^{1-t^*} \cdot \left( \sum_{ijk} M_{ik} \frac{P_{ij}Q_{jk}}{{y}_j}\right)^{t^*}\\
                                                                                                                  & \leq M^{1-t^*} \cdot \left( \sum_{ijk} (M_{ij} + M_{jk}) \frac{P_{ij}Q_{jk}}{{y}_j}\right)^{t^*}\\
  & = M^{1-t^*} \cdot \left( \sum_{ij} M_{ij}P_{ij} + \sum_{jk} M_{jk}Q_{jk} \right)^{t^*}\\
  & \leq M^{1-t^*} \cdot \left( \sum_{ij} M_{ij}\tilde{P}_{ij} + \sum_{jk} M_{jk}\tilde{Q}_{jk} \right)^{t^*}\\
    & = M^{1-t^*} \cdot \left(\tilde{d}^t_{\Mat{M}}(\tilde{\ve{x}}, \tilde{\ve{y}}) + \tilde{d}^t_{\Mat{M}}(\tilde{\ve{y}}, \tilde{\ve{z}})\right)^{t^*}\,,
\end{align*}
resulting in $(\tilde{d}^t_{\Mat{M}}(\tilde{\ve{x}}, \tilde{\ve{z}}))^{2-t} \leq M^{1-t}\cdot \left(\tilde{d}^t_{\Mat{M}}(\tilde{\ve{x}}, \tilde{\ve{y}}) + \tilde{d}^t_{\Mat{M}}(\tilde{\ve{y}}, \tilde{\ve{z}})\right)$. The first inequality is Jensen's inequality on $z\rightarrow z^{t^*}$ (concave for $t\leq 1$), and the second uses the fact that $\Mat{M}$ is a metric matrix.
The last inequality comes from $\innerproduct{{\Mat{R}}}{\Mat{M}} \leq \innerproduct{\tilde{\Mat{R}}}{\Mat{M}}$ for any TEM matrix $\tilde{\Mat{R}} \in \tilde{U}_n(\tilde{\ve{r}}, \tilde{\ve{c}})$ because $\tilde{R}_{ij} = R_{ij}^{t^*} \geq R_{ij}$ since $R_{ij} \leq 1$, for any $t\leq 1$.
  \end{proof}

\powerconvex*
\begin{proof}
Given $\tilde{\Mat{P}}, \tilde{\Mat{Q}}\in \tilde{U}^{\varepsilon}_n(\tilde{\ve{r}}, \tilde{\ve{c}})$, we have
\begin{equation}
\begin{split}
\label{eq:PQ_const}
    \sum_{ij} & (\tilde{P}_{ij} \log_t \tilde{P}_{ij} - \tilde{P}_{ij} \log_t(\tilde{r}_i \tilde{c}_j) = \frac{1}{1-t} (1 - \sum_{ij} \tilde{P}_{ij} (\tilde{r}_i \tilde{c}_j)^{1 - t}) \leq \varepsilon
\end{split}
\end{equation}
(similarly for $\tilde{\Mat{Q}}$), which implies 
\begin{equation}
    \label{eq:PQ_convex_const}
    \frac{1}{1-t} (1 - \sum_{ij} (\beta\, \tilde{P}_{ij} + (1 - \beta)\, \tilde{Q}_{ij})\, (\tilde{r}_i \tilde{c}_j)^{1 - t}) \leq \varepsilon\,,
\end{equation}
for $\beta \in [0, 1]$. For all $i, j \in [m]$, 
by the weighted power mean inequality we have that
\begin{equation}
\label{eq:PQ_t_less}
    \beta\,\tilde{P}_{ij} + (1 - \beta)\,\tilde{Q}_{ij} \leq (\beta\,\tilde{P}_{ij}^{1/t^*} + (1 - \beta)\,\tilde{Q}_{ij}^{1/t^*})^{t^*}\,,
\end{equation}
if $t \leq 1$ and 
\begin{equation}
\label{eq:PQ_t_greater} 
    (\beta\,\tilde{P}_{ij}^{1/t^*} + (1 - \beta)\,\tilde{Q}_{ij}^{1/t^*})^{t^*} \leq  \beta\,\tilde{P}_{ij} + (1 - \beta)\,\tilde{Q}_{ij}\,,
\end{equation}
if $t \geq 1$. Applying the appropriate inequality from~\eqref{eq:PQ_t_less} and \eqref{eq:PQ_t_greater} to \eqref{eq:PQ_convex_const} and noticing the sign of $\frac{1}{1 - t}$ concludes the proof.
\end{proof}

\msol*
\begin{proof}
Using Lagrange multipliers $\ve{\nu}', \ve{\gamma}' \in \mathbb{R}^n$ and $\Mat{\Pi} \in \mathbb{R}_+^{n\times n}$, we can write the problem as
\begin{eqnarray}
  \min_{\tilde{\Mat{P}} \in \mathbb{R}^{n\times n}}\Big\{ \innerproduct{\tilde{\Mat{P}}}{\Mat{M}} + \frac{1}{\lambda} \cdot D_t(\tilde{\Mat{P}}\| \tilde{\ve{r}}\tilde{\ve{c}}^\top)  +  {\ve{\nu}'}^\top(\tilde{\Mat{P}}^{1/t^*}\ve{1}_n - \tilde{r}^{1/t^*}) + {\ve{\gamma}'}^\top(\tilde{\Mat{P}}^{\top1/t^*}\ve{1}_n - \tilde{c}^{1/t^*}) - \innerproduct{\Mat{\Pi}}{\tilde{\Mat{P}}}\Big\}\, . \label{pmin}
\end{eqnarray}
Setting the derivative to zero in terms of $\tilde{P}_{ij}$, we have
\[
M_{ij} + \frac{1}{\lambda} (\log_t \tilde{P}_{ij} -\log_t (\tilde{r}_i\tilde{c}_j))+ (2-t)\nu'_i\, \tilde{P}_{ij}^{1-t} + (2-t)\gamma'_j\,\tilde{P}_{ij}^{1-t} - \Pi_{ij} = 0, \forall i,j\, .
\]
Applying the definition of $\log_t$ and rearranging the terms, we have
\begin{align}
\tilde{P}_{ij}^{1-t}\, \big(1 + (1-t)(2-t)\lambda\, (\nu'_i + \gamma'_j)\big) & = 1 + (1 - t)\,(\log_t (\tilde{r}_i\tilde{c}_j)-\lambda\, M_{ij}) + \Pi_{ij} \cdot (1-t)\lambda .\label{pequal}
\end{align}
Note we can assume wlog that $1 + (1-t)(2-t)\lambda\, (\nu'_i + \gamma'_j) \neq 0, \forall i,j$. If it were not the case, the dependency of \eqref{pmin} in $\tilde{P}_{ij}$ vanishes, so we could solve it without $\tilde{P}_{ij}$. This yields
\begin{eqnarray*}
  \tilde{P}_{ij}^{1-t} & = & \frac{1 + (1 - t)\Big(\log_t (\tilde{r}_i\tilde{c}_j)-\lambda\, M_{ij}\Big) + \Pi_{ij} \cdot (1-t)\lambda}{1 + (1-t)(2-t)\lambda\, (\nu'_i + \gamma'_j)} \\
  & = & 1 + (1-t) \cdot \left[(\log_t (\tilde{r}_i\tilde{c}_j)-\lambda\, M_{ij}) \ominus_t (2-t)\lambda\, (\nu'_i + \gamma'_j)\right] + \Pi_{ij} \cdot \frac{(1-t) \lambda}{1 + (1-t)(2-t)\lambda\, (\nu'_i + \gamma'_j)} .
  \end{eqnarray*}
  From the KKT conditions, if $\Pi_{ij} > 0$, then $\tilde{P}_{ij} = 0$. Hence, if $1 + (1-t) \cdot \left[(\log_t (\tilde{r}_i\tilde{c}_j)-\lambda\, M_{ij}) \ominus_t (2-t)\lambda\, (\nu'_i + \gamma'_j)\right] < 0$, then $\tilde{P}_{ij} = 0$, implying
  \begin{eqnarray*}
    \tilde{P}_{ij} & = & \left[1 + (1-t) \cdot \left[(\log_t (\tilde{r}_i\tilde{c}_j)-\lambda\, M_{ij}) \ominus_t (2-t)\lambda\, (\nu'_i + \gamma'_j)\right] \right]_+^{\frac{1}{1-t}}\\
                         & = & \exp_t\left((\log_t (\tilde{r}_i\tilde{c}_j)-\lambda\, M_{ij}) \ominus_t (\nu_i + \gamma_j)\right).
  \end{eqnarray*}
  with $\nu_i \defeq (2-t)\lambda\nu'_i$, $\gamma_j \defeq (2-t)\lambda\gamma'_j$. 
\end{proof}

\sparseotem*
\begin{proof}
  We show Theorem \ref{th:sparseotem} after Theorem \ref{th:msol} because its proof builds upon the proof of the theorem. We proceed in four steps, each of them building only on the KKT conditions, optimization being over a non-convex set.\\
    \noindent \textbf{Step 1} analytical form of $\tilde{\Mat{P}}$. Using the simplification \eqref{simpldt} in the derivative of the Lagrangian yields a formulation of $\tilde{P}_{ij}$ which is more readable for our purpose:
  \begin{eqnarray}
\tilde{P}_{ij}^{1-t} & = & \frac{\Pi_{ij} - M'_{ij}}{(2-t)(\nu'_i + \gamma'_j)}.\label{eqptilde}
  \end{eqnarray}
  \noindent \textbf{Step 2} Denoting the support $\mathcal{S}$ of the optimal TEM $\tilde{\Mat{P}}$ as $\mathcal{S} \defeq \{(i,j) : \tilde{P}_{ij} > 0\}$, we show
  \begin{eqnarray}
\mathcal{S} \supseteq \{(i,j) : M'_{ij} < 0\}.
  \end{eqnarray}
  In words, all coordinates with negative cost are in the support of $\tilde{\Mat{P}}$. This is, in fact, a direct consequence of \eqref{eqptilde}: since KKT conditions impose $\Pi_{ij} \geq 0$, if $M'_{ij} < 0$, the numerator is strictly positive and as a consequence $\tilde{P}_{ij}>0$ and $\Pi_{ij} = 0$.\\
  \noindent \textbf{Step 3} if $M_{kl} > 0$ and $\tilde{P}_{ij} > 0$, then $\tilde{P}_{kl} = 0$. Suppose otherwise: $M_{kl} > 0 \wedge \tilde{P}_{ij} > 0 \wedge \tilde{P}_{kl} > 0$. After Step 2, we thus have all four $\tilde{P}_{ij}$, $\tilde{P}_{il}$, $\tilde{P}_{kj}$, $\tilde{P}_{kl}$ strictly positive. As a consequence all four Lagrange multipliers $\Pi_{ij}=\Pi_{il}=\Pi_{kj}=\Pi_{kl} = 0$. Hence,
  \begin{itemize}
  \item Since $M'_{ij} > 0$, \eqref{eqptilde} yields $\nu'_i + \gamma'_j < 0$;
  \item Since $M'_{il} < 0$, \eqref{eqptilde} yields $\nu'_i + \gamma'_l > 0$;
  \item Since $M'_{kj} < 0$, \eqref{eqptilde} yields $\nu'_k + \gamma'_j > 0$;
    \item Since $M'_{kl} > 0$, \eqref{eqptilde} yields $\nu'_k + \gamma'_l < 0$.
    \end{itemize}
    Putting these in order, we get
    \begin{eqnarray*}
\nu'_i < -\gamma'_j < \nu'_k < -\gamma'_l < \nu'_i,
    \end{eqnarray*}
    a contradiction.\\
    \noindent \textbf{Step 4} if $M_{kl} < 0$ and $\tilde{P}_{ij} > 0$ then
\begin{eqnarray*}
\tilde{P}_{kl}^{1-t} & \leq & \frac{|M'_{kl}|}{|M'_{ij}| + |M'_{il}| + |M'_{kj}|} \cdot \max\{\tilde{P}_{ij}, \tilde{P}_{il}, \tilde{P}_{kj}\}^{1-t},
      \end{eqnarray*}
Suppose otherwise, so that the inequality holds with ``$>$''. Notice from Step 2 that $\tilde{P}_{kl} > 0$. We now show that $\tilde{\Mat{P}}$ is, in fact, not optimal. To do so, we make a change in the four allocations, for some $\delta >0$ as small as desired,
    \begin{itemize}
\item we change $\tilde{P}_{ij}$ by $\left(\tilde{P}_{ij}^{2-t} - \delta \right)^{\frac{1}{2-t}}$;
\item we change $\tilde{P}_{il}$ by $\left(\tilde{P}_{il}^{2-t} + \delta \right)^{\frac{1}{2-t}}$;
\item we change $\tilde{P}_{kj}$ by $\left(\tilde{P}_{kj}^{2-t} + \delta \right)^{\frac{1}{2-t}}$;
\item we change $\tilde{P}_{kl}$ by $\left(\tilde{P}_{kl}^{2-t} - \delta \right)^{\frac{1}{2-t}}$.
\end{itemize}
Note that the marginal constraints are still satisfied. We show that the regularized measured cost decreases by a non-zero amount. To do so, we compute the difference $\Delta$ between the cost before and the cost after transformation:
  \begin{eqnarray*}
    \Delta(\delta) & = & \tilde{P}_{ij} M'_{ij} + \tilde{P}_{il} M'_{il} + \tilde{P}_{kj} M'_{kj} + \tilde{P}_{kl} M'_{kl} \nonumber\\
           & & - \left(\tilde{P}_{ij}^{2-t} - \delta \right)^{\frac{1}{2-t}}M'_{ij} - \left(\tilde{P}_{il}^{2-t} + \delta \right)^{\frac{1}{2-t}}M'_{il} - \left(\tilde{P}_{kj}^{2-t} + \delta \right)^{\frac{1}{2-t}}M'_{kj}-\left(\tilde{P}_{kl}^{2-t} - \delta \right)^{\frac{1}{2-t}}M'_{kl} \nonumber\\
           & = & \left(\tilde{P}_{ij}  - \left(\tilde{P}_{ij}^{2-t} - \delta \right)^{\frac{1}{2-t}}\right) \cdot M'_{ij}\\
           & & + \left(\left(\tilde{P}_{il}^{2-t} + \delta \right)^{\frac{1}{2-t}} - \tilde{P}_{il} \right)\cdot (-M'_{il})+ \left(\left(\tilde{P}_{kj}^{2-t} + \delta \right)^{\frac{1}{2-t}} - \tilde{P}_{kj} \right)\cdot (-M'_{kj})\\
           & & - \left( \tilde{P}_{kl}  - \left(\tilde{P}_{kl}^{2-t} - \delta \right)^{\frac{1}{2-t}}\right)\cdot (-M'_{kl}).
  \end{eqnarray*}
  We note that all terms but the last one are positive. We have from the generalized means inequality (since $1/(2-t) \leq 1$), for any $a \geq 0, 0\leq \delta \leq a^{2-t}$,
  \begin{eqnarray*}
    \left(\frac{(a^{2-t}+\delta)^{\frac{1}{2-t}}+(a^{2-t}-\delta)^{\frac{1}{2-t}}}{2}\right)^{2-t} & \leq & \frac{a^{2-t}+\delta + a^{2-t}-\delta}{2}\\
    & & = a^{2-t},
  \end{eqnarray*}
  which gives $a - (a^{2-t}-\delta)^{\frac{1}{2-t}} \geq (a^{2-t}+\delta)^{\frac{1}{2-t}} - a$. Let $g(z) \defeq (z^{2-t} + \delta)^{\frac{1}{2-t}} - z$. $\delta>0$ not being a function of $z$, $g$ is convex:
  \begin{eqnarray}
\frac{\partial^2 g}{\partial z^2} = (1-t) \delta z^{-t} \left(z^{2-t}+\delta\right)^{\frac{2t-3}{2-t}}.
  \end{eqnarray}
  This yields the following lower-bounds for $\Delta$, using $M' \defeq M'_{ij} - M'_{il}-M'_{kj}$ and $\Sigma \defeq \tilde{P}_{ij} M'_{ij} + \tilde{P}_{il}(-M'_{il}) + \tilde{P}_{kj}(-M'_{kj})$,
  \begin{eqnarray*}
    \Delta(\delta) & \geq & g(\tilde{P}_{ij}) \cdot M'_{ij} + g(\tilde{P}_{il}) \cdot (-M'_{il}) + g(\tilde{P}_{kj}) \cdot (-M'_{kj}) + \left(\left(\tilde{P}_{kl}^{2-t} - \delta \right)^{\frac{1}{2-t}} - \tilde{P}_{kl} \right)\cdot (-M'_{kl})\\
           & \geq & M' \cdot g\left(\frac{\Sigma}{M'}\right)+ \left(\left(\tilde{P}_{kl}^{2-t} - \delta \right)^{\frac{1}{2-t}} - \tilde{P}_{kl} \right)\cdot (-M'_{kl})\\
           & & = \left(\left(\tilde{P}_{ij} M'_{ij} + \tilde{P}_{il}(-M'_{il}) + \tilde{P}_{kj}(-M'_{kj})\right)^{2-t} + \delta (M'_{ij}\right.\\
           & & \left.- M'_{il}-M'_{kj})^{2-t}\right)^{\frac{1}{2-t}} - (\tilde{P}_{ij} M'_{ij} + \tilde{P}_{il}(-M'_{il}) + \tilde{P}_{kj}(-M'_{kj})) \\
           & & - \left( \tilde{P}_{kl}  - \left(\tilde{P}_{kl}^{2-t} - \delta \right)^{\frac{1}{2-t}}\right)\cdot (-M'_{kl})\\
    & = & M' \cdot \left(\left(\frac{\Sigma}{M'}\right)^{2-t} + \delta \right)^{\frac{1}{2-t}} + (-M'_{kl})\cdot \left(\tilde{P}_{kl}^{2-t} - \delta \right)^{\frac{1}{2-t}} - \left(\Sigma + \tilde{P}_{kl}(-M'_{kl})\right)
  \end{eqnarray*}
  Notice the general analytical shape of this lower-bound:
  \begin{eqnarray*}
\Delta(\delta) & \geq & \underbrace{a_1 (b_1^{2-t} + \delta)^{\frac{1}{2-t}} + a_2 (b_2^{2-t} - \delta)^{\frac{1}{2-t}} - (a_1b_1 + a_2b_2)}_{\defeq L(\delta)},
  \end{eqnarray*}
which still provides $L(0) = \Delta(0) = 0$. We have
    \begin{eqnarray}
L'(\delta) & = & \frac{a_1 (b_1^{2-t} + \delta)^{-\frac{1-t}{2-t}}-a_2 (b_2^{2-t} - \delta)^{-\frac{1-t}{2-t}}}{2-t}.
    \end{eqnarray}
We want to show a constraint on $a_1, a_2, b_1, b_2$ such that $L'(\delta) > 0$ for $\delta \in (0, u]$ with $u>0$, thus allowing to select $\delta > 0$ such that $\Delta(\delta) > 0$. We need $a_1 (b_2^{2-t} - \delta)^{\frac{1-t}{2-t}} \geq a_2 (b_1^{2-t} + \delta)^{\frac{1-t}{2-t}}$, which after simplification gives
    \begin{eqnarray*}
      \delta & \leq & \underbrace{\frac{a_1^{\frac{2-t}{1-t}}b_2^{2-t}-a_2^{\frac{2-t}{1-t}}b_1^{2-t}}{a_1^{\frac{2-t}{1-t}}+a_2^{\frac{2-t}{1-t}}}}_{=u}.
    \end{eqnarray*}
    To get $u$ strictly positive, we need $a_1b_2^{1-t} > a_2b_1^{1-t}$, that is, with the OT variables,
    \begin{eqnarray}
      \tilde{P}_{kl}^{1-t} & > & \frac{-M'_{kl}}{M'_{ij} - M'_{il}-M'_{kj}} \cdot \left(\frac{\tilde{P}_{ij} M'_{ij} + \tilde{P}_{il}(-M'_{il}) + \tilde{P}_{kj}(-M'_{kj})}{M'_{ij} - M'_{il}-M'_{kj}}\right)^{1-t}\nonumber\\
      & & = \frac{|M'_{kl}|}{|M'_{ij}| + |M'_{il}| + |M'_{kj}|} \cdot \left(\frac{\tilde{P}_{ij} |M'_{ij}| + \tilde{P}_{il}|M'_{il}| + \tilde{P}_{kj}|M'_{kj}|)}{|M'_{ij}| + |M'_{il}| + |M'_{kj}|}\right)^{1-t}\label{binfp1}.
      \end{eqnarray}
      So, if
      \begin{eqnarray*}
\tilde{P}_{kl}^{1-t} > \frac{|M'_{kl}|}{|M'_{ij}| + |M'_{il}| + |M'_{kj}|} \cdot \max\{\tilde{P}_{ij}, \tilde{P}_{il}, \tilde{P}_{kj}\}^{1-t},
      \end{eqnarray*}
      then \eqref{binfp1} holds, allowing us to pick $\delta > 0$ which yields $\Delta(\delta) > 0$ and contradicts the optimality of $\tilde{\Mat{P}}$.\\

  Summarizing Steps 1-4, if the cost configuration $M'_{ij} > 0 \wedge M'_{il} < 0 \wedge M'_{kj} < 0$ happens, then $\tilde{P}_{il} > 0$, $\tilde{P}_{kj} > 0$ and at least one of $\tilde{P}_{ij}$ and $\tilde{P}_{kl}$ is zero, which is the claim of the theorem.
  \end{proof}

\esol*
\begin{proof}
Similarly, using Lagrange multipliers $\ve{\nu}, \ve{\gamma} \in \mathbb{R}^n$ and $\Mat{\Pi} \in \mathbb{R}_+^{n\times n}$, we can write the problem as
\[
\min_{\tilde{\Mat{P}} \in \mathbb{R}^{n\times n}}\Big\{ \innerproduct{\tilde{\Mat{P}}^{1/t^*}}{\Mat{M}} + \frac{1}{\lambda} \cdot D_t(\tilde{\Mat{P}} \| \tilde{\ve{r}}\tilde{\ve{c}}^\top) +  \ve{\nu}^\top(\tilde{\Mat{P}}^{1/t^*}\ve{1}_n - \tilde{r}^{1/t^*}) + \ve{\gamma}^\top(\tilde{\Mat{P}}^{\top1/t^*}\ve{1}_n - \tilde{c}^{1/t^*}) - \innerproduct{\Mat{\Pi}}{\tilde{\Mat{P}}}\Big\}\, .
\]
Setting the derivative to zero in terms of $\tilde{P}_{ij}$ (ignoring the constants throughout when convenient), we have
\begin{align*}
\tilde{P}_{ij}^{1-t}\, \big(1 + (1-t)\, (\nu_i + \gamma_j + \lambda M_{ij})\big) & = (\tilde{r}_i\tilde{c}_j)^{1-t} + \Pi_{ij}\, .
\end{align*}
KKT conditions impose $\Pi_{ij} \geq 0$, which prevents the factor of $\tilde{P}_{ij}^{1-t}$ from being negative and yield equivalently
\begin{align*}
\tilde{P}_{ij}^{1-t}\, \left[1 + (1-t)\, (\nu_i + \gamma_j + \lambda M_{ij})\right]_+ & = (\tilde{r}_i\tilde{c}_j)^{1-t} + \Pi_{ij}\, .
\end{align*}
We now proceed as in the proof of Theorem \ref{th:msol}, remarking that the factor of $\tilde{P}_{ij}$ cannot be zero, then dividing:
\begin{align*}
\tilde{P}_{ij}^{1-t}\,  & = \frac{(\tilde{r}_i\tilde{c}_j)^{1-t}}{\left[1 + (1-t)\, (\nu_i + \gamma_j + \lambda M_{ij})\right]_+} + \frac{\Pi_{ij}}{\left[1 + (1-t)\, (\nu_i + \gamma_j + \lambda M_{ij})\right]_+}\, .
\end{align*}
Again, the KKT conditions impose if $\Pi_{ij} > 0$, then $\tilde{P}_{ij} = 0$, yielding
\begin{align*}
  \tilde{P}_{ij}\,  & = \frac{\tilde{r}_i\tilde{c}_j}{\left[1 + (1-t)\, (\nu_i + \gamma_j + \lambda M_{ij})\right]_+^{\frac{1}{1-t}}} \\
  & = \frac{\tilde{r}_i\tilde{c}_j}{\exp_t\left(\nu_i + \gamma_j + \lambda M_{ij}\right)},
\end{align*}
as claimed.
\end{proof}


\iters*
\begin{proof}
We start by proving~\eqref{eq:row-proj} and~\eqref{eq:col-proj} follows similarly. Consider the Lagrangian form of the row projection problem
\begin{equation}
    \label{eq:row-proj-lagrange}
    \min_{\tilde{\Mat{P}} \in \mathbb{R}^{n\times n}} D_t(\tilde{\Mat{P}} \| \tilde{\Mat{P}}_{\!\circ}) + \ve{\nu}^\top\big(\tilde{\Mat{P}}^{1/t^*}\ve{1}_n - \tilde{\ve{r}}^{1/t^*}\big) - \innerproduct{\Mat{\Pi}}{\tilde{\Mat{P}}}\,,
\end{equation}
where $\ve{\nu}\in\mathbb{R}^n$ and $\Mat{\Pi}\in\mathbb{R}^{n\times n}_+$ are Lagrange multipliers. Setting the derivative of~\eqref{eq:row-proj-lagrange} w.r.t. $\tilde{\Mat{P}}$ to zero yields (ignoring the constants when convenient)
\[
    \log_t \tilde{P}_{ij} - \log_t \tilde{P}_{\!\circ ij} + \tilde{P}_{ij}^{1-t}\nu_i - \Pi_{ij} = 0\,.
\]
Expanding the terms, rearranging, and taking the $\frac{1}{1-t}$-power of both sides, we have
\begin{equation}
\label{eq:row-scaling}
\tilde{P}_{ij} = \underbrace{\exp_t(\nu_i)}_{\defeq\tilde{\nu}_i\geq 0}\tilde{P}_{\!\circ ij}\,.
\end{equation}
Note that~\eqref{eq:row-scaling} is a non-negative scaling of each row of $\tilde{\Mat{P}}_{\!\circ}$. Thus, the constraint is satisfied when $\tilde{\nu}_i = \tilde{r}_i / \tilde{\mu}_i$ where $\tilde{\mu}_i = (\sum_j \tilde{P}_{ij}^{1/t^*})^{t^*}$.
\end{proof}
\begin{proof}
The zero values in the seeds are a result of the behavior of $\exp_t$: for $t < 1$, $\exp_t(x) = 0$ for $x < -\sfrac{1}{(1 - t)}$. Thus, some elements of the seed~\eqref{eq:seed} become zero for large enough values of $\lambda$. Similarly, $\exp_t(x) = \infty$ for $t > 1$ and $x > \sfrac{1}{(t - 1)}$. Thus, some elements of the seed~\eqref{eq:seed-e} also become zero for large enough values of $\lambda$. Since $\exp_t$ is monotonic for all $t \in \mathbb{R}$, the zero elements can only increase with $\lambda$. Thus, the zero elements induced by $\lambda_1$ is a subset of the zero elements induced by $\lambda_2$ for $\lambda_1 < \lambda_2$.
\end{proof}
\begin{proof}
The diagonal equivalence of the non-negative matrix $\tilde{\Mat{K}}$ to $\tilde{\Mat{P}} \in \tilde{U}_n(\tilde{\ve{r}}, \tilde{\ve{c}})$ means
\begin{equation}
    \tilde{\Mat{P}} = \diag(\ve{\nu})\,\tilde{\Mat{K}}\,\diag(\ve{\gamma})\,,
\end{equation}
for some $\ve{\nu}, \ve{\gamma} \in \mathbb{R}^n_{++}$. Note that since the zero-pattern of $\tilde{\Mat{K}}$ does not change by diagonal scaling, $\tilde{\Mat{P}}$ has the same zero-pattern as $\tilde{\Mat{K}}$. Applying the monotonic mapping $\tilde{\Mat{P}} \mapsto \tilde{\Mat{P}}^{1/t^*}$ with $t^* > 0$ on both sides, we have
\begin{equation}
    \Mat{P} = \diag(\ve{\nu})^{1/t^*}\,\tilde{\Mat{K}}^{1/t^*}\,\diag(\ve{\gamma})^{1/t^*}\,,
\end{equation}
which implies the diagonal equivalence of $\tilde{\Mat{K}}^{1/t^*}$ to $\Mat{P}$. The reverse statement also holds by the same argument for the mapping $\Mat{P} \mapsto \Mat{P}^{t^*}$ for $t^* > 0$.

\end{proof}

\begin{figure*}[t!]
\begin{center}
    \subfigure{\includegraphics[width=0.27\linewidth]{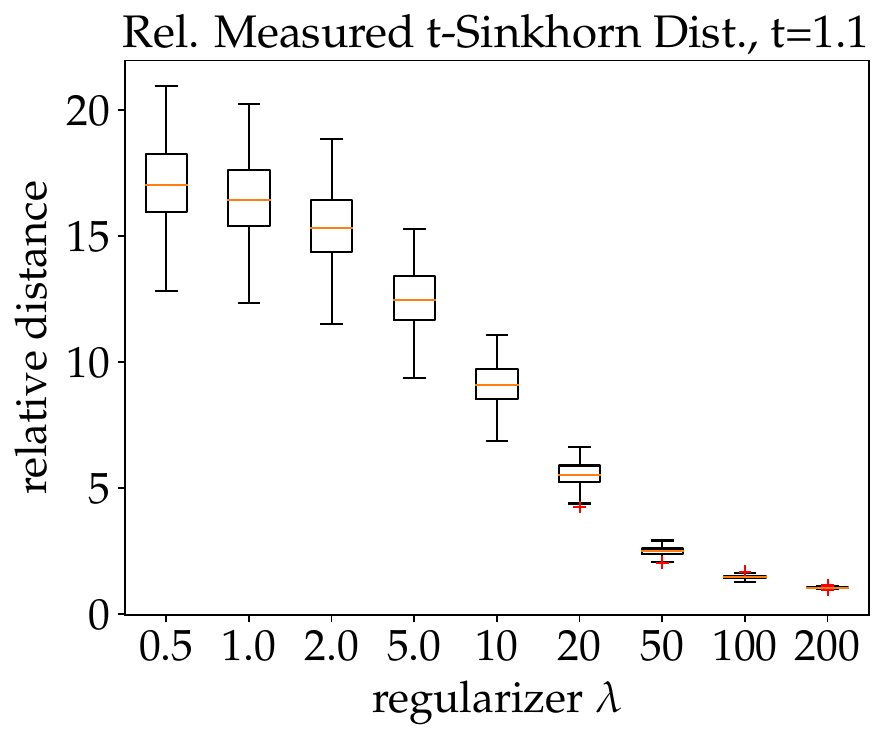}}
    \subfigure{\includegraphics[width=0.268\linewidth]{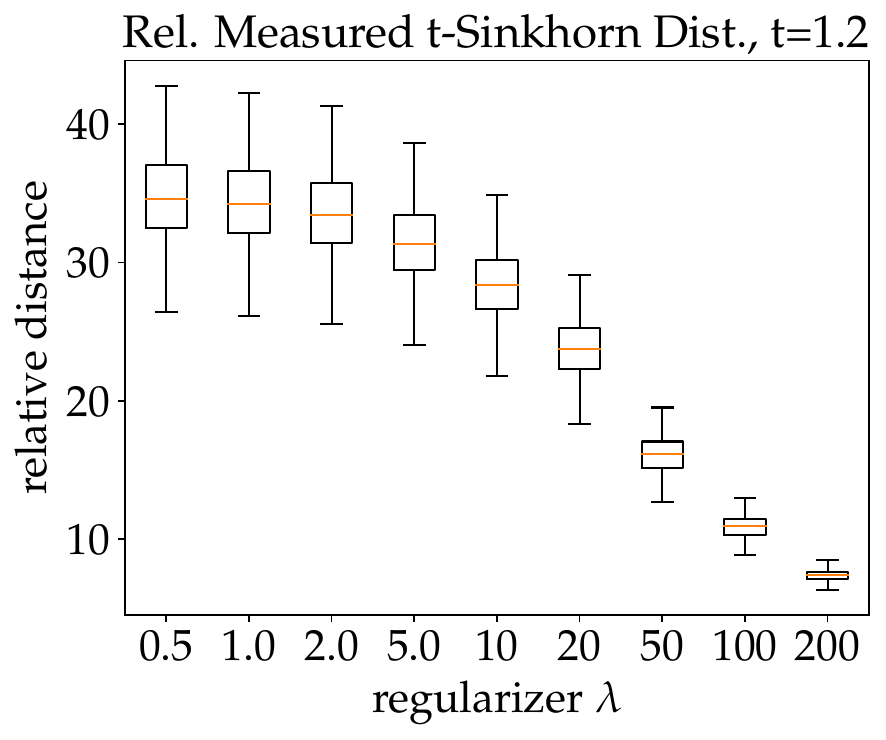}}
    \subfigure{\includegraphics[width=0.274\linewidth]{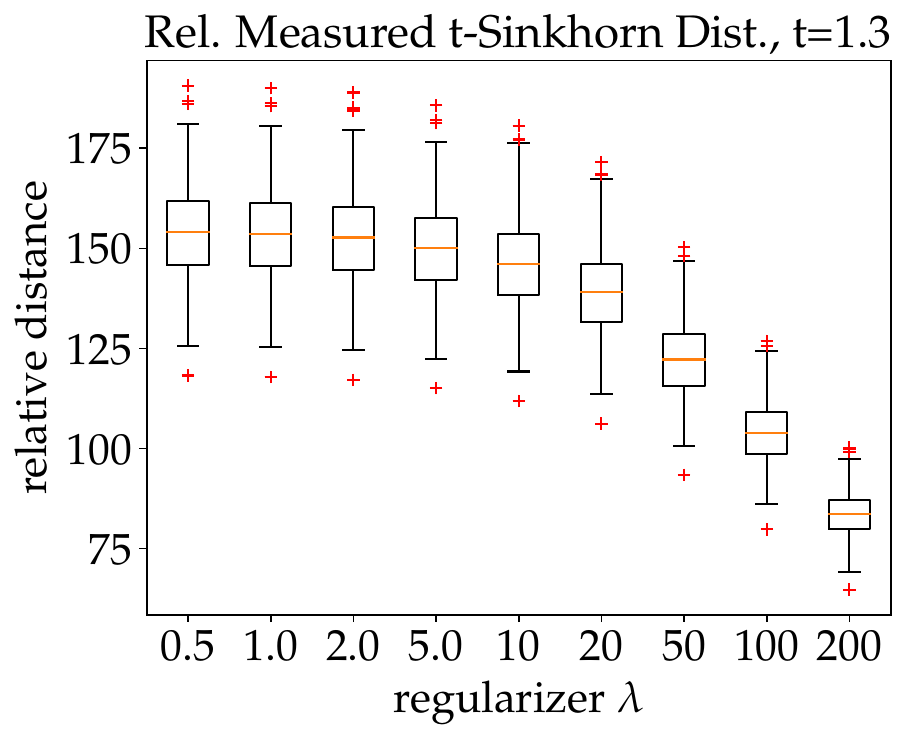}}
    \caption{The measured $t$-Sinkhorn distance relative to the Sinkhorn distance for different values of $t$. As $t\rightarrow 1$, the approximation error due to solving the unconstrained problem via alternating Bregman projections becomes smaller, and the measured $t$-Sinkhorn distance converges to the Sinkhorn distance for when $\lambda \rightarrow \infty$.}
    \label{fig:measued-dist}
    \end{center}
\end{figure*}

\section{Additional Experimental Results}

\subsection{Contraction Ratios}
We compute the contraction ratio (with respect to Hilbert's projective metric) of the seed matrix $\tilde{\Mat{K}}^{1/t^*}$ of Sinkhorn's iterations. We set $n=64$ and average the results over $100$ trials. For each case, we choose a range of $t$ such that the corresponding seed matrix is positive, namely, $t \in [1, 2)$ for the measured cost and $t < 1$ for the expected cost. We plot the contraction ratios of $\tilde{\Mat{K}}_m^{1/t^*}$ and $\tilde{\Mat{K}}_e^{1/t^*}$ in Figure~\ref{fig:contraction} (a) and (b), respectively, for different values of the regularizer $\lambda$. In both cases, the contraction ratio increases with $\lambda$, as we expect, resulting in slower convergence. The behavior w.r.t. $t$ is monotonic for the measured case; the convergence improves with larger $t$. The case of expected loss is more involved; the contraction ratio is higher for $t < 1$ compared to $t = 1$ (i.e., EOT) for small values of the regularizer. However, the contraction ratio increases at a slower rate for $t < 1$ when increasing $\lambda$, thus yielding improved convergence compared to EOT for larger values of $\lambda$. We verify the convergence results empirically in the next section.

\begin{figure}[t!]
\begin{center}
    \subfigure[]{\includegraphics[width=0.25\linewidth]{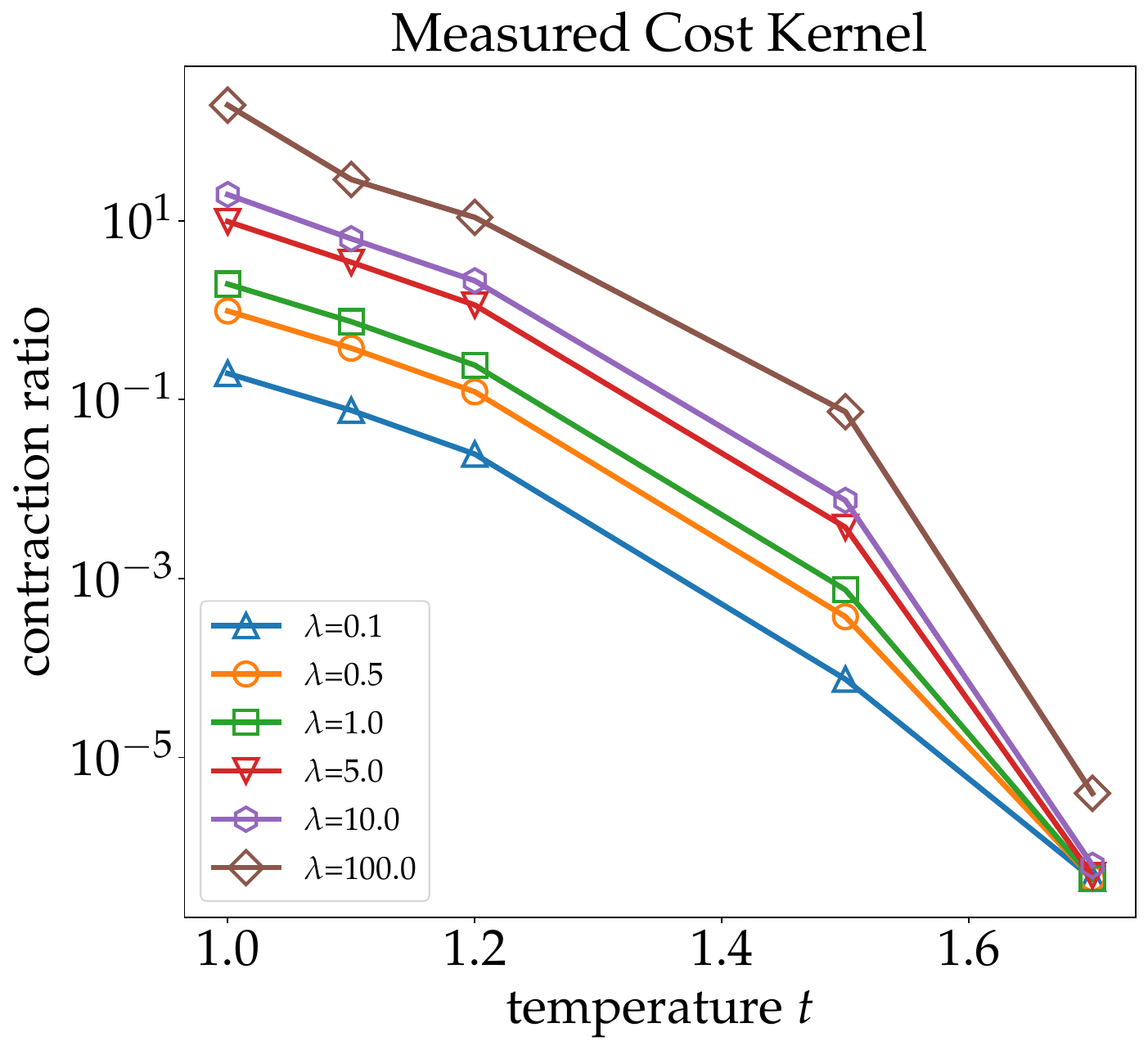}}
    \subfigure[]{\includegraphics[width=0.25\linewidth]{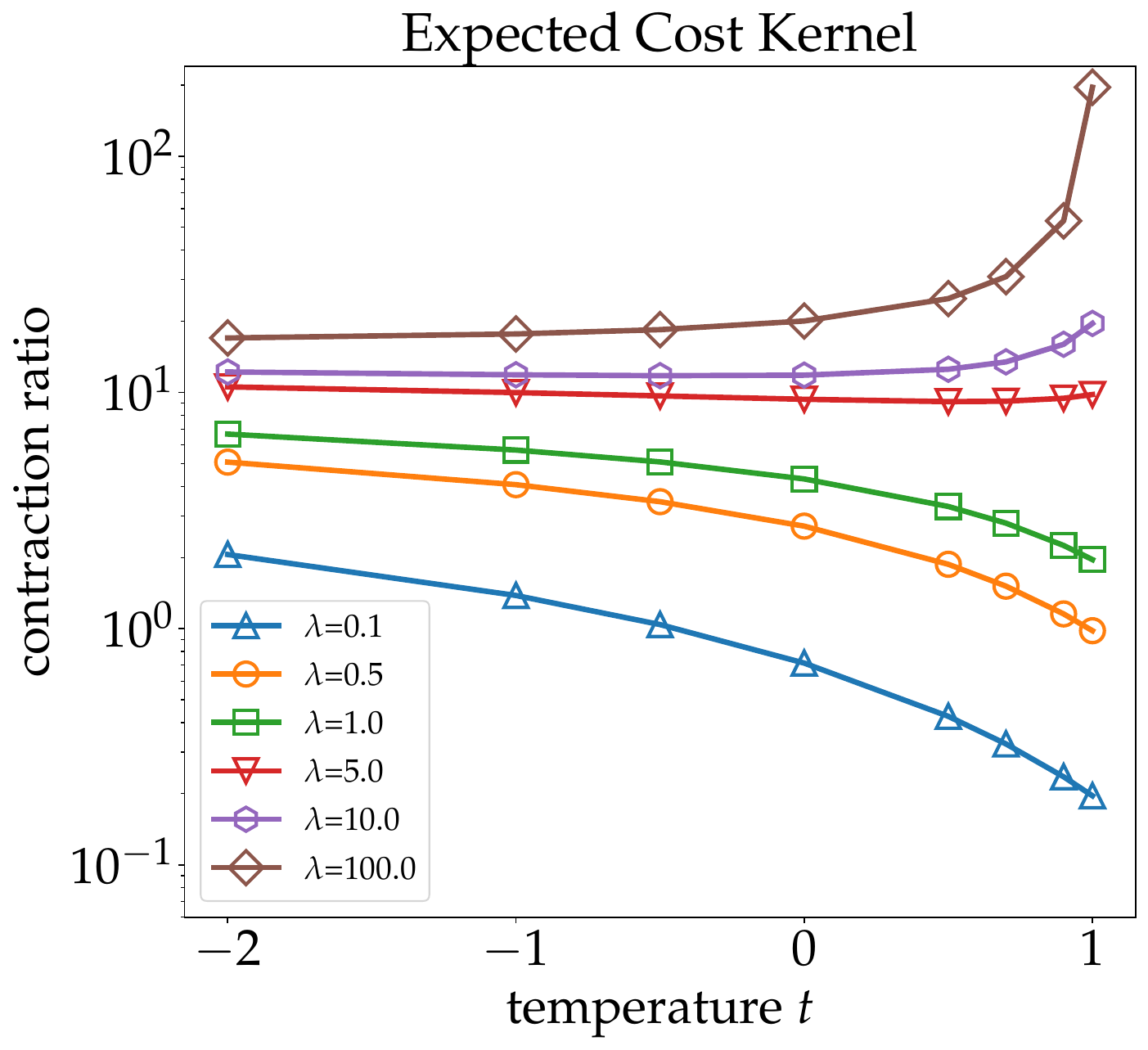}}
    \caption{Contraction ratios of (the $1/t^*$-th power of) (a) the measured $\tilde{\Mat{K}}_m^{1/t^*}$ and (b) the expected $\tilde{\Mat{K}}_e^{1/t^*}$ seed matrices. }
    \label{fig:contraction}
    \end{center}
\end{figure}

\subsection{Quality of Sinkhorn's Approximate Solution}
We analyze the error of the approximate solutions to~\eqref{eq:ot_sol_m_ex} and \eqref{eq:ot_sol_e_ex} via Sinkhorn's iterations. For this set of experiments, we set $n=64$ and average each result over $100$ trials. To calculate the true solutions, we use gradient descent to minimize the squared error and solve for $\{\nu_i\}$ and $\{\gamma_j\}$ such that $\tilde{\Mat{P}}$ is normalized along rows and columns. We calculate the expected cost of the true solution and the approximate solution via Sinkhorn and plot the relative error in Figure~\ref{fig:quality}. The quality of the approximate solution becomes relatively closer to the true solution for the case of expected cost formulation as $t$ becomes closer to one. However, the two solutions are closer in terms of quality for larger $t$ in the case of the measured cost. This can be attributed to the quality of our gradient descent solver for calculating the row and column (Lagrange multiplier) coefficients. For both cases, the approximation is better for smaller $\lambda$.

\begin{figure}[t!]
\begin{center}
    \subfigure[measured]{\includegraphics[width=0.25\linewidth]{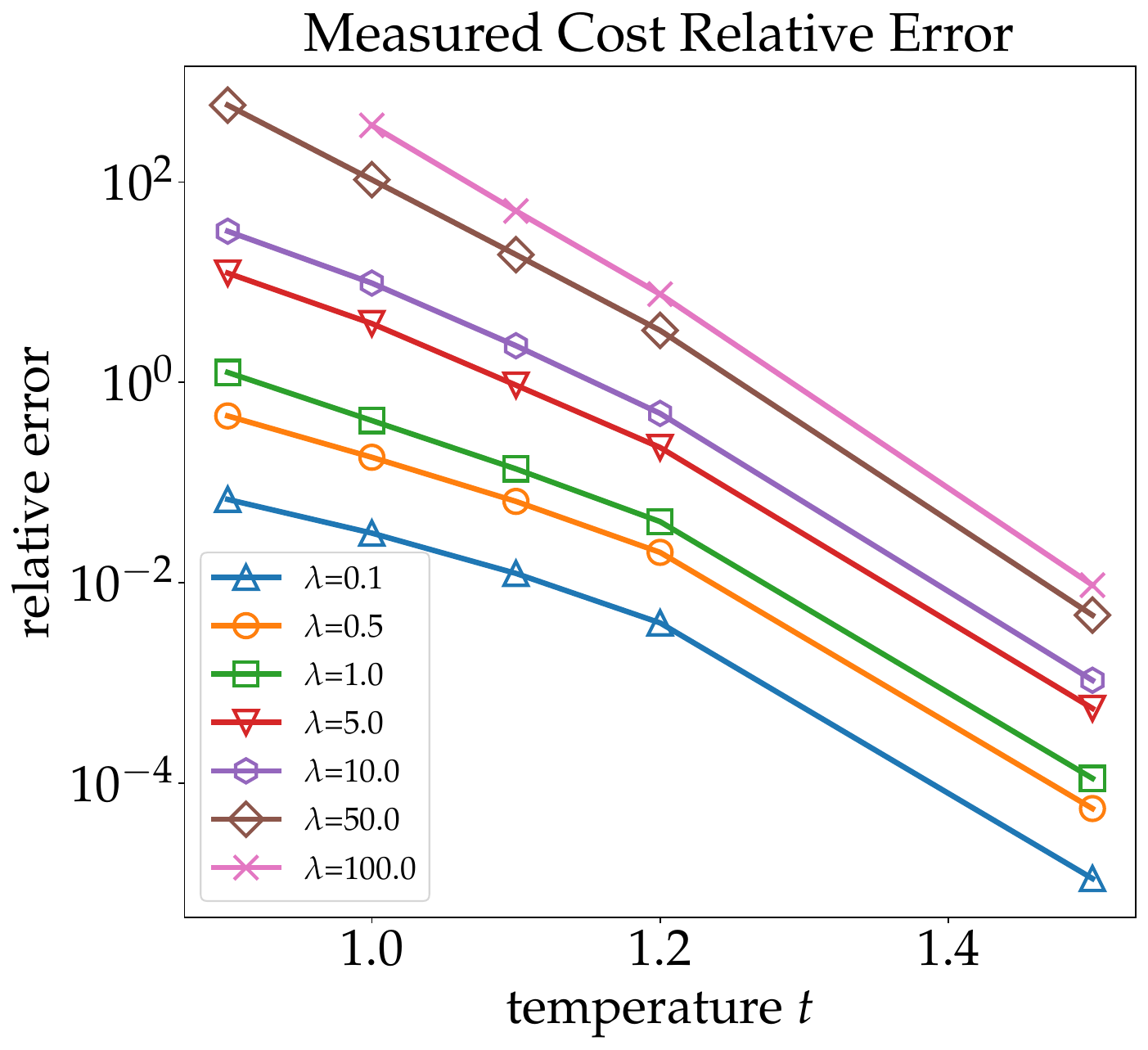}}
    \subfigure[expected]{\includegraphics[width=0.25\linewidth]{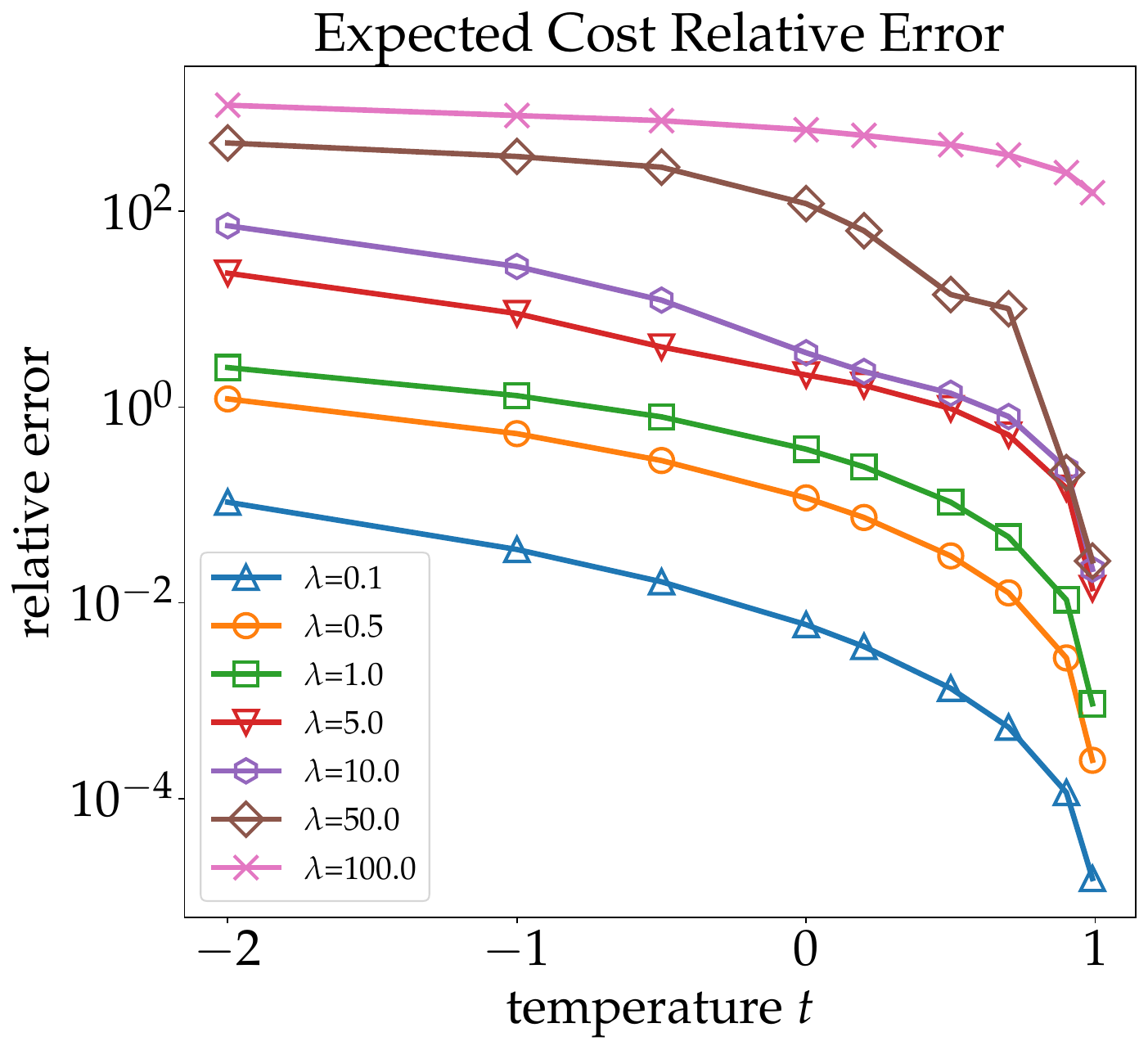}}
    \caption{Quality of the Sinkhorn's approximate solution: relative error of the Sinkhorn's solution to the solution obtained via gradient descent.}
    \label{fig:quality}
    \end{center}
\end{figure}

\begin{figure*}[t!]
\begin{center}
    \includegraphics[width=0.65\linewidth]{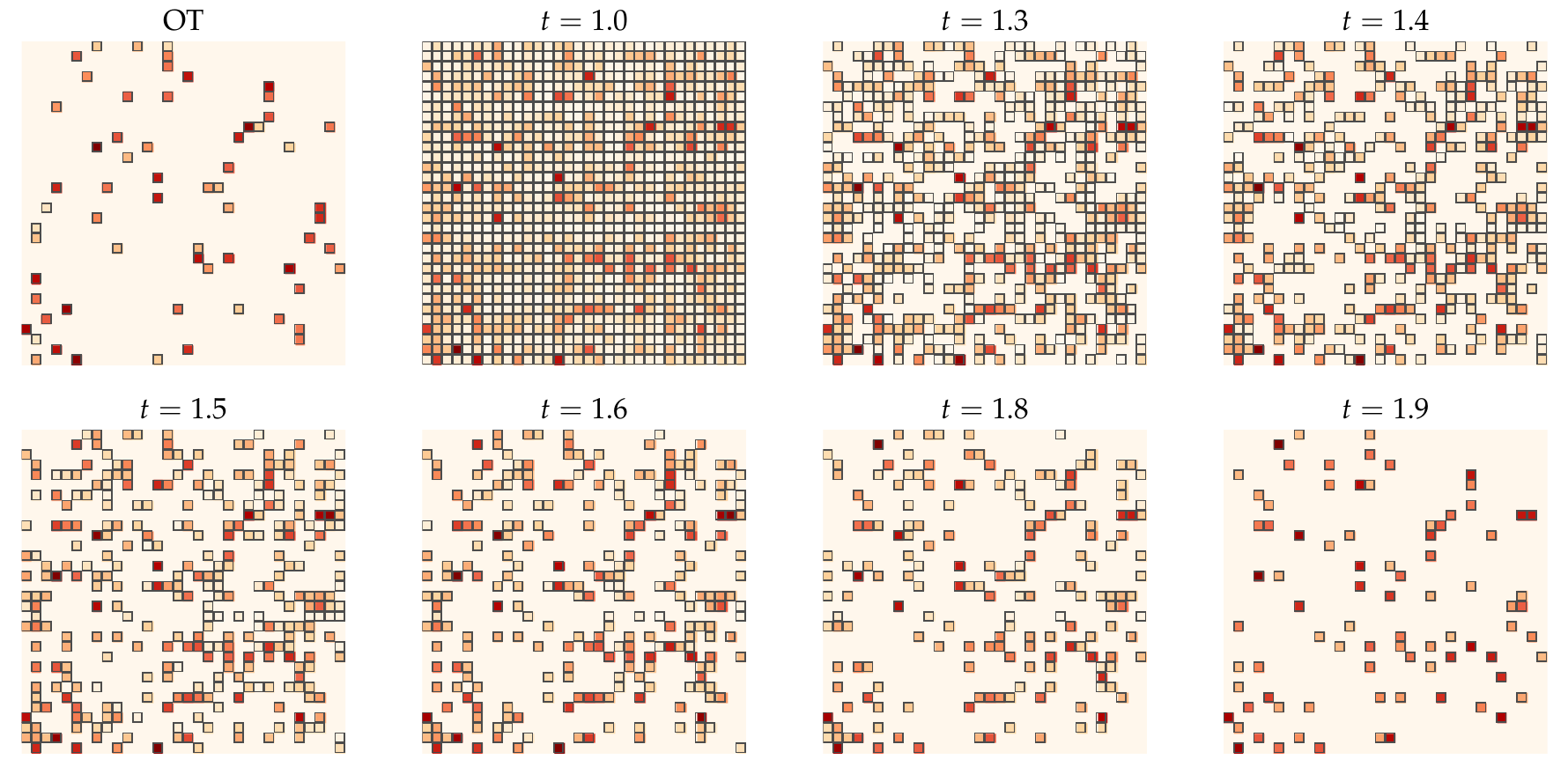}
    \caption{Transport plans induced by OT and the expected cost formulation for different values of $t$. The non-zero values are marked by a square. The EOT ($t=1$) induces a fully-dense plan. The sparsity of the solution increases by increasing $1 < t < 2$.}
    \label{fig:sparsity-s}
    \end{center}
\end{figure*}

\begin{figure*}[t!]
\begin{center}
    \includegraphics[width=0.65\linewidth]{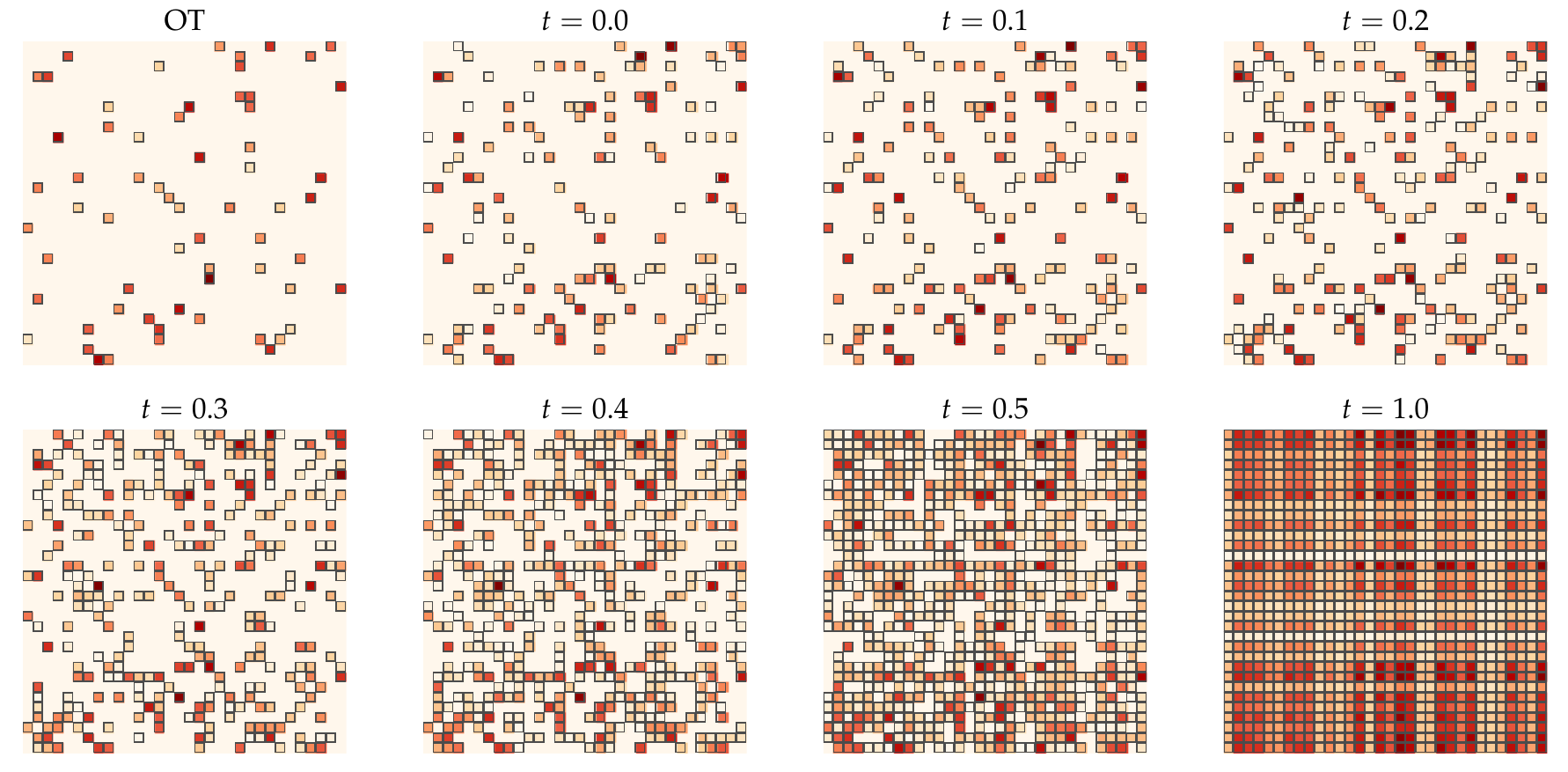}
    \caption{Transport plans induced by OT and the measured cost formulation for different values of $t$. The non-zero values are marked by a square. The EOT ($t=1$) induces a fully-dense plan. The sparsity of the solution increases by decreasing $t < 1$.}
    \label{fig:sparsity-m}
    \end{center}
\end{figure*}

\subsection{Sparse Solutions}
We provide expanded sparsity results along the lines of the ones presented in Section~\ref{subsec:sparsity} in Figure~\ref{fig:sparsity-s} and Figure~\ref{fig:sparsity-m}. For the measured cost sparsity results, we use $t\in [0, 1]$ and set $\lambda = 0.25$. In contrast to the expected cost, the sparsity of the solution for the case of the measured cost increases as we decrease the value of $t$.

\section{Remarks on Feasibility}

Let $\Mat{S}$ be the support indicator matrix of a non-negative matrix $\Mat{K}$. We say $\Mat{S}$ is \emph{indecomposable}\footnote{Commonly known as \emph{irreducible}~\citep{horn1990}.} if the rows and columns of $\Mat{S}$ cannot be permuted to form a block diagonal matrix. Without loss of generality, we assume that $\Mat{S}$ is indecomposable. \citet{brualdi1968convex} provides a necessary and sufficient condition for diagonal equivalence of a non-negative matrix $\Mat{K}$ to a matrix in $U_n(\ve{r}, \ve{c})$. 
\begin{theorem}{\citep{brualdi1968convex}}
\label{thm:brualdi}
If $\Mat{S}$ is an indecomposable zero pattern of a non-negative matrix $\Mat{K}$ and $\ve{r}$ and $\ve{c}$ are given positive probability vectors, then $\Mat{K}$ is diagonally equivalent to a matrix $\Mat{P} \in U_n(\ve{r}, \ve{c})$ if and only if the following holds: Whenever the rows and columns of $\Mat{S}$ can be permuted to the form 
\[
\Mat{S} = \begin{bmatrix}
\Mat{S}_1 & \Mat{0} \\
\Mat{S}_{21} & \Mat{S}_2
\end{bmatrix}\,,
\]
where $\Mat{S}_1$ is a non-vacuous 0, 1-matrix formed from rows $[I]$ and columns $[J]$ of $\Mat{S}$, and $\Mat{S}_2$ is non-vacuous, then
\[
c_1 + \ldots + c_J > r_1 + \ldots + r_I\, .
\]
\end{theorem}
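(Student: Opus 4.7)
I would prove the two directions of the biconditional separately. Necessity is a direct argument: diagonal equivalence preserves the zero pattern, and indecomposability upgrades a weak inequality to a strict one. Sufficiency decomposes into (i) exhibiting some feasible non-negative matrix in $U_n(\ve{r},\ve{c})$ whose support is contained in that of $\Mat{S}$, via a Hall/max-flow argument, and (ii) upgrading that feasibility to diagonal equivalence of $\Mat{K}$ itself via an I-projection (relative entropy minimization onto the transportation polytope with forbidden entries).

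\textbf{Necessity.} Suppose $\Mat{K}=\diag(\ve{\mu})\,\Mat{P}\,\diag(\ve{\gamma})$ for some $\Mat{P}\in U_n(\ve{r},\ve{c})$ with $\mu_i,\gamma_j>0$. Then $\Mat{P}$ has exactly the same zero pattern $\Mat{S}$. Given row set $I$ and column set $J$ with $S_{ij}=0$ for $i\in I,\,j\notin J$, rows $i\in I$ of $\Mat{P}$ are supported entirely on columns in $J$, so $\sum_{i\in I} r_i = \sum_{i\in I,\,j\in J} P_{ij}\le \sum_{j\in J} c_j$. Equality would force $P_{ij}=0$ for every $i\notin I,\,j\in J$, producing a second zero block complementary to the first; combined, the two blocks give a full block-diagonal decomposition of $\Mat{S}$, contradicting indecomposability.

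\textbf{Sufficiency, Step 1 (feasibility).} I build a bipartite transportation network on rows $[n]$ with supplies $r_i$ and columns $[n]$ with demands $c_j$, with an edge $(i,j)$ iff $S_{ij}=1$. By the integral/fractional max-flow-min-cut theorem, a non-negative $\Mat{P}^{\#}$ with support in $\Mat{S}$ and marginals $\ve{r},\ve{c}$ exists iff for every $I\subseteq[n]$, the neighborhood $N(I)\defeq\{j:\exists i\in I,\,S_{ij}=1\}$ satisfies $\sum_{j\in N(I)} c_j \ge \sum_{i\in I} r_i$. Setting $J\defeq N(I)$ we have $S_{ij}=0$ whenever $i\in I,\,j\notin J$, so the block decomposition in the hypothesis applies; when it is non-vacuous, the strict inequality $\sum_{j\in J} c_j>\sum_{i\in I} r_i$ from the hypothesis gives the needed cut condition, and when $J=[n]$ the condition is automatic since $\sum_j c_j=1=\sum_i r_i$.

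\textbf{Sufficiency, Step 2 (diagonal equivalence).} Let $C\defeq\{\Mat{P}\in U_n(\ve{r},\ve{c}) : P_{ij}=0 \text{ when } S_{ij}=0\}$, a closed convex set which is non-empty by Step~1. Consider the I-projection
\[
\Mat{P}^*\defeq \arg\min_{\Mat{P}\in C}\ \sum_{(i,j):\,S_{ij}=1} P_{ij}\log\frac{P_{ij}}{K_{ij}}.
\]
The objective is strictly convex and coercive on $C$, so the minimizer is unique. Writing the KKT conditions with Lagrange multipliers $\alpha_i,\beta_j$ for the row and column constraints yields $\log P^*_{ij}=\log K_{ij}-\alpha_i-\beta_j$ whenever $S_{ij}=1$, i.e.\ $P^*_{ij}=\mu_i K_{ij}\gamma_j$ with $\mu_i\defeq e^{-\alpha_i}$, $\gamma_j\defeq e^{-\beta_j}$. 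Since both sides vanish when $S_{ij}=0$, this exhibits $\Mat{K}$ as diagonally equivalent to $\Mat{P}^*\in U_n(\ve{r},\ve{c})$, as required.

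\textbf{Main obstacle.} The substantive technical point is ensuring that the I-projection attains its infimum in the \emph{relative interior} of $C$ on the support $\{(i,j):S_{ij}=1\}$, so that the Lagrange multipliers $\alpha_i,\beta_j$ are finite and positive diagonal scalings exist. The strict form of the hypothesis is precisely what prevents a boundary escape, by guaranteeing via Step~1 that $C$ contains a matrix assigning positive mass to every $(i,j)$ with $S_{ij}=1$ (obtained, e.g., by averaging maximum-support extreme points of $C$ — indecomposability ensures a positive convex combination covers the whole support). Making this support-positivity argument rigorous, and tying it back to finiteness of the KKT multipliers, is the crux of the proof.
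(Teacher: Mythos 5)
The paper itself offers no proof of this statement: it is quoted verbatim from \citet{brualdi1968convex} as background for the feasibility discussion, so there is no in-paper argument to compare yours against. Judged on its own, your reconstruction follows the standard modern route (Gale--Hoffman feasibility for the transportation polytope with forbidden cells, then an I-projection to get the diagonal scaling), and your necessity direction is complete and correct: the weak inequality $\sum_{i\in I} r_i \le \sum_{j\in J} c_j$ follows from the zero block, and equality would force the complementary zero block and hence a block-diagonal permutation of $\Mat{S}$, contradicting indecomposability.

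The genuine gap is exactly where you flag it, but your proposed repair does not close it. Averaging maximum-support extreme points of $C$ only yields a matrix supported on the \emph{union} of the supports of feasible matrices, and whether that union equals $\{(i,j): S_{ij}=1\}$ is precisely the claim at issue; ``indecomposability ensures a positive convex combination covers the whole support'' is not a proof, and indecomposability is in any case not the operative hypothesis here. What actually does the work is the \emph{strictness} of the inequalities, via a perturbation of your Step 1: fix a support cell $(i_0,j_0)$ with $S_{i_0j_0}=1$, pre-route mass $\epsilon$ on that edge (replace $r_{i_0}$ by $r_{i_0}-\epsilon$ and $c_{j_0}$ by $c_{j_0}-\epsilon$), and check the cut condition for the residual problem. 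For any $I$ the perturbed requirement is $\sum_{j\in N(I)} c_j - \sum_{i\in I} r_i \ge \epsilon\bigl(\ind[j_0\in N(I)] - \ind[i_0\in I]\bigr)$; the right-hand side is positive only when $j_0\in N(I)$ and $i_0\notin I$, in which case $I\neq [n]$ and either $N(I)=[n]$ (slack $1-\sum_{i\in I}r_i>0$ since $\ve{r}>\ve{0}$) or the block hypothesis gives strict slack. Taking $\epsilon$ below the minimum slack over the finitely many such $I$ yields a feasible plan with $P_{i_0j_0}\ge\epsilon$; averaging over all support cells then gives the full-support point you need, after which the $-\infty$ derivative of $x\log x$ at $0$ forces the I-projection to have full support and the KKT multipliers to be finite. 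With that lemma supplied, your Step 2 goes through.
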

Unfortunately, the result of~\citet{brualdi1968convex} does not provide an easy ``test'' for the feasibility of the sparse-case solution: checking indecomposablity is equivalent to solving the strongly connected component problem~\citep{horn1990} on the zero-patterns $\Mat{S}$, for which the best-known algorithm, namely, Tarjan's algorithm~\citep{tarjan1972depth}, has complexity $\mathcal{O}(\sum_{ij} S_{ij})$ (since each row and column must have at least one non-zero element; otherwise, the solution is infeasible). Finding permutations of rows and columns to obtain a block upper-triangular matrix is a significantly more challenging problem; \citet{fertin2015obtaining} show that even obtaining a triangular matrix by independent row-column permutations is NP-complete and provide an exponential-time algorithm for solving the problem. 
\end{document}